\documentclass{article}

 \usepackage[preprint]{neurips_2026}

\usepackage[utf8]{inputenc} 
\usepackage[T1]{fontenc}    
\usepackage{hyperref}       
\usepackage{url}            
\usepackage{booktabs}       
\usepackage{amsfonts}       
\usepackage{nicefrac}       
\usepackage{microtype}      
\usepackage{xcolor}         

\title{Online Allocation with Unknown Shared Supply}

\author{
Tzeh Yuan Neoh\thanks{Equal contribution}\\
Harvard University\\
\texttt{tzehyuan\_neoh@g.harvard.edu}
\And
Davin Choo\footnotemark[1]\\
Harvard University\\
\texttt{davinchoo@seas.harvard.edu}
\And
Mengchu Yue\\
Harvard University\\
\texttt{mengchuyue@g.harvard.edu}
\And
Milind Tambe\\
Harvard University\\
\texttt{tambe@seas.harvard.edu}
}

\usepackage{algorithm}
\usepackage{algorithmicx}
\usepackage{algpseudocodex}
\algrenewcommand\algorithmicrequire{\textbf{Input:}}
\algrenewcommand\algorithmicensure{\textbf{Output:}}

\usepackage{multirow}
\usepackage{mathtools}
\usepackage{subcaption}
\usepackage{enumerate}
\usepackage{bbm}
\usepackage{amsmath}
\usepackage{amssymb}
\usepackage{amsthm}
\usepackage{pifont}
\usepackage{booktabs}
\usepackage{xcolor}
\definecolor{darkgreen}{rgb}{0,0.5,0}
\usepackage{hyperref}
\hypersetup{
    unicode=false,          
    colorlinks=true,        
    linkcolor=red,          
    citecolor=darkgreen,    
    filecolor=magenta,      
    urlcolor=blue           
}
\usepackage[capitalize, nameinlink]{cleveref}
\makeatletter
\AddToHook{cmd/appendix/before}{\def\cref@section@alias{appendix}}
\makeatother

\usepackage{thm-restate}
\theoremstyle{plain}
\newtheorem{theorem}{Theorem}

\newtheorem{lemma}[theorem]{Lemma}

\theoremstyle{definition}
\newtheorem{definition}[theorem]{Definition}

\theoremstyle{remark}

\usepackage{pgfplots}
\usepgfplotslibrary{fillbetween}
\usepackage{tikz}
\usetikzlibrary{calc, graphs, graphs.standard, shapes, arrows, arrows.meta, positioning, decorations.pathreplacing, decorations.markings, decorations.pathmorphing, fit, matrix, patterns, shapes.misc, tikzmark}

\newcommand{\OSSA}{\texttt{OSSA}}
\newcommand{\GPA}{\texttt{GPA}}
\newcommand{\LAGPA}{\texttt{LA-GPA}}
\newcommand{\cost}{\mathrm{cost}}
\newcommand{\OPT}{\textsc{Opt}}
\newcommand{\ALG}{\textsc{Alg}}

\newcommand{\eps}{\varepsilon}
\newcommand{\E}{\mathbb{E}}
\newcommand{\N}{\mathbb{N}}

\begin{document}

\maketitle

\begin{abstract}
Many real-world resource allocation systems, such as humanitarian logistics and vaccine distribution, must preposition limited supply across multiple locations \emph{before} demand is realized while stockouts incur irreversible service losses.
To study this, we introduce the \emph{Online Shared Supply Allocation} (\OSSA{}) problem, a stateful online model in which a central hub allocates a finite, unknown supply to multiple sites facing sequential demand under fixed-charge transportation costs and lost-sales penalties.
Unlike classical make-to-stock or make-to-order inventory models, \OSSA{} precludes backlogging and replenishment only hedges against \emph{future} demand.
To tackle \OSSA{}, we propose a deterministic threshold-proportional policy \GPA{} and prove that it achieves a $4/3$-approximation to the offline optimum up to an additive term independent of the total supply.
We complement this with matching lower bounds showing that the $4/3$ ratio is tight and that the additive-error dependence is unavoidable, even for randomized algorithms that know the total supply upfront.
Finally, we develop a learning-augmented extension to \GPA{} that principally incorporates imperfect forecasts (e.g., from human experts or ML models) commonly available in practice, enabling us to exploit high-quality advice while being robust against arbitrary bad ones.
Synthetic and real-world experiments show that \GPA{} outperforms natural baselines with global supply is scarce.
\end{abstract}

\section{Introduction}

Many real-world systems require allocating a limited stock of resources across multiple locations \emph{before} demand is realized.
Examples arise in immunization supply chains, where vaccines must be prepositioned at service-delivery sites to ensure coverage \cite{rao2017immunization}; in humanitarian logistics, where relief supplies are staged at distribution centers in anticipation of uncertain demand \cite{balcik2008facility}; and in maintenance systems, where spare parts are stocked locally to respond to stochastic failures \cite{sherbrooke2004optimal}.
Furthermore, there may be upstream supply may be interrupted unexpectedly; for example, a humanitarian hub may have an unpredictable influx of donations or face funding cuts \cite{tezuka2026impact,betterworldcampaign}.
In these settings, insufficient local inventory results in immediate service loss --- such as missed vaccinations or food aid stockouts --- which is naturally modeled using lost-sales formulations \cite{zipkin2008old}, where unmet demand incurs an irreversible penalty rather than being backlogged.

We formalize these challenges through the \emph{Online Shared Supply Allocation (\OSSA{})} problem (\cref{def:ossa}), a stylized online model in which a central hub allocates supply over time to multiple sites facing sequential demand.
Each site maintains local inventory, and demand are either satisfied from existing stock on-hand stock upon arrival, or immediately incur an irreversible penalty.
There is no backlogging and stock replenishment (incurring fixed, capacity-constrained transportation costs) affects only \emph{future} demand.
To model supply uncertainty in the hub, the online algorithm only learns that the global supply $s$ has been exhausted when it receives less resupply than requested.\footnote{From an algorithmic perspective, knowing total supply $s$ \emph{alone} upfront does not help: an unknown amount of demand may be ``prepended'' to the original demand sequence, reducing the problem to unknown $s$ setting. However, if both $s$ and sitewise total demand is known, then one can efficiently compute the optimal solution; see \cref{sec:online}.}

\OSSA{} is characterized by three interacting features that couple decisions across both space and time.
First, a \emph{shared resource} induces multi-site coupling, as allocating supply to one location reduces availability elsewhere.
Second, the system exhibits \emph{stateful inventory dynamics}, where allocations act as prepositioned stock that persists over time to hedge against future demand.
Third, the planner operates under joint uncertainty over both demand arrivals and the total resource budget, the latter of which is revealed only upon exhaustion.
Consequently, \OSSA{} is fundamentally a problem of \emph{robust rationing}: the planner must balance immediate sitewise penalties due to unmet demand against the risk of prematurely exhausting a shared, unknown global resource.

\textbf{Relation to prior models.}
\OSSA{} sits at the intersection of multi-echelon inventory theory and online resource allocation, but differs fundamentally from both.
Unlike classical inventory models, such as the one-warehouse multi-retailer problem \cite{arkin1989computational,roundy198598}, \OSSA{} operates in a lost-sales regime where demand must be served immediately and cannot be backlogged.
Moreover, supply is finite, non-replenishable, and \emph{unknown}, necessitating robust rationing under adversarial demand.
Meanwhile, in contrast to online allocation models such as Adwords \cite{mehta2007adwords,devanur2009adwords,mehta2010online}, \OSSA{} exhibits complex temporal coupling between current rationing and future lost-sales penalties over time.
Moreover, the presence of fixed-charge transportation costs induces a non-convex objective, precluding the use of standard convex or primal-dual techniques
See \cref{sec:related} for further comparison.

\textbf{Our contributions.}
We provide a comprehensive theoretical study of the \OSSA{} problem, establishing tight guarantees for online allocation under stateful inventory and shared, uncertain supply.

\begin{itemize}
    \item \textbf{Optimal online algorithm for \OSSA{}:}
    We propose a deterministic online allocation policy based on threshold-proportional balancing called \GPA{}.
    We prove that \GPA{} achieves a $4/3$-approximation to the optimal offline solution, up to an additive term that depends only on local sitewise parameters and is independent of the total global supply $s$.

    \item \textbf{Matching lower bounds:} 
    We establish the optimality of our algorithmic result through two hardness theorems using Yao's minimax principle.
    More precisely, we prove that:
    (i) the $4/3$ ratio is the best possible for any randomized algorithm, and improving this ratio necessitates an additive error that scales with global supply $s$; and
    (ii) any asymptotic improvement in the additive term results in a superconstant approximation ratio.
    Notably, these bounds hold even if the global supply $s$ is known to the algorithm in advance.

    \item \textbf{Learning-Augmented \OSSA{}:}
    To bridge the gap between adversarial theory and practical forecasting, we extend our framework to the learning-augmented setting \cite{lykouris2021competitive,mitzenmacher2022algorithms}, where decision makers may exploit imperfect predictions on supply and demand forecasts.
    With respect to a distrust hyperparameter $\lambda$, we prove that this algorithm maintains \emph{consistency} (matching the offline optimum under perfect predictions for low $\lambda$) and \emph{smoothness} (performance degrades gracefully with prediction error), while ensuring \emph{robustness} (worst-case guarantees even under adversarial predictions for high $\lambda$).

    \item \textbf{Empirical Evaluation:}
    We evaluate our algorithms on synthetic and real-world inspired datasets.
    Our experiments demonstrate that \GPA{} significantly outperforms standard inventory heuristics and that the learning-augmented extension can effectively leverage forecasts to reduce total costs while remaining resilient to high prediction error.
\end{itemize}

Notably, our algorithms are deterministic and assume no prior knowledge of $s$ while our hardness results apply even for randomized that know $s$ in advance.
Full proofs are deferred to the appendix.

\section{Model}
\label{sec:model}

\textbf{Notation.}
For any $x \in \mathbb{R}$, let $(x)_+ = \max\{0, x\}$, and let $[n] = \{1, \dots, n\}$.

\begin{definition}[The Online Shared Supply Allocation (\OSSA{}) Problem]
\label{def:ossa}
An instance of the \OSSA{} problem is defined by a central hub with a fixed, unknown global supply $s \in \mathbb{N}$, and a set of $n$ sites. 
The fixed global supply $s$ is \emph{unknown} to the online algorithm and is revealed only upon exhaustion, and each unit of unmet demand incurs penalty $p \geq 0$.
Each site $i \in [n]$ is characterized by a tuple $(w_i, c_i, b_i)$.
Replenishment is governed by a fixed-charge transportation cost: each shipment to site $i$ has a capacity of $c_i \geq 1$ units incurs a cost $w_i \geq 0$, regardless of utilization.
Finally, $b_i \in \mathbb{N}$ denotes a known upper bound on the demand arrival at site $i$ in any single time step.
The goal is to choose an online allocation sequence $\bar{\ell} = \{ \ell_i^{t} \}$ to minimize the total cost $\cost(\bar{\ell})$ defined below:
\begin{equation}
\label{eq:main-objective}
\cost(\bar{\ell})
= \sum_{i \in [n]} \sum_{t \geq 1} \left( w_i \left\lceil \frac{\ell_i^t}{c_i} \right\rceil + p (d_i^t - k_i^t)_+ \right)
\end{equation}
where $d_i^t$ and $k_i^t$ are the demand and available stock at site $i \in [n]$ for time step $t \geq 1$.
\end{definition}

\textbf{Online order of events.}
At each discrete time step $t \geq 1$, the following sequence occurs at site $i$:
\begin{enumerate}
    \item \textbf{Demand realization:}
    Demand $d_i^t \in \mathbb{N}$ arrives and is served using current stock $k_i^t$.
    A penalty $p(d_i^t - k_i^t)_+$ is incurred for any deficit, and the remaining inventory is $r_i^t = (k_i^t - d_i^t)_+$.
    
    \item \textbf{Replenishment:}
    After observing $\{d_i^t\}_{i \in [n]}$, the online algorithm \emph{tries} to resupply $\tilde{\ell}_i^t$ for each site and receives $\ell_i^t \leq \tilde{\ell}_i^t$ from the hub.
    If $\ell_i^t < \tilde{\ell}_i^t$, we learn that the supply is exhausted.
    
    \item \textbf{Inventory update:}
    Local stock for the next step becomes $k_i^{t+1} = r_i^t + \ell_i^t$.
\end{enumerate}

We evaluate an online algorithm $\ALG$ against an optimal offline benchmark $\OPT$ that observes $s$ and all demand $\{d_i^t\}_{i \in [n], t \geq 1}$ a priori.
We seek guarantees of the form $\cost(\ALG) \leq \alpha \cdot \cost(\OPT) + \beta$, where $\alpha \geq 1$ is the competitive ratio and $\beta$ is an additive error independent of the total supply $s$.

\textbf{Simplifying model assumptions.}
Fix a site $i \in [n]$.
Without loss of generality, we assume $k_i^1 = b_i$ and $w_i \leq p c_i$.
The former ensures the algorithm is not penalized for first-step demand before it can react (setting $k_i^1 = 0$ merely shifts the additive constant $\beta$ by at most $\sum_{i=1}^n p b_i$), while the latter ensures that replenishment does not worsen the objective.
By relabeling, we may also assume that sites are ordered by their fractional transportation cost such that $\frac{w_1}{c_1} \leq \frac{w_2}{c_2} \leq \dots \leq \frac{w_n}{c_n}$.

\section{Related Work}
\label{sec:related}

The \OSSA{} problem is related to several classical lines of work, but differs from each in ways that are central to its algorithmic difficulty.
At a high level, \OSSA{} combines (i) multi-site inventory coordination, (ii) lost-sales dynamics, (iii) online uncertainty about future demand, and (iv) a shared finite stock whose total amount is unknown in advance.

\paragraph{Multi-echelon inventory and joint replenishment.}

\OSSA{} is structurally related to classical multi-echelon inventory models, particularly the one-warehouse multi-retailer (\texttt{OWMR}) problem \cite{roundy198598,arkin1989computational} and its special case, the joint replenishment problem (\texttt{JRP}) \cite{federgruen1992joint,levi2008constant}.
These models typically consider a central supplier serving multiple sites, aiming to minimize shared ordering and inventory holding costs.
A large body of work in this area assumes either deterministic demand or stochastic demand with known distributions \cite{khouja2008review,peng2022review}, enabling global planning strategies over a fixed horizon under the assumption of replenishable upstream supply.
Within this literature, the \emph{make-to-stock} setting assumes that items are ordered in advance and held in inventory until needed, incurring holding costs until consumption, while the \emph{make-to-order} variant satisfies orders \emph{after} demand arrival, incurring delay or backlog penalties.
While these formulations are equivalent in offline settings \cite{levi2004primal}, they represent distinct operational paradigms under uncertainty.
More recently, competitive algorithms have been developed for online variants of these problems to handle unknown future demand \cite{buchbinder2013online,bienkowski2014better,moseley2025putting,azar2026online}.
In these models, demand arrives over time and the decision-maker determines when to replenish, balancing transportation and delay costs.

Through this lens, \OSSA{} can be viewed as an online, make-to-stock variant of \texttt{OWMR}, but with three key differences.
First, in \OSSA{}, demand must be satisfied immediately from on-hand inventory, and any shortfall incurs an irreversible lost-sales penalty $p$; in contrast, \texttt{OWMR} and \texttt{JRP} allow replenishment after demand realization, with costs captured through holding or backlog penalties.
Second, classical \texttt{OWMR} and \texttt{JRP} models assume access to an effectively infinite upstream supply, whereas \OSSA{} operates under a finite and unknown global supply $s$, necessitating careful rationing across sites.
Third, \OSSA{} is formulated in a fully online, adversarial setting without distributional assumptions on demand.
While such a framing may bring \OSSA{} closer in spirit to online replenishment models, the ordering of events --- where replenishment decisions only affect future demand and \emph{cannot} recover past losses --- introduces a fundamentally different set of challenges.

\paragraph{Online algorithms and resource allocation.}

\OSSA{} shares mathematical foundations with the theory of online algorithms and resource allocation \cite{borodin2005online}.
In this framework, an algorithm must make irrevocable decisions under uncertainty about future inputs, and its performance is measured relative to an optimal offline benchmark.
Canonical problems include ski rental, caching/paging \cite{sleator1985amortized}, the $k$-server problem \cite{manasse1988competitive}, online bipartite matching \cite{karp1990optimal}, and budgeted allocation models such as Adwords \cite{mehta2007adwords,devanur2009adwords,mehta2010online}.

Despite these connections, the structure of \OSSA{} precludes the direct application of standard techniques from this literature.
First, the fixed-charge transportation costs in the \OSSA{} objective induce a non-convex, step-wise objective with no meaningful marginal values.
Second, the total supply $s$ is \emph{unknown} until it is exhausted.
These factors jointly limit the effectiveness of primal-dual based techniques.
Furthermore, \OSSA{} is fundamentally stateful: allocations build persistent inventory buffers that influence future costs through lost-sales penalties.
While stateful online problems such as caching have been extensively studied, they typically utilize reusable resources (e.g., a fixed-capacity cache).
In \OSSA{}, every allocation simultaneously updates the system state and irreversibly depletes a shared resource under an unknown horizon.
This tight coupling between state evolution and terminal resource availability represents a significant gap in the current theory of online algorithms.

\paragraph{Learning-augmented algorithms.}

Many real-world supply chain and logistics systems have access to demand forecasts that are informative but imperfect.
Learning-augmented algorithms, also known as algorithms with predictions or imperfect advice, provide a principled framework for incorporating such predictions while retaining worst-case guarantees.

A central objective in this literature is to achieve three desirable properties: \emph{consistency} (performance improves when predictions are accurate), \emph{robustness} (performance remains near-optimal under adversarial predictions), and ideally \emph{smoothness} (performance degrades gracefully as prediction quality worsens).
Since the seminal work of \cite{lykouris2021competitive}, there has been substantial progress in designing learning-augmented algorithms for a wide range of online problems.
For instance, variants have been studied for ski-rental \cite{gollapudi2019online,wang2020online,angelopoulos2020online,shin2023improved} and online selection and matching problems \cite{antoniadis2020secretary,dutting2021secretaries,choo2024online,choo2025learning}.
This framework is particularly well-suited for \OSSA{}, where noisy predictions of future demand and total supply can help improve algorithmic performance.
For an overview of this growing area, we refer the reader to the survey by \cite{mitzenmacher2022algorithms}.\footnote{See also \url{https://algorithms-with-predictions.github.io/}.}

\section{Proportional Online Allocation}
\label{sec:online}

In this section, we study a threshold-proportional online allocation policy for \OSSA{}. 
The policy is parameterized by a vector $\bar{\gamma} = (\gamma_i)_{i \in [n]}$, where $\gamma_i \in [0,1]$ controls how aggressively site $i$ is replenished relative to its cumulative observed demand.

To begin, observe that we can decompose the objective (\cref{eq:main-objective}) into sitewise costs $\cost_i(\bar{\ell})$, which may be decomposed into $\cost_i(\bar{\ell}) = \mathrm{transport}_i(\bar{\ell}) + \mathrm{penalty}_i(\bar{\ell})$ as follows:
\[
\mathrm{transport}_i(\bar{\ell})
= \sum_{t \geq 1} w_i \left\lceil \frac{\ell_i^t}{c_i} \right\rceil\\
\quad
\text{and}
\quad
\mathrm{penalty}_i(\bar{\ell})
= \sum_{t \geq 1} p (d_i^t - k_i^t)_+
\]
The former scales with the number of discrete shipments while the latter represents the penalty due to unmet demand by the local inventory.
We denote cumulative demand and replenishment as $D_i^t = \sum_{t'=1}^t d_i^{t'}$ and $L_i^t = \sum_{t'=1}^t \ell_i^{t'}$, with $L_i^0 = D_i^0 = 0$.
Then, at the end of the demand arrival process, let $D_i$, $L_i$, and $s_i^{\mathrm{end}}(\bar{\ell})$ be the total demand, total supply, and leftover inventory respectively.

\textbf{Characterization of the offline optimum.\footnote{Technically, this description corresponds to a \emph{strengthening} of $\OPT$, in which the transport cost is taken to be $w_i (\ell_i^1)^\star / c_i$, rather than $w_i \lceil (\ell_i^1)^\star / c_i \rceil$. The two coincide only when $c_i = 1$. We do this to simplify the intuition: if the ceiling operator is enforced, $\OPT$ may leave a residual portion of demand at some sites unfulfilled (due to indivisibility), which obscures the clean greedy structure described here. In our analysis, we use this strengthened formulation to obtain a lower bound on $\cost(\OPT)$.}}
The offline optimum $\OPT$ observes $s$ and the entire demand sequence $\{d_i^t\}_{i \in [n], t \geq 1}$ in advance.
Since \OSSA{} has no holding costs, $\OPT$ can deliver all supply at time $t=1$ without loss of optimality, i.e., there exists an optimal solution with $(\ell_i^t)^\star = 0$ for all $t \geq 2$.
To characterize $\OPT$, define the \emph{net demand} at each site $N_i = (D_i - k_i^1)_+$, which represents the portion of demand \emph{not} covered by the initial stock.
The problem then reduces to allocating a total budget $s$ across sites to cover these net demands.
Under the ordering $\frac{w_1}{c_1} \leq \frac{w_2}{c_2} \leq \cdots \leq \frac{w_n}{c_n}$, $\OPT$ allocates supply greedily in increasing order of index.
Define the \emph{pivotal index} $i^\star = \min \{ i \in [n] : \sum_{j=1}^i N_j \geq s \}$ and the corresponding \emph{pivotal value} $\zeta = (s - \sum_{j=1}^{i^\star-1} N_j) / N_{i^\star} \in (0,1]$.
If $s \geq \sum_{j=1}^n N_j$, we define $i^\star = n$ and $\zeta = 1$.
Let us now write the optimal allocation and allocation fractions respectively as
\[
L_i^\star =
(\ell_i^1)^\star =
\begin{cases}
N_i & i < i^\star\\
\zeta \cdot N_{i^\star} & i = i^\star\\
0 & i > i^\star
\end{cases}
\qquad
\text{and}
\qquad
\gamma_i^\star =
\begin{cases}
1 & i < i^\star\\
\zeta, & i = i^\star\\
0 & i > i^\star
\end{cases}
\]
Then, we see that the vector $\bar{\gamma}^\star = (\gamma_1^\star, \ldots, \gamma_n^\star)$ is monotone non-increasing, consisting of a prefix of ones, a suffix of zeros, and at most one intermediate value $\zeta \in (0,1]$.
As illustrated in \cref{fig:gamma-assignment}, deviations from this structure translate into a tradeoff: under-allocation increases penalties due to unmet demand, while over-allocation increases transportation costs.

\begin{figure}[htb]
    \centering
    \resizebox{0.7\linewidth}{!}{
    \begin{tikzpicture}
\begin{axis}[
    clip=false,
    y post scale = 0.7,
    xmin=0, xmax=6.5, ymin=0, ymax=1.1,
    xtick={1, 2, 3, 4, 5, 6},
    xticklabels={$1$, $2$, $3$, $4$, $5$, $6$},
    ytick={0, 0.3, 1}, yticklabels={$0$, $\zeta$, $1$},
    xlabel={\begin{tabular}{c}Site index\\ (in increasing $\frac{w_i}{c_i}$)\end{tabular}},
    ylabel={$\gamma_i$},
    every axis x label/.style={
        at={(ticklabel* cs:1)},
        anchor=west,
    },
    every axis y label/.style={
        at={(ticklabel* cs:1)},
        anchor=south,
    },
    axis lines=left,
    xmajorgrids=true, grid style=dashed,
    scatter/classes={
        opt={mark=x, black, mark size=3pt},
        alg={mark=o, blue, mark size=3pt}
    },
    legend style={
        nodes={anchor=west},
        at={(1.2,1)},
    }
]
\addplot[scatter, only marks, ultra thick, scatter src=explicit symbolic]
coordinates {
(1, 1) [opt]
(2, 1) [opt]
(3, 1) [opt]
(4, 0.3) [opt]
(5, 0) [opt]
(6, 0) [opt]
(1, 0.9) [alg]
(2, 0.8) [alg]
(3, 0.7) [alg]
(4, 0.5) [alg]
(5, 0.3) [alg]
(6, 0.1) [alg]
};
\draw[dashed] (axis cs:0, 0.3) -- (axis cs:4, 0.3);
\draw[{Stealth}-{Stealth}] (axis cs:3.2, 1) -- node[right] {Incur $\Delta_i(\mathrm{penalty})$} (axis cs:3.2, 0.7);
\draw[{Stealth}-{Stealth}] (axis cs:4.2, 0.5) -- node[right] {Incur $\Delta_i(\mathrm{transport})$} (axis cs:4.2, 0.3);
\legend{$\OPT$, $\ALG$}
\end{axis}
\end{tikzpicture}
    }
    \caption{
    Consider an example with $n = 6$ sites.
    Suppose $\OPT$'s supply allocation proportion $\bar{\gamma}^\star$ (black crosses) has pivotal site $i^\star = 4$ with $\gamma_4^\star = \zeta$, and an online algorithm $\ALG$ allocates proportional allocation $\bar{\gamma}$ (blue circles).
    For each site $i$, $\ALG$ incurs additional sitewise penalty for unmet demand when $\gamma_i^\star > \gamma_i$ and additional sitewise transport cost when $\gamma_i^\star < \gamma_i$.
    }
    \label{fig:gamma-assignment}
\end{figure}

\textbf{A threshold-proportional online allocation policy.}
Since the online algorithm lacks knowledge of $s$ and $\{d_i^t\}_{i \in [n], t \geq 1}$, it cannot identify the pivotal index $i^\star$ nor value $\zeta$ in advance.
Instead, we seek to emulate $\OPT$ via a threshold-proportional online allocation policy called $\bar{\gamma}$-Proportional Allocation (\GPA{}, \cref{alg:proportional-allocation}).
The algorithm maintains local sitewise safety stock by requesting replenishment whenever inventory falls below the demand bound $b_i$, but only as long as cumulative replenishment does not exceed a target fraction $\gamma_i$ of observed demand.
Crucially, \GPA{} always requests shipments in full increments of capacity $c_i$, except possibly the final truncated shipment when the hub runs out of supply.
This ensures that the algorithm ``extracts'' maximum utility from each fixed cost $w_i$ incurred.
While this may result in a small amount of leftover supply at the end of the horizon, we show that this wastage is uniformly bounded at each site and can be absorbed into the additive $\beta$ term.

\begin{algorithm}[htb]
\caption{$\bar{\gamma}$-Proportional Allocation (\GPA{})}
\label{alg:proportional-allocation}
\begin{algorithmic}[1]
\Require \texttt{OSSA} problem parameters, threshold vector $\bar{\gamma} = \{ \gamma_i \}_{i \in [n]}$
\Ensure Online allocation $\bar{\ell}$
    \For{time step $t = 1, 2, \ldots$}
        \For{sites $i = 1, \ldots, n$}
            \State Demands $d_i^{t} \in \N$ arrive at each site $i$ \Comment{Incur penalty cost $p (d_i^{t} - k_i^{t})_+$}
            \State Define post-demand remainder $r_i^{t} = (k_i^{t} - d_i^{t})_+$
        \EndFor
        \State Define $R^t = \{ i \in [n]: \text{$r_i^{t} < b_i$ and $L_i^{t-1} \leq \gamma_i D_i^{t}$} \}$ \Comment{Eligible resupply sites at time step $t$}
        \For{site $i \in R_t$}
            \State Request $q_i^{t} = \left\lceil \frac{b_i - r_i^{t}}{c_i} \right\rceil$ full shipments \Comment{Number of full shipments to achieve $k_i^{t+1} \geq b_i$}
            \State Site $i$ receives $\ell_i^{t} = \min\{ s, c_i q_i^{t} \}$ units of supply \Comment{Incur transport cost $w_i \lceil \frac{\ell_i^t}{c_i} \rceil$}
            \State (Update $s = s - \ell_i^{t}$ at the hub) \Comment{Note: \GPA{} does \emph{not} see $s$}
        \EndFor
    \EndFor
    \State \Return $\{ \ell_i^{t} \}_{i \in [n], t \geq 1}$ \Comment{$\ell_i^{t} = 0$ if not initialized}
\end{algorithmic}
\end{algorithm}

The parameter $\gamma_i$ governs the aggressiveness of replenishment at site $i$.
Smaller values of $\gamma_i$ make the policy more conservative, reducing transportation costs at the expense of potentially higher penalties due to unmet demand.
Larger values of $\gamma_i$ have the opposite effect.
Thus, the analysis can be viewed as identifying threshold vectors $\bar{\gamma}$ that optimally balance these competing costs.
If $\bar{\gamma} = \bar{\gamma}^\star$, then we will exactly match $\OPT$ except possibly on the pivotal site $i^\star$.
However, due to the online nature of \OSSA{}, the identity of $i^\star$ is not known in advance.
We therefore analyze \GPA{} with respect to an arbitrary input vector $\bar{\gamma}$, and later show how to choose $\bar{\gamma}$ to achieve strong performance guarantees.
The performance of \GPA{} is characterized under two distinct regimes --- when the hub is not exhausted (\cref{lem:generic-nonexhausted}) and when it is exhausted (\cref{lem:generic-exhausted}) --- each requiring fundamentally different analyses.

\begin{restatable}[\GPA{} guarantee when hub is not exhausted]{lemma}{genericnonexhausted}
\label{lem:generic-nonexhausted}
Fix $\tau \in (0,1]$.
If $s^{\mathrm{end}} > 0$ and $\gamma_i \geq \min \left\{ 1, \frac{\tau p c_i}{w_i} \right\}$ for all $i \in [n]$, then for the online allocation $\bar{\ell}$ produced by \GPA{}, we have
\[
\cost(\bar{\ell})
\leq \left( 1 + \frac{(1-\tau)^2}{4\tau} \right) \cdot \cost(\OPT) + \sum_{i \in [n]} 3p (b_i + c_i)
\]
\end{restatable}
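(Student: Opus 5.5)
Since the hub is never exhausted ($s^{\mathrm{end}}>0$), every request made by \GPA{} is fully served, so the sites decouple. I will therefore prove a sitewise bound
\[
\cost_i(\bar{\ell}) \le \Bigl(1+\tfrac{(1-\tau)^2}{4\tau}\Bigr)\,\cost_i(\OPT) + 3p(b_i+c_i)
\]
for each site $i$, and then sum over $i\in[n]$. For the lower bound I use the strengthened fractional $\OPT$ from the footnote. At site $i$ it pays $w_i L_i^\star/c_i + p(N_i - L_i^\star)$ with $L_i^\star=\gamma_i^\star N_i$. This is at least $\min\{w_i/c_i,\,p\}\cdot N_i$ plus a nonnegative term, and it is linear in $\gamma_i^\star\in[0,1]$. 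Since $w_i\le pc_i$, the minimum is $w_i/c_i$, attained by $\gamma_i^\star=1$. This holds only when supply is plentiful, which is not guaranteed (the pivot may still be interior even if \GPA{} never exhausts). So I keep $\cost_i(\OPT)=N_i\bigl(\gamma^\star_i w_i/c_i + (1-\gamma^\star_i)p\bigr)$ with general $\gamma_i^\star$.

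\textbf{Step 1 (algorithm's sitewise cost).} I analyse \GPA{} at a single site. Shipments are requested only in full units of $c_i$. Each request occurs either when $r_i^t<b_i$ or when $L_i^{t-1}\le\gamma_i D_i^t$, and it tops stock up to at most $b_i+c_i$. Hence $L_i \le \gamma_i D_i + b_i + c_i$, and the transport cost is at most $(w_i/c_i)L_i$ (no partial shipments are needed when the hub is not exhausted). For the penalty, stockouts occur only after the replenishment condition has failed, i.e.\ $L_i^{t-1}>\gamma_i D_i^t$. Tracking inventory, the total served demand is at least $\min\{D_i,\ \gamma_i D_i\}-O(b_i+c_i)$, so the unmet demand is at most $(1-\gamma_i)D_i + O(b_i+c_i)$. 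Here I must handle $k_i^1=b_i$ versus $N_i=(D_i-b_i)_+$, which costs an extra $b_i$. Putting this together,
\[
\cost_i \le N_i\bigl(\gamma_i w_i/c_i + (1-\gamma_i)p\bigr) + 3p(b_i+c_i),
\]
using $w_i/c_i\le p$ to convert the transport slack into $p$ units. I will tune the bookkeeping so that the constant comes out to exactly $3p(b_i+c_i)$.

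\textbf{Step 2 (ratio, the main obstacle).} Set $x=w_i/(pc_i)\in[0,1]$. If $\gamma_i=1$, the algorithm pays $N_i p x$, which is at most $\OPT$'s cost $N_i p(\gamma^\star x+1-\gamma^\star)$, so the ratio is $1$. Otherwise $\gamma_i\ge\tau/x$, which requires $x\ge\tau$. The algorithm's cost is maximized at the smallest admissible $\gamma_i$ only if the coefficient $x-1\le 0$, which holds, so the worst case is the largest allowed $\gamma_i$. I would therefore want $\gamma_i$ exactly $\min\{1,\tau/x\}$, but the hypothesis says only ``$\ge$''. Two cases follow. Larger $\gamma_i$ lowers the algorithm's cost (coefficient $x-1\le0$), so the worst case is $\gamma_i=\tau/x$, giving algorithm cost $N_ip(\tau+1-\tau/x)$. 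The competitor is the worst $\gamma^\star\in[0,1]$, i.e.\ the $\OPT$ cost minimized over $\gamma^\star$, which is $N_ipx$ at $\gamma^\star=1$. The ratio $(1+\tau-\tau/x)/x$ is maximized over $x$ at $x=2\tau/(1+\tau)$, giving $(1+\tau)^2/(4\tau)=1+(1-\tau)^2/(4\tau)$.

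The delicate step is justifying that per-site $\OPT$ is at least $N_ipx$, i.e.\ lower-bounding by $\min_{\gamma^\star}$ sitewise. This is valid because $\OPT$'s cost is linear in $\gamma^\star$ and $x\le 1$. Summing over sites completes the argument.
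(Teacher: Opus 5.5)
Your proposal is correct and follows the paper's proof: per-site decomposition, the upper bound $\cost_i(\bar{\ell})\le N_i\bigl(\gamma_i w_i/c_i+(1-\gamma_i)p\bigr)+3p(b_i+c_i)$ (your Step~1 is Property~1 of \cref{lem:structural}), the sitewise lower bound $\cost_i(\OPT)\ge N_i w_i/c_i$, and the same one-variable maximization giving $(1+\tau)^2/(4\tau)=1+\frac{(1-\tau)^2}{4\tau}$, which the paper packages as \cref{lem:helpful-manipulation}. Two slips to fix: \GPA{} requests a shipment only when $r_i^t<b_i$ \emph{and} $L_i^{t-1}\le\gamma_i D_i^t$ hold (with ``or'' your bound $L_i\le\gamma_iD_i+b_i+c_i$ would fail), and because the bound is decreasing in $\gamma_i$ the worst case is the \emph{smallest} admissible $\gamma_i=\tau/x$, not the largest, which is the value your computation actually uses.
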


\begin{restatable}[\GPA{} guarantee when hub is exhausted]{lemma}{genericexhausted}
\label{lem:generic-exhausted}
Fix $\tau \in (0,1]$.
If $s^{\mathrm{end}} = 0$ and $\gamma_i \leq \min \left\{ 1, \frac{\tau p c_i}{w_i} \right\}$ for all $i \in [n]$, then for the online allocation $\bar{\ell}$ produced by \GPA{}, we have
\[
\cost(\bar{\ell})
\leq (1+\tau) \cdot \cost(\OPT) + \sum_{i \in [n]} 3 p (b_i + c_i)
\]
\end{restatable}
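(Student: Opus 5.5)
The plan is to compare \GPA{} and $\OPT$ on the same budget: when the hub is exhausted, \GPA{} has shipped all $s$ units. Write $L_i$ for \GPA{}'s total supply to site $i$, so that $\sum_i L_i = s$. I will show that \GPA{}'s penalty is at most $\OPT$'s penalty plus a per-site constant, and that \GPA{}'s transport exceeds $\OPT$'s transport by at most $\tau\cdot\cost(\OPT)$ plus per-site constants. Throughout I use the strengthened (ceiling-free) lower bound on $\cost(\OPT)$ from the footnote. Given the pivotal index $i^\star$ and pivotal value $\zeta$, this bound reads
\[
\cost(\OPT) \;\geq\; \sum_i \frac{w_i}{c_i} L_i^\star + p\Big((1-\zeta)N_{i^\star} + \sum_{j>i^\star} N_j\Big).
\]

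\textbf{Step 1 (sitewise bookkeeping).} I first record the structural facts about \GPA{}, all read off from \cref{alg:proportional-allocation}. Site $i$ is resupplied only when $r_i^t<b_i$, and then receives fewer than $b_i - r_i^t + c_i$ units. Hence the leftover stock satisfies $s_i^{\mathrm{end}} < b_i + c_i$. Every shipment is full except possibly the single truncated one at exhaustion, so $\mathrm{transport}_i \leq \frac{w_i}{c_i}L_i + w_i$. Finally, eligibility requires $L_i^{t-1}\leq \gamma_i D_i^t$ and a single round adds at most $b_i + c_i$ units. Combining these with the initial stock $k_i^1=b_i$ gives $L_i \leq \gamma_i N_i + O(b_i+c_i)$.

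\textbf{Step 2 (penalty).} Units served at site $i$ number at least $\min\{D_i,\, k_i^1 + L_i - s_i^{\mathrm{end}}\}$. Summing over sites and using $\sum_i L_i = s$, I aim for
\[
\textstyle\sum_i \mathrm{penalty}_i \leq p\big(\sum_i N_i - s\big)_+ + \sum_i p(b_i+c_i).
\]
The first term is exactly $\OPT$'s penalty. The delicate point is that positive parts do not sum: a site with $L_i > N_i$ would waste supply. Step 1 prevents this, since $L_i \leq \gamma_i N_i + O(b_i+c_i) \le N_i + O(b_i+c_i)$, so any overshoot is absorbed in the additive term. The same observation disposes of the case $s > \sum_i N_i$: there both penalties are within constants of zero.

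\textbf{Step 3 (transport, via an exchange argument).} This is the main step. Because $\OPT$ fills sites greedily in increasing order of $w_i/c_i$ and \GPA{} spends the same total $s$, the quantity $\sum_i \frac{w_i}{c_i}(L_i - L_i^\star)$ is at most the cost of the mass \GPA{} places beyond $\OPT$'s allocation on sites $j \geq i^\star$. That is,
\[
\textstyle\sum_i \frac{w_i}{c_i}(L_i - L_i^\star) \;\le\; \frac{w_{i^\star}}{c_{i^\star}}(L_{i^\star}-\zeta N_{i^\star})_+ + \sum_{j>i^\star}\frac{w_j}{c_j}L_j.
\]
The reason is that mass removed from cheaper sites $i<i^\star$ only lowers the sum. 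I then bound each term using Step 1 and the hypothesis $\gamma_i\leq\min\{1,\tau p c_i/w_i\}$:
\begin{itemize}
\item For $j>i^\star$: $\frac{w_j}{c_j}L_j \leq \gamma_j\frac{w_j}{c_j}N_j + O(p(b_j+c_j)) \leq \tau p N_j + O(p(b_j+c_j))$.
\item For the pivotal site: $\frac{w}{c}(\gamma-\zeta)_+ N = \gamma\frac{w}{c}(1-\zeta/\gamma)_+N \leq \tau p(1-\zeta)N$, where the last inequality uses $\gamma\le 1$.
\end{itemize}
Both bounds are $\tau$ times $\OPT$'s penalty on the corresponding sites. Hence the extra transport is at most $\tau\cdot\cost(\OPT)$ plus additive terms. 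The per-shipment $w_i \le p c_i$ slack and the $O(b_i+c_i)$ slack together fit into the budget $\sum_i 3p(b_i+c_i)$.

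The main obstacle I expect is Step 3 together with the constant accounting. Making the exchange inequality rigorous requires care, since \GPA{}'s allocation is not monotone in the site index and the bound $L_i\le\gamma_iN_i+O(b_i+c_i)$ must be derived carefully from the timing of eligibility checks. I would first prove the exchange inequality as a pure rearrangement fact: for sorted unit costs and equal total mass, the difference from the greedy profile is at most the cost of the surplus placed above it. Only after that would I plug in the Step 1 bounds.
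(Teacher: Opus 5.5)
Your proposal is correct and is essentially the paper's proof. The paper also writes $\cost(\bar\ell)=\cost(\OPT)+\Delta(\mathrm{transport})+\Delta(\mathrm{penalty})$ and bounds the two gaps the same way you do. The penalty gap is at most $\sum_i p(b_i+c_i)$, via units served $=k_i^1+L_i-s_i^{\mathrm{end}}$, $s_i^{\mathrm{end}}\le b_i+c_i$ and $\sum_i L_i=s$. The transport gap is bounded sitewise by $\tau\cdot\mathrm{penalty}_i(\OPT)+2p(b_i+c_i)$, using $L_i\le\gamma_i D_i+b_i+c_i$, $\gamma_i\le 1$ and $\gamma_i w_i/c_i\le\tau p$; this is exactly what your pivot and post-pivot cases compute.

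Two simplifications are worth noting. First, no exchange or rearrangement fact is needed, and equal total mass is never used in the transport step. Your Step~3 inequality holds term by term once the cheap-site overshoot $L_i-N_i\le 2b_i+c_i$ is charged to the additive term (as displayed, without that slack, it is not quite true). Second, the positive-part worry in Step~2 is moot. Units served equal $k_i^1+L_i-s_i^{\mathrm{end}}\le D_i$ exactly, so you can sum directly and use $D_i-b_i\le N_i$. That keeps the penalty constant at $\sum_i p(b_i+c_i)$, whereas routing through the Step~1 overshoot bound would push the total past $3p(b_i+c_i)$.
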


In the non-exhausted regime ($s^{\mathrm{end}} > 0$), every replenishment request is eventually satisfied.
This allows \GPA{} to maintain its target proportionality $L_i^t \approx \gamma_i D_i^t$ at each site, yielding direct sitewise bounds on both penalty and transportation costs.
Consider again the illustration in \cref{fig:gamma-assignment}.
For a fixed site $i$, if $\gamma_i < \gamma_i^\star$, our transportation cost remains no worse than that of $\OPT$ due to our use of full shipments, though we incur a higher penalty.
Conversely, if $\gamma_i > \gamma_i^\star$, our penalty cost is no worse than $\OPT$'s because we ensure the local stock is replenished to at least $b_i$ before the next demand arrives, albeit at the expense of higher transportation costs.
This sitewise control ensures that the cost at each site remains bounded by the chosen $\gamma_i$ parameters.

When the hub is exhausted ($s^{\mathrm{end}} = 0$), the global stock constraint may truncate eligible replenishment requests.
In this regime, the sitewise bounds used above may not hold.
Instead, we evaluate the approximation ratio by analyzing the aggregate gaps in penalty and transportation costs between \GPA{} and $\OPT$.
Because $s^{\mathrm{end}} = 0$, the total unmet demand is roughly conserved between \GPA{} and $\OPT$, so the penalty gap is roughly bounded by potential inventory wastage.
The transportation gap is more delicate: by setting $\gamma_i$ appropriately, we prove that the additional transportation cost $\Delta(\mathrm{transport})$ incurred by \GPA{} is bounded by $\tau$ times the penalty incurred by $\OPT$, plus additive terms.

The technical proofs for \cref{lem:generic-nonexhausted} and \cref{lem:generic-exhausted} rely on some fundamental structural properties of \GPA{} discussed above.
These properties are formally summarized in the following lemma.

\begin{restatable}[Structural properties of \GPA{}]{lemma}{structural}
\label{lem:structural}
For any hyperparameter $\bar{\gamma} = \{ \gamma_i \}_{i \in [n]}$ such that $\gamma_i \in [0,1]$ and let $\bar{\ell}$ be the online allocation of \GPA{}.
Then, the following properties hold.\\
\phantom{\hspace{10pt}}1. \textbf{Sitewise bounds when hub is not exhausted:}
    If $s^{\mathrm{end}} > 0$, then for each $i \in [n]$, we have
    \begin{align}
    \mathrm{transport}_i(\bar{\ell})
    &\leq \gamma_i D_i \frac{w_i}{c_i} + p (b_i + 2c_i) \label{eq:sitewise-transport-when-hub-not-exhausted}\\
    \mathrm{penalty}_i(\bar{\ell})
    &\leq (1-\gamma_i) p D_i + p c_i \label{eq:sitewise-penalty-when-hub-not-exhausted}
    \end{align}
\\
\phantom{\hspace{10pt}}2. \textbf{Sitewise transport gap:}
    If $\gamma_i \leq \frac{\tau p c_i}{w_i}$, then $\Delta(\mathrm{transport}_i)
    \leq \tau \cdot \mathrm{penalty}(\OPT) + 2p (b_i + c_i)$.
\\
\phantom{\hspace{10pt}}3. \textbf{Penalty gap when the hub is exhausted:}
    If $s^{\mathrm{end}} = 0$, then $\Delta(\mathrm{penalty})
    \leq \sum_{i \in [n]} p (b_i + c_i)$.
\end{restatable}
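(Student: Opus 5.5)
We prove the three properties separately. All three rest on one invariant of \GPA{}, which we establish first by induction over time steps. \GPA{} only requests when $r_i^t < b_i$, and each request consists of whole shipments bringing $k_i^{t+1}$ into $[b_i, b_i + c_i)$. Hence, as long as requests are fully served, the stock stays at most $b_i + c_i$. Moreover, \GPA{} only requests when $L_i^{t-1} \le \gamma_i D_i^t$, and a single step adds at most $b_i + c_i$ units. This gives the cumulative bound $L_i \le \gamma_i D_i + b_i + c_i$ at every time. Finally, every shipment except possibly the single truncated last one is full. So the number of shipments to site $i$ is at most $L_i/c_i + 1$, and the transport cost is at most $\frac{w_i}{c_i}L_i + w_i$.

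\textbf{Property 1.} Combining the cumulative bound with the transport bound gives $\mathrm{transport}_i \le \gamma_i D_i \frac{w_i}{c_i} + \frac{w_i}{c_i}(b_i+c_i) + w_i$. Using $w_i \le p c_i$, this is at most $\gamma_i D_i \frac{w_i}{c_i} + p(b_i + 2c_i)$, which is \eqref{eq:sitewise-transport-when-hub-not-exhausted}.

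For \eqref{eq:sitewise-penalty-when-hub-not-exhausted}, we classify time steps by eligibility. If $L_i^{t-1} \le \gamma_i D_i^{t}$ held at step $t-1$ and requests are honored, then $k_i^t \ge b_i \ge d_i^t$, so no penalty arises at step $t$. The first step is handled the same way, since $k_i^1 = b_i$. Penalties therefore occur only after the site has become ineligible, that is, once $L_i^{t-1} > \gamma_i D_i^t$ (modulo a one-step lag). Unmet demand is total demand minus served demand, and served demand is at most $k_i^1 + L_i - s_i^{\mathrm{end}}$. So we need a lower bound on the supply actually consumed.

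The plan is to take the last time $T$ at which site $i$ was eligible. Up to $T$ there is no penalty. After $T$, the site stays ineligible, so $L_i \ge \gamma_i D_i$ roughly. The penalty after $T$ is then at most the demand after $T$ minus the stock consumed after $T$. The aim is to bound this by $(1-\gamma_i)\,p D_i + p c_i$, charging one shipment's overshoot to the $c_i$ term.

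\textbf{Main obstacle.} I expect the accounting in Property 1's penalty bound to be the delicate step. The concern is whether the proportionality gap across the lag between eligibility and the demand check costs $c_i$ or $b_i + c_i$. The first thing I would try is bounding the penalty by $p(D_i - D_i^T) - p(\text{stock on hand at } T)$ and then using $L_i^T \ge \gamma_i D_i^T - c_i$.

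\textbf{Property 2.} $\OPT$ serves at most $L_i^\star + k_i^1$ units at site $i$, so $\mathrm{penalty}_i(\OPT) \ge p(N_i - L_i^\star)$. \GPA{}'s transport cost is at most $\frac{w_i}{c_i}(\gamma_i D_i + b_i + c_i) + w_i$, while $\OPT$'s (strengthened) transport cost is $\frac{w_i}{c_i} L_i^\star$. The gap is therefore at most $\frac{w_i}{c_i}(\gamma_i D_i - L_i^\star)_+ + 2p(b_i + c_i)$.

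If $L_i^\star \ge \gamma_i D_i$, the gap reduces to the additive term. Otherwise, $\gamma_i \le \tau p c_i / w_i$ gives $\frac{w_i}{c_i}\gamma_i D_i \le \tau p D_i$. We also have $\frac{w_i}{c_i} \le p$ and $\gamma_i \le 1$. Together these give
\[
\tfrac{w_i}{c_i}(\gamma_i D_i - L_i^\star) \le \tau p (D_i - L_i^\star) \le \tau p (N_i - L_i^\star) + \tau p b_i.
\]
To justify the middle inequality, write $\gamma_i = \theta \tau p c_i/w_i$ with $\theta \le 1$, and check the bound separately in the two cases $\tfrac{w_i}{c_i} \le \tau p$ and $\tfrac{w_i}{c_i} > \tau p$. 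The final term $\tau p b_i$ is absorbed into $2p(b_i + c_i)$.

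\textbf{Property 3.} When $s^{\mathrm{end}} = 0$, \GPA{} has shipped all $s$ units. The served demand of \GPA{} is therefore at least $\sum_i k_i^1 + s - \sum_i(\text{leftover}_i)$. Each site's leftover is at most its final stock, which is below $b_i + c_i$ by the invariant. Meanwhile $\OPT$ serves at most $\sum_i k_i^1 + s$. Subtracting the two gives $\Delta(\mathrm{penalty}) \le \sum_i p(b_i + c_i)$.
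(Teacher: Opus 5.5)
\textbf{Gap: the penalty bound of Property~1.} Your arguments for the transport bound in Property~1, for Property~2, and for Property~3 are sound. In Property~2, your case split on $w_i/c_i \le \tau p$ versus $w_i/c_i > \tau p$ validly replaces the paper's one-line step $\gamma_i D_i - b_i - L_i^\star \le \gamma_i(D_i - b_i - L_i^\star)$. The gap is the penalty bound in Property~1: you do not prove it, and the route you sketch fails.

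The claim that there is no penalty up to the last eligible time $T$ is false, because eligibility is not monotone in time. Take ample supply, $b_i = c_i = 1$, $\gamma_i = 1/2$ and $d_i^t = 1$ for all $t$. Then site $i$ is eligible at $t = 1, 2, 4, 6, 8, \ldots$ and pays a penalty at $t = 4, 6, 8, \ldots$, so penalties are interleaved with eligible steps all the way to the end.

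Likewise, eligibility at $T$ only gives the \emph{upper} bound $L_i^{T-1} \le \gamma_i D_i^T$. The lower bound $L_i^T \ge \gamma_i D_i^T - c_i$ you want to invoke does not follow from it. Such a lower bound can only come from an invariant maintained over the whole history.

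\textbf{The missing ingredient.} You need the global lower bound $L_i \ge \gamma_i D_i$ when $s^{\mathrm{end}} > 0$. The paper proves it by induction on the strengthened invariant
\[
L_i^t \ge \gamma_i D_i^t + (k_i^{t+1} - b_i)_+ .
\]
The surplus term carries the induction through steps where the site is skipped because $r_i^t \ge b_i$. There the overshoot drains by $d_i^t$ while $\gamma_i D_i^t$ grows only by $\gamma_i d_i^t$. On resupply steps with $k_i^t < b_i$, one uses $d_i^t \le b_i - r_i^t$, which follows from $d_i^t \le b_i$.

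With $L_i \ge \gamma_i D_i$ in hand, no localization in time is needed. Conservation of stock gives the \emph{identity} $D_i^u = D_i - k_i^1 - L_i + s_i^{\mathrm{end}}$. You wrote that served demand is ``at most'' $k_i^1 + L_i - s_i^{\mathrm{end}}$, but bounding the penalty from above needs the reverse inequality; it holds because this is an equality. Then $s_i^{\mathrm{end}} \le b_i + c_i$ together with $k_i^1 = b_i$ gives $D_i^u \le (1-\gamma_i) D_i + c_i$. This also settles your worry about $c_i$ versus $b_i + c_i$: the $b_i$ in the terminal-inventory bound cancels exactly against the initial stock.
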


\section{Advice-free OSSA}
\label{sec:advice-free}

By optimizing the hyperparameter $\tau$ to balance the guarantees of the non-exhausted and exhausted regimes, we obtain our main result: set $\tau = 1/3$, then apply \cref{lem:generic-nonexhausted} and \cref{lem:generic-exhausted}.

\begin{restatable}{theorem}{onlineupperbound}
\label{thm:online-upper-bound}
\texttt{GPA} with $\gamma_i = \min\{1, \frac{p c_i}{3 w_i}\}$ achieves
$
\cost(\ALG) \leq \frac{4}{3} \cdot \cost(\OPT) + \sum_{i \in [n]} 3p (b_i + c_i)
$.
\end{restatable}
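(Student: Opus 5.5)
The plan is to fix $\tau = 1/3$ and note that the prescribed threshold $\gamma_i = \min\{1, \frac{p c_i}{3 w_i}\} = \min\{1, \frac{\tau p c_i}{w_i}\}$ satisfies the hypotheses of both \cref{lem:generic-nonexhausted} and \cref{lem:generic-exhausted} with equality. We then split on the terminal state of the hub, which is determined by the run of \GPA{} on the instance.

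\textbf{Case $s^{\mathrm{end}} > 0$.} Here the condition $\gamma_i \geq \min\{1, \tau p c_i / w_i\}$ holds trivially, so \cref{lem:generic-nonexhausted} applies. It gives the ratio
\[
1 + \frac{(1-\tau)^2}{4\tau} = 1 + \frac{(2/3)^2}{4/3} = 1 + \frac{1}{3} = \frac{4}{3},
\]
together with the additive term $\sum_{i \in [n]} 3p(b_i + c_i)$.

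\textbf{Case $s^{\mathrm{end}} = 0$.} Here the condition $\gamma_i \leq \min\{1, \tau p c_i / w_i\}$ holds, so \cref{lem:generic-exhausted} applies. It gives the ratio $1 + \tau = 4/3$ and the same additive term.

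Since the two cases are exhaustive, the bound $\cost(\ALG) \leq \frac{4}{3}\cost(\OPT) + \sum_{i \in [n]} 3p(b_i+c_i)$ holds in every instance.

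Two details need care:

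\begin{itemize}
\item \textbf{Edge case $w_i = 0$.} The expression $\frac{p c_i}{3 w_i}$ should be read as $+\infty$, so $\gamma_i = 1$. The hypotheses of both lemmas remain consistent under the same convention.
\item \textbf{Where the choice $\tau = 1/3$ comes from.} The two guarantees move in opposite directions as $\tau$ varies: $1 + \frac{(1-\tau)^2}{4\tau}$ decreases in $\tau$, while $1 + \tau$ increases. Equating them gives $(1-\tau)^2 = 4\tau^2$, so $\tau = 1/3$, which makes the worst case of the two regimes as small as possible.
\end{itemize}

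No step here is a genuine obstacle, since all the difficulty is contained in the two regime lemmas being invoked. The only thing to confirm is that the lemmas hold for every instance with the stated $s^{\mathrm{end}}$ status, and not for some restricted family; this is how they are stated.
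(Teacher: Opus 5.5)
Your proposal is correct and is the paper's argument: set $\tau = 1/3$ so that $\gamma_i = \min\{1, \tau p c_i / w_i\}$ satisfies the hypotheses of both \cref{lem:generic-nonexhausted} and \cref{lem:generic-exhausted}, and note that both $1 + \frac{(1-\tau)^2}{4\tau}$ and $1+\tau$ equal $4/3$. Your explicit case split on $s^{\mathrm{end}}$ and the $w_i = 0$ convention spell out details that the paper's one-line proof leaves implicit.
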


The trade-off is illustrated in \cref{fig:alpha-versus-tau}, where the crossing point of the two regime-specific curves identifies the optimal competitive ratio of $\alpha = 4/3$ when $\tau = 1/3$. 
This geometric intuition will be helpful when extending \texttt{GPA} to handle predictions in \cref{sec:learning-augmented} for varying values of $\alpha \geq 4/3$.

\begin{figure}[htb]
    \centering
    \resizebox{\linewidth}{!}{
    \begin{tikzpicture}
\begin{axis}[
    y post scale = 0.7,
    xmin=0.1, xmax=1.1, ymax=3.2,
    xlabel={$\tau$},
    ylabel={$\alpha$},
    every axis x label/.style={
        at={(ticklabel* cs:1.05)},
        anchor=west,
    },
    every axis y label/.style={
        at={(ticklabel* cs:1.05)},
        anchor=south,
    },
    axis lines=left,
    legend style={
        nodes={anchor=west},
        at={(1,1.1)},
    }
]
\addplot[name path=A, domain=0:1, dashed, ultra thick, blue!70!white, samples=100] {1 + (1-x)^2/(4*x)};
\addplot[name path=B, domain=0:1, dashed, ultra thick, red!70!white] {1+x};
\addplot[name path=C, domain=0:1, thick, samples=100, green!70!black] {max(1 + (1-x)^2/(4*x), 1+x)};
\addplot[name path=D, domain=0:1, dashed, ultra thick, orange!70!white] {1.8};

\path [name intersections={of=A and B, by=inter1}];
\node [fill=black, circle, inner sep=1.25pt] at (inter1) {};
\node [anchor=north] at (inter1) {$(\frac{1}{3}, \frac{4}{3})$};

\path [name intersections={of=A and D, by=inter2}];
\node [fill=black, circle, inner sep=1.25pt] at (inter2) {};
\node [anchor=north west, xshift=5pt] at (inter2) {$(0.2, 1.8)$};

\path [name intersections={of=B and D, by=inter3}];
\node [fill=black, circle, inner sep=1.25pt] at (inter3) {};
\node [anchor=north west] at (inter3) {$(0.8, 1.8)$};

\legend{
    {$\alpha = 1 + \frac{(1 - \tau)^2}{4 \tau}$},
    {$\alpha = 1 + \tau$},
    {$\alpha = \max \{ 1 + \frac{(1 - \tau)^2}{4 \tau}, 1 + \tau \}$}
}
\end{axis}
\end{tikzpicture}

\begin{tikzpicture}
\begin{axis}[
    y post scale = 0.7,
    xmax=1.1, ymax=1.1,
    ytick={0,1}, yticklabels={$0$, $1$},
    xlabel={$\frac{w_i}{p c_i}$},
    ylabel={$\gamma_i$},
    every axis x label/.style={
        at={(ticklabel* cs:1.05)},
        anchor=west,
    },
    every axis y label/.style={
        at={(ticklabel* cs:1.05)},
        anchor=south,
    },
    xmajorgrids=true, grid style=dashed,
    axis lines=left,
    title={$\alpha = 1.8$}
]
\pgfmathsetmacro{\myalpha}{1.8}
\addplot[name path=upper, domain=0:1, ultra thick, blue!70!white] {min(1, ( 2 * \myalpha - 2 * sqrt( \myalpha * \myalpha - \myalpha ) - 1 ) / x)};
\addplot[name path=lower, domain=0:1, ultra thick, red!70!white] {min(1, (\myalpha - 1) / x};
\addplot[gray!30] fill between [of=upper and lower];
\end{axis}
\end{tikzpicture}
    }
    \caption{
    \textbf{(Left)} Competitive ratio $\alpha$ as a function of the hyperparameter $\tau$.
    The dotted blue and red curves denote the upper bounds from \cref{lem:generic-nonexhausted} and \cref{lem:generic-exhausted} respectively.
    The solid green curve represents the overall competitive ratio, given by the pointwise maximum of the two bounds, and is minimized at $\tau = 1/3$.
    For example, if one is willing to tolerate a competitive ratio of $\alpha = 1.8$, any choice of $\tau \in [0.2, 0.8]$ suffices, as highlighted by the orange dashed line.
    \textbf{(Right)} Corresponding feasible values of $\gamma_i$ as a function of $\frac{w_i}{p c_i}$ for $\alpha = 1.8$.
    The blue curve $\gamma_i = \min\{1, (2\alpha - 1 - 2\sqrt{\alpha^2 - \alpha})\frac{p c_i}{w_i}\}$ and the red curve $\gamma_i = \min\{1, (\alpha - 1)\frac{p c_i}{w_i}\}$ are obtained by inverting the bounds from \cref{lem:generic-nonexhausted} and \cref{lem:generic-exhausted} respectively.
    Any choice of $\gamma_i$ within the shaded region yields a valid $\alpha$-approximation.
    When $\alpha = 4/3$, these curves collapse to $\gamma_i = \min\{1, \frac{p c_i}{3 w_i}\}$.
    }
    \label{fig:alpha-versus-tau}
\end{figure}

\textbf{Lower Bounds.}
We complement \cref{thm:online-upper-bound} with lower bounds establishing that the $4/3$ ratio and the structure of our additive error are essentially optimal.
More specifically, we show that no algorithm can obtain constant $\alpha$-approximation when $\beta \in o(\sum_{i=1}^n p(b_i + c_i))$, or beat the $\alpha = 4/3$ ratio without $\beta \in \Omega(s)$.\footnote{For large \OSSA{} instances with large arrival demands, the term $\sum_{i=1}^n p(b_i + c_i)$ involving sitewise constants vanishes as $\cost(\OPT)$ grows (i.e., the approximation becomes $4/3 + o(1)$), but the global supply $s$ could scale together with $\cost(\OPT)$.}
Both results are shown by constructing a distribution hard instances and then applying Yao's minimax principle \cite{yao1977probabilistic}, and they hold even when $s$ is known upfront.

\begin{restatable}{proposition}{lowerone}
For any $\eps > 0$, any randomized online algorithm $\ALG$ achieving $\E[\cost(\ALG)] \leq \alpha \cdot \cost(\OPT) + \frac{(1 - \eps)p}{8} \cdot \sum_{i \in [n]} (b_i + c_i)$ must have a non-constant competitive ratio $\alpha$.
This holds even if total supply $s$ was known upfront.
\end{restatable}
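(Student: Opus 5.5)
The plan is to construct, via Yao's minimax principle, a distribution over instances on which $\cost(\OPT) = 0$ but every deterministic online algorithm has expected cost at least $\frac{p}{8}\sum_{i\in[n]}(b_i+c_i)$. Then no finite $\alpha$ can satisfy $\E[\cost(\ALG)] \le \alpha\cdot\cost(\OPT) + \frac{(1-\eps)p}{8}\sum_i (b_i+c_i)$. If a strictly positive $\cost(\OPT)$ is preferred, I will perturb the instance so that $\cost(\OPT)=\delta$ is arbitrarily small. The required ratio then blows up as $\delta \to 0$, so it cannot be a constant. Throughout, the supply $s$ is fixed in the construction and handed to the algorithm, which covers the ``known $s$'' clause.

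\textbf{Construction.} Take $n$ even and group the sites into $n/2$ pairs. Every site has $b_i = c_i = b$ and $w_i = 0$, which respects $w_i \le p c_i$. By the model assumptions each site starts with $k_i^1 = b$. Set the global supply to $s = nb/2$, that is, $b$ units per pair.

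At $t=1$, every site receives demand $b$. This is served entirely from initial stock, so after step 1 all inventories are zero. The algorithm then makes its replenishment decision. At $t=2$, for each pair one site is chosen independently and uniformly at random, and it receives demand $b$; all other sites receive demand $0$. The process then stops.

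$\OPT$ knows the realized choices. It sends the $b$ units reserved for each pair to the chosen site at $t=1$. Since $w_i=0$ and there are no other costs, $\cost(\OPT)=0$.

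\textbf{Bounding a deterministic algorithm.} Fix a deterministic algorithm. Because the $t=1$ demands are identical across all instances, the step-1 allocation $x_j$ to each site $j$ is the same in every instance. Demand at $t=2$ is served before any further replenishment can arrive, so only the $x_j$ matter. For a pair $\{j,j'\}$, the expected unmet demand is
\[
\tfrac12\bigl(b-x_j\bigr)_+ + \tfrac12\bigl(b-x_{j'}\bigr)_+ \;\ge\; b - \tfrac12(x_j+x_{j'}).
\]
Summing over all pairs and using $\sum_j x_j \le s = nb/2$, the expected penalty is at least $p\bigl(\tfrac{n}{2}b - \tfrac{nb}{4}\bigr) = \tfrac{pnb}{4}$. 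Since $\sum_i (b_i+c_i) = 2nb$, this equals exactly $\frac{p}{8}\sum_i(b_i+c_i)$. That strictly exceeds the allowed additive term $\frac{(1-\eps)p}{8}\sum_i(b_i+c_i)$ while $\cost(\OPT)=0$. By Yao's principle, the same lower bound holds for any randomized algorithm on the worst instance in the support.

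\textbf{Main obstacle.} The delicate step is the ``non-constant $\alpha$'' phrasing if one insists on $\cost(\OPT)>0$. To handle it, I will add an arbitrarily small positive cost to $\OPT$, for instance by setting a tiny $w_i=\delta$. This changes $\cost(\OPT)$ to $O(\delta)$ and weakens the algorithm's bound only by $O(\delta)$. Any valid $\alpha$ must then satisfy $\alpha \gtrsim \eps\, p\sum_i(b_i+c_i)/(8\cdot O(\delta))$, which tends to $\infty$ as $\delta\to0$.

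I also need to check two things. First, the algorithm gains nothing by deferring allocation: demand at $t=2$ is served before replenishment, and the horizon ends at $t=2$, so later shipments cannot help. Second, supply shifted across pairs does not help: the aggregate bound above already uses only the total $\sum_j x_j \le s$.
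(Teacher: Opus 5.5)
Your proposal is correct and follows essentially the same route as the paper: it is a Yao-style argument in which every site is drained at $t=1$, each site then receives demand $b$ at $t=2$ with marginal probability $1/2$, and $s = nb/2$ forces expected unmet demand $nb/4$, i.e.\ expected cost $\frac{p}{8}\sum_i (b_i+c_i)$, while $\OPT$ pays only a tiny transport cost. The paper differs only cosmetically: it draws a uniformly random $n/2$-subset instead of your pairs, and it directly sets $w_i = \frac{p\gamma\eps}{2(k+1)}$ to contradict any constant bound $\alpha \le k$, which is exactly your $\delta$-perturbation (and, contrary to your remark, that perturbation does not weaken the lower bound on $\ALG$ at all, since the penalty bound ignores transport).
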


\begin{restatable}{proposition}{lowertwo}
For any $\eps > 0$, any randomized online algorithm $\ALG$ achieving $\E[\cost(\ALG)] \leq \left( \frac{4}{3} - \eps \right) \cdot \cost(\OPT) + \beta$ must have $\beta \in \Omega(s p \eps)$.
This holds even if total supply $s$ was known upfront.
\end{restatable}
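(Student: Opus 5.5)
The plan is to use Yao's minimax principle. I will build a two-instance distribution with a common prefix. On this distribution every deterministic online algorithm, even one told $s$, has expected cost exactly $\frac{4}{3}$ times the expected optimum, up to $O(p)$. Rearranging then forces $\beta \geq \eps \cdot \E[\cost(\OPT)] - O(p) = \Omega(sp\eps)$.

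\textbf{Construction.} Take $n = 2$ sites with $c_1 = c_2 = 1$ and $b_1 = b_2 = 1$. Site $1$ has $w_1 = 0$ and site $2$ has $w_2 = p/2$, so $w_2/(pc_2) = 1/2$. The supply $s$ is known.
\begin{itemize}
\item \textbf{Phase 1 (common prefix).} One unit of demand arrives at site $2$ per step, for $s$ steps.
\item \textbf{Phase 2, instance $A$.} Nothing more arrives.
\item \textbf{Phase 2, instance $B$.} One unit of demand per step arrives at site $1$, for $s$ steps.
\end{itemize}
The two instances are drawn with probability $1/2$ each.

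\textbf{Offline costs.} By the greedy characterization of $\OPT$, up to $O(p)$ boundary terms from the initial stock:
\begin{itemize}
\item In $A$, $\OPT$ sends all supply to site $2$, so $\cost(\OPT) \approx sp/2$.
\item In $B$, $\OPT$ sends all supply to the free site $1$ and eats the site-$2$ penalty, so $\cost(\OPT) \approx sp$.
\end{itemize}
Hence $\E[\cost(\OPT)] \approx \frac{3}{4}sp$.

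\textbf{Online costs.} A deterministic algorithm cannot distinguish $A$ from $B$ during Phase 1. Let $\gamma s$ be the total amount it ships to site $2$ by the end of Phase 1.
\begin{itemize}
\item Transport at site $2$ costs $\gamma s \cdot p/2$.
\item Since site-$2$ demand is served only from stock, its penalty is at least $(1-\gamma)sp - O(p)$. This uses that each unit shipped serves at most one unit of demand.
\item Shipments to site $2$ after Phase 1 only add cost, so I can ignore them.
\item In $A$ there is no further cost to account for.
\item In $B$ at most $(1-\gamma)s$ units remain for site $1$, so its penalty is at least $\gamma s p - O(p)$.
\end{itemize}
This gives $\cost_A(\ALG) \geq sp(1 - \gamma/2) - O(p)$ and $\cost_B(\ALG) \geq sp(1 + \gamma/2) - O(p)$. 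Averaging, the $\gamma$ terms cancel and $\E[\cost(\ALG)] \geq sp - O(p) = \frac{4}{3}\E[\cost(\OPT)] - O(p)$, for every deterministic algorithm.

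\textbf{Conclusion via Yao.} By Yao's principle the same lower bound holds for any randomized algorithm on the worse of the two instances. Combining with the hypothesized guarantee,
\[
\beta \geq \E[\cost(\ALG)] - \left(\tfrac{4}{3} - \eps\right)\E[\cost(\OPT)] \geq \eps \cdot \tfrac{3}{4} sp - O(p),
\]
which is $\Omega(sp\eps)$ once $s$ is large enough in terms of $1/\eps$. If a fixed instance rather than a distribution is needed, I would apply the guarantee separately to $A$ and $B$ and average the two inequalities.

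\textbf{Main obstacle.} The step needing the most care is the online lower bound in instance $B$. I must argue that the online algorithm gains nothing by hoarding stock at site $1$ during Phase 1. Hoarding at site $1$ only moves supply away from site $2$, which lowers the effective $\gamma$, and every unit delivered anywhere serves at most one unit of demand. So the penalty bound "at least (total demand) minus (total supply delivered to that site) minus $b_i$" holds sitewise. This is the accounting I would write out first.

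I also need to check that the boundary terms are genuinely $O(p)$ and independent of $s$. These are the initial stocks $k_i^1 = b_i$ and the ceiling effects, which vanish since $c_i = 1$.
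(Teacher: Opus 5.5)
Your proposal is correct and is essentially the paper's proof. The paper uses the same two-site instance ($w_1 = 0$, $w_2 = p/2$, $b_i = c_i = 1$, a common site-$2$ prefix followed by either nothing or a length-$s$ burst at site $1$), and the shipped-to-site-$2$ variable cancels in the same way when the two events are averaged. The paper additionally prepends a step $d_1^1 = d_2^1 = 1$ that consumes both initial stocks, so $\cost(\OPT)$ is exactly $p(s-1)/2$ and $p(s-1)$; this gives the clean bound $\frac{3}{4}(s-1)p\eps$, with neither your $O(p)$ slack nor the requirement that $s$ be large relative to $1/\eps$.
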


\section{Learning-Augmented OSSA}
\label{sec:learning-augmented}

The characterization of $\OPT$ in \cref{sec:online} via the pivotal index $i^\star$ and the pivotal value $\zeta$ guides our choice of prediction model as we strive to design algorithms that can achieve 1-consistency under perfect predictions, i.e., $\alpha = 1$.
The following proposition rules out certain forms of advice as insufficient to effectively approximate $\OPT$, even when they are perfectly accurate.

\begin{restatable}{proposition}{advicetooweak}
\label{prop:advice-too-weak}
No (possibly randomized) online algorithm $\ALG$ can guarantee $\E[\cost(\ALG)] \leq \cost(\OPT) + O(\sum_{i=1}^n p (b_i + c_i))$ when given access only to \emph{perfect} predictions of any of the following:\\
\phantom{\hspace{10pt}}$\bullet$ Sitewise total demand $\{ D_i \}_{i \in [n]}$\\
\phantom{\hspace{10pt}}$\bullet$ One-step-lookahead demand at each site $\{d_i^{t+1}\}_{i \in [n]}$, for all time steps $t \geq 1$\\
\phantom{\hspace{10pt}}$\bullet$ Total supply $s$ and total demand $\sum_{i=1}^n D_i$
\end{restatable}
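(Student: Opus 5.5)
The plan is to treat each of the three advice types separately. In each case I will build a pair of instances that produce \emph{identical} advice and identical observations up to some time, but whose optimal behaviour during that common prefix differs. Yao's minimax principle, applied to the uniform mixture of the two instances, then shows that any randomized algorithm pays an expected extra cost of $\Omega(Mp)$. Here $M$ is a scaling parameter, while every $b_i, c_i$ and $p$ stays fixed. Letting $M \to \infty$ rules out any additive error of order $O(\sum_i p(b_i+c_i))$.

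The common gadget uses three sites with $b_i = c_i = 1$ and costs $w_1 = 0$, $w_2 = p/2$, $w_3 = p$. This respects the ordering $w_1/c_1 \le w_2/c_2 \le w_3/c_3$ and the assumption $w_i \le p c_i$. Site $3$ is "worthless": serving a unit there saves exactly as much penalty as it costs to ship. In the \emph{prefix phase}, site $2$ receives one unit of demand per step for $M$ steps. Let $x$ denote the (random) total supply the algorithm ships to site $2$ during this phase.

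\textbf{Sitewise totals $\{D_i\}$, and one-step lookahead.} Both instances give site $1$ a further $M$ units of demand after the prefix, so the sitewise totals are the same $(M, M, 0)$ in both.
\begin{itemize}
\item In instance A, $s = M$. Here $\OPT$ gives everything to site $1$, at cost $pM$. An algorithm that shipped $x$ units to site $2$ has only $M-x$ units left for site $1$, so it pays $pM + x\,p/2$, an extra cost of $xp/2$.
\item In instance B, $s = 2M$. Here $\OPT$ serves both sites at cost $Mp/2$. The algorithm pays at least $x\,p/2 + p(M-x)$, an extra cost of $(M-x)p/2$.
\end{itemize}
Averaging over the two instances, the expected extra cost is at least $Mp/4$ whatever the distribution of $x$. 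Two further checks are needed:
\begin{itemize}
\item One-step lookahead reveals nothing that distinguishes the instances during the prefix, since the next-step demands agree there. The same construction therefore handles the second bullet.
\item The sitewise totals are identical in both instances, so the first bullet is covered as well.
\end{itemize}

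\textbf{Known $s$ and known total demand.} Fix $s = M$ and total demand $2M$, and use the same prefix.
\begin{itemize}
\item In instance A, $M$ units of demand then arrive at site $1$. As above, the extra cost is $xp/2$.
\item In instance B, $M$ units then arrive at site $3$. Now $\OPT$ gives all supply to site $2$, at cost $Mp/2 + pM$. Supply withheld during the prefix can only be used later at site $3$, where it yields zero net gain. The algorithm therefore pays $p(M-x)$ in extra penalty at site $2$ but saves only $(M-x)p/2$ in transport, for a net extra cost of $(M-x)p/2$.
\end{itemize}
Again the expected gap is at least $Mp/4$.

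The main obstacle is bookkeeping rather than ideas. I need to verify the exact costs of the algorithm and of $\OPT$ in each instance, including the initial stock $k_i^1 = b_i$. I also need to bound the effect of integrality and shipment-capacity issues by $O(p)$ per site, so that they do not swamp the $\Omega(Mp)$ gap. Finally, I must confirm that the algorithm's choice of $x$ really is a function only of information common to both instances.
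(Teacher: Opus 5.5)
\textbf{Genuine gap: in the first instance pair, $x$ is not common to A and B.} Your gadget and both instance pairs coincide with the paper's. Your third-bullet argument is sound: $s$ is the same in both instances and the prefixes coincide, so the hub's responses coincide and $x$ is genuinely common. The failure is in the first pair, at exactly the check you postponed to the end. Instances A and B differ only in $s$, and the algorithm observes how much the hub actually delivers, so it can detect exhaustion.

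Concretely, at $t=1$ request $M$ units for site $1$, which is free since $w_1=0$, and then request units for site $2$.
\begin{itemize}
\item In A the second request returns nothing and costs nothing, because transport is charged on the delivered amount. The algorithm has then played exactly like $\OPT$.
\item In B the request is filled and the algorithm keeps supplying site $2$. It loses at most $p$ relative to $\OPT$, and only if the site-$2$ request must wait until $t=2$.
\end{itemize}
So this deterministic algorithm has expected excess $O(p)$ on your distribution, and the $Mp/4$ bound fails. One-step lookahead does not block this, because in Yao's argument the deterministic algorithm may be tailored to the distribution and hence know $M$. The paper's argument for these two bullets makes the same implicit assumption, using a single $m$ for both events, so it does not supply the missing step.

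\textbf{Repair for the lookahead bullet.} Reuse your third-bullet pair: $s=M$ in both instances, with a site-$2$ block followed by either a site-$1$ block or a site-$3$ block. Through time $M$, the demands, the revealed values $d^{t+1}$, and the hub's responses are all identical. Only shipments made at $t\le M$ can serve site-$2$ demand, so $x$ is common and your computation goes through unchanged.

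\textbf{The sitewise-totals bullet cannot be rescued.} In the paper's model, requests within a step are filled in order, as in the pseudo-code of \GPA{}. Knowing $\{D_i\}$, an algorithm can request $N_i=(D_i-b_i)_+$ for $i=1,\dots,n$ in index order at $t=1$. The hub's truncation then reproduces $L_i^\star=\min\{N_i,(s-\sum_{j<i}N_j)_+\}$ exactly. Its cost is therefore at most $\cost(\OPT)+\sum_i w_i\le \cost(\OPT)+\sum_i pc_i$, so no construction can establish that bullet.

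The missing idea is that a hard pair must agree not only on the advice and the demand prefix, but on every hub response up to the decision point. Instances differing only in $s$ do not satisfy this.
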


Motivated by \cref{prop:advice-too-weak}, we assume access to predictions $\hat{s}$ and $\{\hat{D}_i\}_{i \in [n]}$ of the total supply and total demand at each site.
We use these predictions in \cref{alg:prediction} to construct a threshold vector $\bar{\gamma}$, which is then used as input to \cref{alg:proportional-allocation}.
The key idea is to mimic the structure of $\OPT$ using the predicted quantities.
To hedge against potentially inaccurate predictions, we introduce a \emph{distrust hyperparameter} $\lambda \in (0, \frac{1}{3}]$.
Smaller values of $\lambda$ correspond to placing greater trust in the predictions, while $\lambda = \frac{1}{3}$ recovers the fully robust, advice-free setting.

\begin{algorithm}[htb]
\caption{Prediction-guided $\bar{\gamma}$ construction}
\label{alg:prediction}
\begin{algorithmic}[1]
\Require \texttt{OSSA} problem parameters, predictions $\hat{s}$ and $\{\hat{D}_i\}_{i \in [n]}$, distrust hyperparameter $\lambda \in \left(0,\frac{1}{3} \right]$
\Ensure Threshold vector $\bar{\gamma} = \{ \gamma_i \}_{i \in [n]}$
    \State For each $i \in [n]$, define net demand beyond initial stock  $\hat{N}_i = (\hat{D}_i - k_i^1)_+$
    \If{$\hat{s} \geq \sum_{j=1}^n \hat{N}_j$}
        Define $\hat{i}^\star = n$ and $\hat{\zeta} = 1$
        \Comment{Estimate $i^\star$ and $\zeta$}
    \Else{}
        Define $\hat{i}^\star = \min \{ i \in [n] : \sum_{j=1}^i \hat{N}_j \geq \hat{s} \}$
        and $\hat{\zeta} = (\hat{s} - \sum_{j=1}^{\hat{i}^\star-1} \hat{N}_j)_+ / \hat{N}_{\hat{i}^\star} \in (0,1]$
    \EndIf
    \State Define $\tau = ( \sqrt{1 + \lambda} -\sqrt{\lambda} )^2$ \Comment{For $\lambda \in (0, \frac{1}{3}]$, we have $\lambda \leq \tau \in [\frac{1}{3}, 1)$}
    \State Define $\texttt{lower}_i = \min \{ 1, \frac{\lambda p c_i}{w_i} \}$, $\texttt{upper}_i = \min \{ 1, \frac{\tau p c_i}{w_i} \}$, and \Comment{Clip $\gamma_{\hat{i}^\star} = \hat{\zeta}$ within bounds}
    \[
    \gamma_i
    = \begin{cases}
    \texttt{upper}_i & \text{if $i \in \{1, \ldots, \hat{i}^\star - 1\}$}\\
    \min\{\texttt{upper}_i, \max\{\hat{\zeta}, \texttt{lower}_i \}\} & \text{if $i = \hat{i}^\star$}\\
    \texttt{lower}_i & \text{if $i \in \{\hat{i}^\star + 1, \ldots, n\}$}
    \end{cases}
    \]
    \State \Return $\bar{\gamma} = \{ \gamma_i \}_{i \in [n]}$
\end{algorithmic}
\end{algorithm}

Our choice of \texttt{upper} and \texttt{lower} correspond to the two lines in the right plot in \cref{fig:alpha-versus-tau}, and robustness follows directly by ensuring that $\bar{\gamma}$ lies in the gray area.
The following theorem establishes the performance guarantees of the learning-augmented algorithm $\LAGPA(\lambda)$ attained by running \GPA{} using $\bar{\gamma}$ generated from \cref{alg:prediction}.

\begin{restatable}{theorem}{learningaugmented}
\label{thm:learning-augmented}
Consider the predictions $\{\hat{s}$, $\{ \hat{D}_i \}_{i \in [n]}\}$ for the given \texttt{OSSA} instance with prediction error $\eta = | s - \hat{s} | + \sum_{i=1}^n | D_i - \hat{D}_i | \geq 0$.
Given a distrust hyperparameter $\lambda \in (0, \frac{1}{3}]$, the learning-augmented algorithm $\LAGPA(\lambda)$ uses the above predictions and has the following guarantees:\\
\phantom{\hspace{10pt}}1. Robustness: $\cost(\LAGPA(\lambda)) \leq (1 + \frac{(1 - \lambda)^2}{4 \lambda}) \cdot \cost(\OPT) + O(\sum_{i \in [n]} (b_i + c_i))$\\
\phantom{\hspace{10pt}}2. Consistency / Smoothness: $\cost(\LAGPA(\lambda)) \leq (1 + \lambda) \cdot \cost(\OPT) + 3 \eta p + O( \sum_{i \in [n]} p(b_i + c_i))$
\end{restatable}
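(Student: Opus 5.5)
Robustness follows from where the thresholds of \cref{alg:prediction} land; consistency and smoothness require a direct comparison with $\OPT$ in each regime.

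\textbf{Robustness.} Every $\gamma_i$ produced by \cref{alg:prediction} satisfies $\texttt{lower}_i \le \gamma_i \le \texttt{upper}_i$. The middle case is clipped, and $\texttt{lower}_i \le \texttt{upper}_i$ because $\lambda \le \tau$. If the hub is not exhausted, $\gamma_i \ge \min\{1, \lambda p c_i/w_i\}$, so \cref{lem:generic-nonexhausted} with parameter $\lambda$ gives ratio $1 + \frac{(1-\lambda)^2}{4\lambda}$. If the hub is exhausted, $\gamma_i \le \min\{1, \tau p c_i/w_i\}$, so \cref{lem:generic-exhausted} gives ratio $1+\tau$. The choice $\tau = (\sqrt{1+\lambda}-\sqrt{\lambda})^2$ is exactly the solution of $1+\tau = 1+\frac{(1-\lambda)^2}{4\lambda}$. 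One checks that $\sqrt{\tau} = \frac{1-\lambda}{2\sqrt{\lambda}}$ up to the identity $(\sqrt{1+\lambda}-\sqrt\lambda)(\sqrt{1+\lambda}+\sqrt\lambda)=1$, and I would verify this algebra carefully. Hence both regimes give the robustness bound, with additive term $\sum_i 3p(b_i+c_i)$.

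\textbf{Consistency and smoothness.} I would split on exhaustion again.

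\emph{Exhausted hub.} Since $\gamma_i \le \texttt{upper}_i$, the proof of \cref{lem:generic-exhausted} via \cref{lem:structural} (items 2 and 3) gives $(1+\tau)$ directly. That is not good enough, so the transport gap must be sharpened using predictions. For $i > \hat i^\star$ we have $\gamma_i = \texttt{lower}_i \le \lambda p c_i/w_i$, and item 2 with $\lambda$ in place of $\tau$ bounds their excess transport by $\lambda\cdot$penalty. For $i < \hat i^\star$ we have $\gamma_i \le 1$. If the predictions are accurate, $\OPT$ also serves these sites fully ($\gamma^\star_i = 1$), so their extra transport relative to $\OPT$ is only additive, roughly $\frac{w_i}{c_i}|N_i - \hat N_i| \le p|D_i-\hat D_i|$ since $w_i \le pc_i$. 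The pivotal site is handled in the same way, with an error of $p(|s-\hat s| + \sum_j |D_j-\hat D_j|)$ coming from the mismatch between $\hat\zeta$ and $\zeta$.

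\emph{Non-exhausted hub.} I would use the sitewise bounds \eqref{eq:sitewise-transport-when-hub-not-exhausted}–\eqref{eq:sitewise-penalty-when-hub-not-exhausted} and compare site by site with $\OPT$'s cost, $\frac{w_i}{c_i}\gamma_i^\star N_i + p(1-\gamma_i^\star)N_i$.
\begin{itemize}
\item For $i < \hat i^\star$ with $\texttt{upper}_i = 1$, the algorithm costs at most $\frac{w_i}{c_i}D_i$. $\OPT$ pays at least this up to $p|D_i-\hat D_i|$ if $i < i^\star$, and otherwise the supply mismatch charges the difference to $\eta$.
\item For $i < \hat i^\star$ with $\texttt{upper}_i = \tau p c_i/w_i < 1$, the site cost is $\tau pD_i + (1-\gamma_i)pD_i \le pD_i$, whereas $\OPT$ pays at least $\frac{w_i}{c_i}N_i$. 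This costs a factor $1/\tau$, so the argument must exploit $\frac{w_i}{c_i} > \tau p$ more carefully.
\item For sites beyond $\hat i^\star$, where $\OPT$ pays penalty $pN_i$, the algorithm pays at most $\lambda pD_i + (1-\gamma_i)pD_i \le (1+\lambda)pD_i$ up to additive terms.
\end{itemize}
Aggregating the errors gives at most $3\eta p$: one $\eta$ from the supply prediction shifting the pivot, and roughly two from demand errors, entering both the transport and the penalty accounting.

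\textbf{Main obstacle.} The hardest step is the sites $i < \hat i^\star$ with $w_i/c_i$ close to $p$, where $\texttt{upper}_i < 1$. There the algorithm is deliberately under-serving relative to $\OPT$. In the non-exhausted regime I would try to show that the extra penalty $(1-\tau)pD_i$ over transport $\tau pD_i$ still stays within $(1+\lambda)$ of $\OPT$'s cost $\frac{w_i}{c_i}D_i \ge \tau p D_i$. The fallback is to bound these sites through the relation between $\tau$ and $\lambda$, possibly at the price of adjusting the constant in front of $\eta$.
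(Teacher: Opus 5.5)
There is a genuine gap in the consistency part, and it traces back to a false identity for $\tau$. Your robustness argument is the paper's, but the identity you assert is wrong. The defining property of $\tau = (\sqrt{1+\lambda}-\sqrt{\lambda})^2$ is $\frac{(1-\tau)^2}{4\tau} = \lambda$ (equivalently $\tau^{-1/2}-\tau^{1/2} = 2\sqrt{\lambda}$), not $\tau = \frac{(1-\lambda)^2}{4\lambda}$. The two agree only at $\lambda = \frac{1}{3}$; for $\lambda = 0.2$ one gets $\tau \approx 0.42$, not $0.8$.

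Robustness survives, because it only needs $\tau \le \frac{(1-\lambda)^2}{4\lambda}$. This follows from $4\lambda\tau = (1-\tau)^2 \le (1-\lambda)^2$, using $\lambda \le \tau \le 1$.

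However, the correct identity is exactly what resolves your ``main obstacle'', which as written is left unproved. Your fallback of inflating the constant in front of $\eta$ cannot work, since at $\eta = 0$ nothing can be charged to the error term. Here is the missing step. In the non-exhausted regime, take any site with $\gamma_i = \texttt{upper}_i = \min\{1, \tau p c_i/w_i\}$; this covers all $i < \hat{i}^\star$, and the pivot whenever it is clipped at $\texttt{upper}_i$. The sitewise bound in the proof of \cref{lem:generic-nonexhausted} gives $\cost_i(\bar{\ell}) \le \bigl(\gamma_i + (1-\gamma_i) y_i\bigr)\frac{w_i}{c_i} N_i + O(p(b_i+c_i))$, where $y_i = pc_i/w_i$. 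For $\gamma_i = \tau y_i < 1$ the coefficient equals $y_i(1+\tau-\tau y_i) \le \frac{(1+\tau)^2}{4\tau} = 1 + \frac{(1-\tau)^2}{4\tau} = 1+\lambda$ (this is \cref{lem:helpful-manipulation}). When $\texttt{upper}_i = 1$ the coefficient is $1$.

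Since $\cost_i(\OPT) \ge \frac{w_i}{c_i} N_i$ unconditionally (each unit of net demand costs $\OPT$ either $w_i/c_i$ or $p \ge w_i/c_i$), these sites are $(1+\lambda)$-competitive with nothing charged to $\eta$. This also makes the charging in your first bullet unnecessary. Your estimate $\cost_i \le pD_i$ loses the factor $1/\tau$ only because it discards the penalty term $(1-\tau y_i)pD_i$, which vanishes exactly as $w_i/c_i \to \tau p$.

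The second gap is the aggregation into $3\eta p$. Your per-site charges do not sum to $O(\eta p)$ when the predicted pivot is far from $i^\star$. These charges are an error $p(|s-\hat{s}| + \sum_j |D_j-\hat{D}_j|)$ at the pivot, and an only-additive loss for sites before $\hat{i}^\star$ ``if the predictions are accurate''. When the pivots are far apart, whole blocks of sites have $\hat{\gamma}_i = 1$ and $\gamma^\star_i = 0$ (or vice versa), and each must absorb its entire $N_i$.

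The paper handles this with a dichotomy plus one global lemma:
\begin{itemize}
\item If $s^{\mathrm{end}} > 0$, every site has either $\gamma_i = \texttt{upper}_i$ (handled above) or $\gamma_i \ge \hat{\gamma}_i$. In the latter case, $\cost_i(\bar{\ell}) \le pN_i - \hat{\gamma}_i N_i (p - \frac{w_i}{c_i}) + O(p(b_i+c_i))$, compared against the relaxed $\cost_i(\OPT) = pN_i - \gamma^\star_i N_i(p - \frac{w_i}{c_i})$. So the excess is at most $pN_i|\gamma^\star_i - \hat{\gamma}_i|$.
\item If $s^{\mathrm{end}} = 0$, every site has either $\gamma_i = \texttt{lower}_i$ or $\gamma_i \le \hat{\gamma}_i$. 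The first case uses item 2 of \cref{lem:structural} with $\lambda$ in place of $\tau$. In the second case the transport excess is at most $pN_i|\gamma^\star_i-\hat{\gamma}_i|$. Item 3 handles penalties.
\end{itemize}
The global lemma is $\sum_i N_i|\gamma^\star_i - \hat{\gamma}_i| \le |s-\hat{s}| + 3\sum_i |D_i - \hat{D}_i| \le 3\eta$. It is proved by $\ell_1$-stability of the greedy prefix allocation, in three pieces:
\begin{itemize}
\item Allocations are nested in the supply, so changing $s$ to $\hat{s}$ moves the allocation by at most $|s-\hat{s}|$ in $\ell_1$.
\item Perturbing one net demand by $\delta$ moves the allocation by at most $2|\delta|$.
\item Converting allocations back to fractions costs another $|N_i - \hat{N}_i|$.
\end{itemize}
Your heuristic count gestures at this, but this stability lemma is the missing ingredient.
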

\vspace{-10pt}
\begin{proof}[Proof sketch]
Our analysis once again separates the regimes of $s^{\mathrm{end}} > 0$ and $s^{\mathrm{end}} = 0$.
The robustness guarantee follows directly from \cref{lem:generic-nonexhausted} and \cref{lem:generic-exhausted} under our choice of $\bar{\gamma}$; see \cref{fig:alpha-versus-tau} for intuition.
Meanwhile, the consistency guarantee is obtained by showing that the prediction-induced deviation satisfies $\sum_{i=1}^nN_i \lvert \gamma_i^\star - \gamma_i \rvert \leq 3 \eta$, where $\bar{\gamma}^\star$ is an optimal offline threshold vector.
\end{proof}

The first guarantee ensures robustness to adversarial predictions, matching the worst-case bound of the advice-free guarantees in \cref{thm:online-upper-bound}.
The second guarantee shows that the algorithm smoothly interpolates toward optimal performance as the prediction error decreases.
In particular, when $\eta = 0$ and $\lambda \to 0$, we obtain $\cost(\ALG) \leq \cost(\OPT) + O(\sum_{i \in [n]} p(b_i + c_i))$, achieving 1-consistency of $\alpha = 1$ up to the unavoidable additive term.
The inclusion of the distrust hyperparameter $\lambda$ is not merely a modeling convenience, but a fundamental necessity for learning-augmented algorithms in environments where the reliability of a prediction is unknown, e.g., \cite{gollapudi2019online,wang2020online,angelopoulos2020online,shin2023improved}; see \cref{sec:conclusion} for further discussion.
Furthermore, our additive dependence on the prediction error is standard in this literature, e.g., see \cite{wang2020online}.

Finally, we complement \cref{thm:learning-augmented} by showing that the achieved tradeoff between consistency and robustness is Pareto optimal: any improvement in one necessarily degrades the other.

\begin{restatable}{proposition}{paretopoint}
\label{prop:pareto-point}
Consider the predictions $\{\hat{s}$, $\{ \hat{D}_i \}_{i \in [n]}\}$ for the given \texttt{OSSA} instance with prediction error $\eta = | s - \hat{s} | + \sum_{i=1}^n | D_i - \hat{D}_i | \geq 0$.
For any $\lambda \in (0, \frac{1}{3}]$ and $0 < \eps < 1$, no (possibly randomized) online algorithm $\ALG$ with access to the above predictions can simultaneously achieve:\\
\phantom{\hspace{10pt}}1. For all $\eta \geq 0$, we have $\E[\cost(\ALG)] \leq (1 + \frac{(1 - \lambda)^2}{4 \lambda}) \cdot \cost(\OPT) + O( \sum_{i \in [n]} p(b_i + c_i))$\\
\phantom{\hspace{10pt}}2. If $\eta = 0$, then $\E[\cost(\ALG)] \leq (1 + \lambda \eps) \cdot \cost(\OPT) + O( \sum_{i \in [n]} p(b_i + c_i))$
\end{restatable}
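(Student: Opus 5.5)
We argue via Yao's minimax principle: we build a distribution over a few instances that share the same predictions and agree on everything observed up to some decision point. Any deterministic algorithm must then commit to an allocation level before it can tell the instances apart. The natural gadget is a single high-cost site, or a site paired with a zero-cost site, with ratio $\rho = w/(pc)$ tuned to $\lambda$. In one instance the predictions are exactly correct ($\eta=0$). In the other, the supply or demand deviates from the prediction, so $\eta>0$ and only the robustness guarantee (item~1) applies. We scale demands so that $\cost(\OPT)\gg \sum_i p(b_i+c_i)$, which makes the additive $O(\cdot)$ terms negligible.

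\textbf{Construction.} The consistent instance has demand at the site grow to total $D$. The predictions $\hat{D}=D$ and $\hat s$ are chosen so that $\OPT$ allocates nothing to this expensive site: either $\hat s$ is consumed by a cheaper site, or $w/c$ is large enough that $\OPT$ prefers penalties. Any online allocation fraction $\gamma$ at that site then costs extra transport $\approx \gamma D w/c$ above $\OPT$. Requiring item~2 with ratio $1+\lambda\eps$ therefore forces $\gamma$ to be small, roughly $\gamma \lesssim \lambda\eps\,\cost(\OPT)/(Dw/c)$.

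The adversarial instance is identical up to the decision point, but the true supply is large and $\OPT$ would serve the site fully. Since the predictions are unchanged, the algorithm cannot distinguish it and must use the same small $\gamma$. It then pays a penalty $\approx (1-\gamma)pD$, while $\OPT$ pays only $Dw/c=\rho pD$. Choosing $\rho$ as in the non-exhausted tightness analysis behind \cref{lem:generic-nonexhausted}, with $\tau=\lambda$ and the blend $(1-\gamma)p D + \gamma D w/c$, yields a worst-case ratio exceeding $1+\frac{(1-\lambda)^2}{4\lambda}$ whenever $\gamma<\lambda$-level allocation. The choice is $\rho = (1+\lambda)/2$-type, the maximizer of $\frac{1-\gamma+\gamma\rho}{\rho}$-style expressions. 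This contradicts item~1.

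\textbf{Key steps.} First, fix the two-instance family and verify the predictions are identical, hence uninformative, before divergence. Second, compute $\OPT$ on both using the pivotal-index characterization of \cref{sec:online}. Third, reduce a randomized algorithm to its expected allocated fraction $\bar\gamma=\E[L/D]$ by linearity: both costs are affine in $L$ up to ceiling effects, which are absorbed in $O(p c)$. Fourth, derive the constraints $\bar\gamma\le g_1(\lambda\eps)$ from consistency and $\bar\gamma\ge g_2(\lambda)$ from robustness, and show $g_1<g_2$ for every $\eps<1$.

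\textbf{Main obstacle.} The hard step is calibrating the gadget so that these two inequalities meet exactly at the curve $\alpha=1+\frac{(1-\lambda)^2}{4\lambda}$ versus $1+\lambda$, i.e.\ reproducing the tangency underlying \cref{fig:alpha-versus-tau}. A single ratio $\rho$ probably does not suffice. I would first try a continuum of sites, or demand revealed in stages, with the adversary stopping at a time chosen from a suitable distribution, mirroring the $4/3$ lower-bound construction. I would then optimize the stopping distribution so that the robustness constraint binds at $\tau=(\sqrt{1+\lambda}-\sqrt\lambda)^2$.
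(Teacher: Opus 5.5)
You have the paper's skeleton: two instances with identical predictions and an identical observable prefix, a zero-cost site paired with a site of cost ratio $\rho = w_2/(pc_2)$, and a second instance that punishes the cap. Consistency on the $\eta=0$ instance caps $\E[X]$, where $X$ is what you send to the expensive site during its demand block. You also do not need Yao here, since $X$ has the same distribution in both instances.

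\textbf{Gap 1: your divergence breaks indistinguishability.} You let the second instance differ in the true supply $s$. In \OSSA{} the algorithm learns of exhaustion as soon as a request is short-filled, and there are no holding costs. So at time $1$ it can request $\hat s + 1$ units for the cheap site. In your consistent instance this is exactly what $\OPT$ does with the supply anyway, since the cheap site's later demand absorbs it. The probe is therefore free, and it reveals which instance is being played, after which the algorithm is near-optimal on both.

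Your single-site alternative is unavailable too. Since $w_i \le pc_i$, $\OPT$ never strictly prefers penalties, so allocating there creates no excess cost. The paper instead keeps $s$ identical in both instances and deletes the later demand block at the cheap site in the second one. Then $\eta = |D_1 - \hat{D}_1| > 0$, yet nothing observable differs until $X$ has been committed.

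\textbf{Gap 2: your calibration is wrong, and the ``main obstacle'' is a misdiagnosis.} Write $t = \lambda\eps$, $x = \E[X]/K$, and $R_\lambda = 1+\frac{(1-\lambda)^2}{4\lambda} = \frac{(1+\lambda)^2}{4\lambda}$. Up to $O(1/K)$ terms, the two guarantees give
\[
\rho x \le t \qquad\text{and}\qquad 1-(1-\rho)x \le R_\lambda \rho ,
\]
and these are incompatible iff $R_\lambda\rho^2-(1+t)\rho+t<0$.

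With your $\rho=(1+\lambda)/2$, the second instance forces a ratio of at most $1/\rho = 2/(1+\lambda)$. That is below $R_\lambda$ for every $\lambda<\sqrt{5}-2$, whatever $\eps$ is.

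The paper sets $b_i=c_i=1$, $w_1=0$, $w_2=\frac{2t}{1+t}p$. Consistency then gives $\E[X]\le\frac{1+t}{2}K$ plus a $K$-independent constant. The second instance then forces ratio $\frac{(1+t)^2}{4t}=R_t>R_\lambda$, because $u\mapsto\frac{(1+u)^2}{4u}$ is strictly decreasing on $(0,1)$ and $t<\lambda$. Your own phrase ``with $\tau=\lambda$'' points to $\rho=\frac{2\lambda}{1+\lambda}$, which also works: the quadratic equals $\frac{(1-\lambda)(t-\lambda)}{1+\lambda}<0$.

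So a single two-site gadget suffices; it is the $4/3$ construction with $w_2=p/2$, retuned. No staged or continuum construction is needed. The quantity $\tau=(\sqrt{1+\lambda}-\sqrt{\lambda})^2$ is the upper-bound algorithm's exhausted-regime threshold and plays no role in this lower bound. Finally, take $K$ large so that the gap $\frac{2t}{1+t}(R_t-R_\lambda)Kp$ swamps the additive $O(\sum_i p(b_i+c_i))$ terms.
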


\section{Experiments}
\label{sec:experiments}

We empirically validate $\texttt{GPA}$ against natural baselines while demonstrating the impact of $\lambda$ and $\eta$ in the learning-augmented setting; see \cref{fig:experiments}.
Across all experiments, \texttt{GPA} outperforms the other methods, especially when global supply is scarce.
We provide full experimental details, introduce the baselines, show additional synthetic experiments, and discuss the experimental results in \cref{sec:appendix-experiments}.

\begin{figure}[htb]
    \centering
    \includegraphics[width=\linewidth]{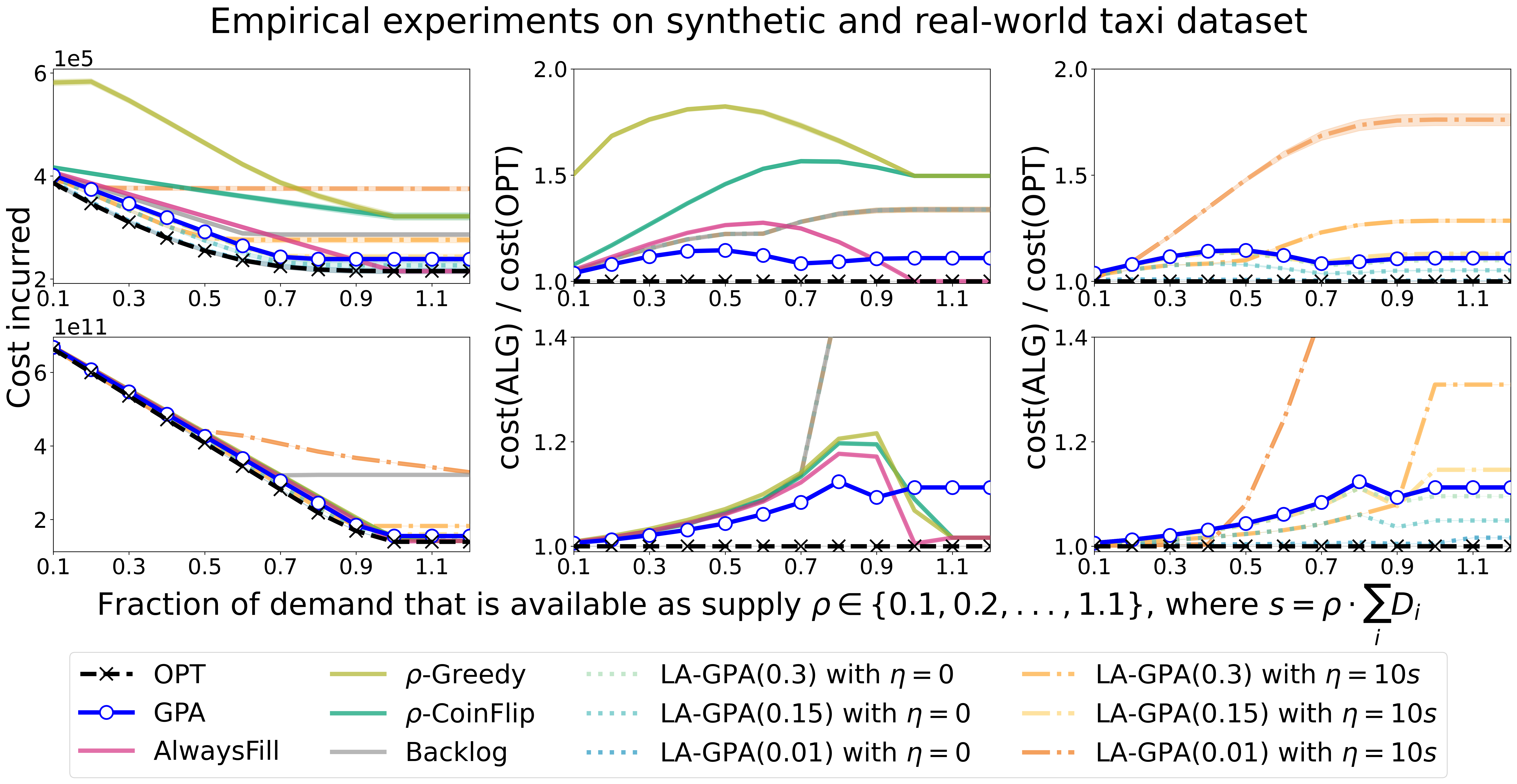}
    \caption{\textbf{Top row}: Synthetic experiments over 30 runs. \textbf{Bottom row}: Real-world taxi dataset \cite{nyc_tlc_trip_data} repurposed for \OSSA{}. The left column records cost incurred by all policies as a function of the total supply available. Meanwhile, the ratios $\cost(\ALG) / \cost(\OPT)$ are separated across two plots for visual clarity. The middle plot compares the ratio of \texttt{GPA} (blue) with other baselines while the right plot compares \texttt{GPA} with different distrust hyperparameters $\lambda \in (0, 1/3]$ and advice quality $\eta \geq 0$.}
    \label{fig:experiments}
\end{figure}

\section{Conclusion, discussions, and future directions}
\label{sec:conclusion}

We introduced the \OSSA{} problem, a model for online prepositioning from a shared and potentially unknown supply.
By bridging the gap between classical multi-echelon inventory theory and modern online allocation, \OSSA{} provides a rigorous model for critical systems such as humanitarian aid and medical supply chains.
Our main contribution is a deterministic threshold-proportional policy \GPA{}, along with a tight characterization of its performance: \GPA{} achieves a \(4/3\)-approximation to the offline optimum, up to an additive term independent of the total supply.
We complemented this result with matching lower bounds showing that both the multiplicative ratio and the additive-error dependence are essentially unavoidable.
We also extended the framework to the learning-augmented setting, showing how imperfect predictions of total supply and sitewise demand can improve performance while preserving worst-case robustness.
These results suggest that carefully tuned proportional rules can be remarkably effective in the face of deep uncertainty.

Several promising directions for future research remain.
While our model assumes a single hub and item type, real-world logistics often involve hierarchical networks and multiple commodities sharing limited transportation capacity.
Extending to such generalized settings present an interesting challenge.
Furthermore, while we treated the total supply as fixed but unknown, studying settings with stochastic replenishment or the option to purchase emergency supply at a premium would broaden the model's applicability.
Finally, our learning-augmented approach utilizes a fixed distrust hyperparameter $\lambda$ to navigate the consistency-robustness tradeoff.
In settings where advice can be partially validated online, such as stochastic or distributional demand models, it may be possible to develop adaptive ``Test-and-Act'' mechanisms to replace the need for a static $\lambda$ distrust hyperparameter, e.g., see \cite{choo2023active,choo2024online,bhattacharyya2025learning,bhattacharyya2025product,choo2024learning}.
Developing such adaptive, advice-aware algorithms for stateful supply-allocation problems is an interesting direction for future work.

\begin{ack}
This work was supported by ONR MURI N00014-24-1-2742.
\end{ack}

\bibliography{refs}

@article{rao2017immunization,
  title={{Immunization supply chains: Why they matter and how they are changing}},
  author={Rao, Raja and Schreiber, Benjamin and Lee, Bruce Y.},
  journal={Vaccine},
  year={2017}
}

@article{balcik2008facility,
  title={{Facility location in humanitarian relief}},
  author={Balcik, B. and Beamon, B. M.},
  journal={International Journal of Logistics},
  year={2008}
}

@book{sherbrooke2004optimal,
  title={{Optimal Inventory Modeling of Systems: Multi-Echelon Techniques}},
  author={Sherbrooke, Craig C.},
  year={2004},
  publisher={Springer}
}

@article{zipkin2008old,
  title={{Old and New Methods for Lost-Sales Inventory Systems}},
  author={Zipkin, Paul},
  journal={Operations Research},
  year={2008}
}

@article{tezuka2026impact,
  title={{The impact of U.S. foreign aid reduction on global health}},
  author={Tezuka, Tomomi and Ito, Naomi and Takahashi, Kenzo},
  journal={Tropical Medicine and Health},
  volume={54},
  number={1},
  pages={33},
  year={2026},
  publisher={Springer}
}

@misc{betterworldcampaign,
  author={{Better World Campaign}},
  title={{The Impact of Foreign Aid Cuts}},
  note={Available at \url{https://betterworldcampaign.org/impact-of-foreign-assistance-cuts}}
}

@article{arkin1989computational,
  title={{Computational Complexity of Uncapacitated Multi-Echelon Production Planning Problems}},
  author={Arkin, Esther and Joneja, Dev and Roundy, Robin},
  journal={Operations Research Letters},
  year={1989}
}

@article{roundy198598,
  title={{98\%-Effective Integer-Ratio Lot-Sizing for One-Warehouse Multi-Retailer Systems}},
  author={Roundy, Robin},
  journal={Management Science},
  year={1985}
}

@article{mehta2007adwords,
  title={{AdWords and generalized online matching}},
  author={Mehta, Aranyak and Saberi, Amin and Vazirani, Umesh and Vazirani, Vijay},
  journal={Journal of the ACM (JACM)},
  year={2007}
}

@inproceedings{devanur2009adwords,
  title={{The adwords problem: online keyword matching with budgeted bidders under random permutations}},
  author={Devanur, Nikhil R. and Hayes, Thomas P.},
  booktitle={Conference on Electronic Commerce (EC)},
  year={2009}
}

@article{mehta2010online,
  title={{Online Matching and Ad Allocation}},
  author={Mehta, Aranyak},
  journal={Foundations and Trends in Theoretical Computer Science},
  year={2013}
}

@article{federgruen1992joint,
  title={{The joint replenishment problem with general joint cost structures}},
  author={Federgruen, Awi and Zheng, Yu-Sheng},
  journal={Operations Research},
  year={1992}
}

@article{buchbinder2013online,
  title={{Online Make-to-Order Joint Replenishment Model: Primal-Dual Competitive Algorithms}},
  author={Buchbinder, Niv and Kimbrel, Tracy and Levi, Retsef and Makarychev, Konstantin and Sviridenko, Maxim},
  journal={Operations Research},
  year={2013}
}

@inproceedings{levi2004primal,
  title={{Primal-dual algorithms for deterministic inventory problems}},
  author={Levi, Retsef and Roundy, Robin and Shmoys, David B.},
  booktitle={Symposium on Theory of Computing},
  year={2004}
}

@article{khouja2008review,
  title={{A review of the joint replenishment problem literature: 1989–2005}},
  author={Khouja, Moutaz and Goyal, Suresh},
  journal={European Journal of Operational Research},
  year={2008}
}

@article{peng2022review,
  title={{A review of the joint replenishment problem from 2006 to 2022}},
  author={Peng, Lu and Wang, Lin and Wang, Sirui},
  journal={Management System Engineering},
  year={2022}
}

@inproceedings{moseley2025putting,
  title={{Putting Off the Catching Up: Online Joint Replenishment Problem with Holding and Backlog Costs}},
  author={Moseley, Benjamin and Niaparast, Aidin and Ravi, R.},
  booktitle={Symposium on Discrete Algorithms (SODA)},
  year={2025}
}

@inproceedings{azar2026online,
  title={{Online Joint Replenishment Problem with Arbitrary Holding and Backlog Costs}},
  author={Azar, Yossi and Lewkowicz, Shahar},
  booktitle={Symposium on Discrete Algorithms (SODA)},
  year={2026}
}

@book{borodin2005online,
  title={{Online Computation and Competitive Analysis}},
  author={Borodin, Allan and El-Yaniv, Ran},
  year={2005},
  publisher={Cambridge University Press}
}

@article{levi2008constant,
  title={{A Constant Approximation Algorithm for the One-Warehouse Multiretailer Problem}},
  author={Levi, Retsef and Roundy, Robin and Shmoys, David and Sviridenko, Maxim},
  journal={Management Science},
  year={2008}
}

@article{lykouris2021competitive,
  title={{Competitive Caching with Machine Learned Advice}},
  author={Lykouris, Thodoris and Vassilvitskii, Sergei},
  journal={Journal of the ACM (JACM)},
  volume={68},
  number={4},
  pages={1--25},
  year={2021},
  publisher={ACM New York, NY}
}

@inproceedings{gollapudi2019online,
  title={{Online Algorithms for Rent-or-Buy with Expert Advice}},
  author={Gollapudi, Sreenivas and Panigrahi, Debmalya},
  booktitle={International Conference on Machine Learning (ICML)},
  pages={2319--2327},
  year={2019},
}

@article{wang2020online,
  title={{Online Algorithms for Multi-shop Ski Rental with Machine Learned Advice}},
  author={Wang, Shufan and Li, Jian and Wang, Shiqiang},
  journal={Neural Information Processing Systems (NeurIPS)},
  volume={33},
  pages={8150--8160},
  year={2020}
}

@inproceedings{angelopoulos2020online,
  title={{Online Computation with Untrusted Advice}},
  author={Angelopoulos, Spyros and D{\"u}rr, Christoph and Jin, Shendan and Kamali, Shahin and Renault, Marc},
  booktitle={Innovations in Theoretical Computer Science Conference (ITCS)},
  year={2020},
  organization={Schloss Dagstuhl-Leibniz-Zentrum f{\"u}r Informatik}
}

@inproceedings{shin2023improved,
  title={{Improved Learning-Augmented Algorithms for the Multi-Option Ski Rental Problem via Best-Possible Competitive Analysis}},
  author={Shin, Yongho and Lee, Changyeol and Lee, Gukryeol and An, Hyung-Chan},
  booktitle={International Conference on Machine Learning (ICML)},
  year={2023}
}

@article{antoniadis2020secretary,
  title={{Secretary and Online Matching Problems with Machine Learned Advice}},
  author={Antoniadis, Antonios and Gouleakis, Themis and Kleer, Pieter and Kolev, Pavel},
  journal={Neural Information Processing Systems (NeurIPS)},
  volume={33},
  pages={7933--7944},
  year={2020}
}

@inproceedings{dutting2021secretaries,
  title={{Secretaries with Advice}},
  author={D{\"u}tting, Paul and Lattanzi, Silvio and Paes Leme, Renato and Vassilvitskii, Sergei},
  booktitle={ACM Conference on Economics and Computation},
  pages={409--429},
  year={2021}
}

@inproceedings{choo2024online,
  title={{Online bipartite matching with imperfect advice}},
  author={Choo, Davin and Gouleakis, Themis and Ling, Chun Kai and Bhattacharyya, Arnab},
  booktitle={International Conference on Machine Learning (ICML)},
  pages = {8762--8781},
  year={2024},
}

@article{choo2025learning,
  title={Learning-Augmented Online Bipartite Fractional Matching},
  author={Choo, Davin and Jin, Billy and Shin, Yongho},
  journal={Conference on Neural Information Processing Systems (NeurIPS)},
  year={2025}
}

@article{mitzenmacher2022algorithms,
  author={Mitzenmacher, Michael and Vassilvitskii, Sergei},
  title={{Algorithms with Predictions}},
  journal={Communications of the ACM},
  publisher={Association for Computing Machinery (ACM)},
  volume={65},
  number={7},
  pages={33--35},
  year={2022}
}

@article{sleator1985amortized,
  title={{Amortized efficiency of list update and paging rules}},
  author={Sleator, Daniel D. and Tarjan, Robert E.},
  journal={Communications of the ACM},
  volume={28},
  number={2},
  pages={202--208},
  year={1985},
  publisher={ACM New York, NY, USA}
}

@inproceedings{manasse1988competitive,
  title={{Competitive algorithms for on-line problems}},
  author={Manasse, Mark and McGeoch, Lyle and Sleator, Daniel},
  booktitle={Symposium on Theory of Computing},
  year={1988}
}

@inproceedings{karp1990optimal,
  title={{An Optimal Algorithm for On-line Bipartite Matching}},
  author={Karp, Richard M. and Vazirani, Umesh V. and Vazirani, Vijay V.},
  booktitle={Symposium on Theory of Computing (STOC)},
  year={1990}
}

@inproceedings{yao1977probabilistic,
  title={{Probabilistic computations: Toward a unified measure of complexity}},
  author={Yao, Andrew Chi-Chin},
  booktitle={Annual Symposium on Foundations of Computer Science (FOCS)},
  year={1977}
}

@inproceedings{bienkowski2014better,
  title={{Better Approximation Bounds for the Joint Replenishment Problem}},
  author={Bienkowski, Marcin and Byrka, Jaroslaw and Chrobak, Marek and Je{\.z}, {\L}ukasz and Nogneng, Dorian and Sgall, Ji{\v{r}}{\'\i}},
  booktitle={Symposium on Discrete Algorithms (SODA)},
  year={2014}
}

@misc{nyc_tlc_trip_data,
  title={TLC Trip Record Data},
  author={{NYC Taxi and Limousine Commission}},
  note={Available at \url{https://www.nyc.gov/site/tlc/about/tlc-trip-record-data.page}}
}

@phdthesis{choo2024learning,
  title={{Learning Probabilistic and Causal Models with (out) Imperfect Advice}},
  author={Choo, Davin},
  year={2024},
  school={National University of Singapore (NUS)}
}

@inproceedings{bhattacharyya2025product,
  title={{Product distribution learning with imperfect advice}},
  author={Bhattacharyya, Arnab and Choo, Davin and John, Philips George and Gouleakis, Themis},
  booktitle={Neural Information Processing Systems (NeurIPS)},
  year={2025}
}

@inproceedings{bhattacharyya2025learning,
  title={{Learning multivariate Gaussians with imperfect advice}},
  author={Bhattacharyya, Arnab and Choo, Davin and John, Philips George and Gouleakis, Themis},
  booktitle={International Conference on Machine Learning (ICML)},
  year={2025}
}

@inproceedings{choo2023active,
  title={{Active causal structure learning with advice}},
  author={Choo, Davin and Gouleakis, Themistoklis and Bhattacharyya, Arnab},
  booktitle={International Conference on Machine Learning (ICML)},
  year={2023}
}
\bibliographystyle{alpha}


\appendix
\section{Deferred proofs}
\label{sec:appendix-proofs}

Here, we give the formal proofs that we have deferred from the main paper.

Let us introduce additional notation that will be useful.
For each site $i \in [n]$, $D_i^m(\bar{\ell})$, $D_i^u(\bar{\ell})$, and $s_i^{\mathrm{end}}$ be the total met demand and total unmet demand, and amount of remaining supply at the end of the demand arrival process respectively.

\subsection{Proofs for \texorpdfstring{\cref{sec:advice-free}}{Section 5}}

We will prove \cref{lem:structural} before \cref{lem:generic-nonexhausted} and \cref{lem:generic-exhausted}, relying on the following helper lemmas.

\begin{lemma}[Bounded terminal inventory]
\label{lem:bounded-terminal-inventory}
Fix any hyperparameter $\bar{\gamma} = \{ \gamma_i \}_{i \in [n]}$ such that $\gamma_i \in [0,1]$ and let $\bar{\ell}$ be the online allocation of $\bar{\gamma}\mathrm{PA}$.
For every site $i \in [n]$, we have $s_i^{\mathrm{end}}(\bar\ell) \leq b_i + c_i$.
\end{lemma}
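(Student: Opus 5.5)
We will show the stronger invariant that the inventory never exceeds $b_i + c_i$ at any time; the terminal bound $s_i^{\mathrm{end}}(\bar\ell) \leq b_i + c_i$ is then the special case at the last step. Here $s_i^{\mathrm{end}}$ is the leftover stock at site $i$. Concretely, we prove by induction on $t$ that $k_i^{t} \leq b_i + c_i$ for all $t \geq 1$, and also that $r_i^t \leq b_i + c_i$. The base case is immediate from the normalization $k_i^1 = b_i$.

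For the inductive step, fix $t$ and recall that $r_i^t = (k_i^t - d_i^t)_+ \leq k_i^t$. We split into two cases according to whether site $i$ is eligible at time $t$.

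\emph{Case $i \notin R^t$.} Then $\ell_i^t = 0$, so $k_i^{t+1} = r_i^t \leq k_i^t \leq b_i + c_i$ by the inductive hypothesis.

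\emph{Case $i \in R^t$.} Eligibility gives $r_i^t < b_i$. The algorithm requests $q_i^t = \lceil (b_i - r_i^t)/c_i \rceil$ full shipments and receives $\ell_i^t \leq c_i q_i^t$. Using $\lceil x \rceil < x + 1$, we get
\[
c_i q_i^t < (b_i - r_i^t) + c_i .
\]
Hence
\[
k_i^{t+1} = r_i^t + \ell_i^t < r_i^t + (b_i - r_i^t) + c_i = b_i + c_i .
\]
A possible truncation by hub exhaustion only lowers $\ell_i^t$, so it only helps the bound.

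The $\gamma_i$-threshold condition in $R^t$ plays no role here, which is why the lemma holds for arbitrary $\bar\gamma$. The only step needing care is matching the algorithm's order of events: replenishment is triggered only when post-demand inventory drops below $b_i$, and the ceiling overshoot is strictly less than one shipment. Once those are aligned with the induction, the argument is routine.

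Finally, the leftover inventory at the end of the horizon is $r_i^T \leq k_i^T \leq b_i + c_i$, or $k_i^{T+1} \leq b_i + c_i$ depending on convention. In either case this gives the claim.
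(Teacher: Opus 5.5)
Your proof is correct and follows the paper's argument: at each resupply step, $c_i\lceil (b_i - r_i^t)/c_i\rceil \le b_i - r_i^t + c_i$ gives $k_i^{t+1} \le b_i + c_i$, and hub truncation only lowers $\ell_i^t$. Your explicit induction, which also covers the non-resupply steps where inventory can only decrease, makes rigorous what the paper leaves implicit when it says ``since this holds for any arbitrary time step.''
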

\begin{proof}
Fix an arbitrary resupply step $t \geq 1$ with current leftover stock $r_i^t < b_i$.
$\bar{\gamma}\mathrm{PA}$ requests and receives at most $c_i q_i^{t}$ units of supply, where $q_i^{t} = \left\lceil \frac{b_i - r_i^{t}}{c_i} \right\rceil$ is the number of full shipments requested.
Thus,
\[
k_i^{t+1}
\leq r_i^t + c_i q_i^{t}
= r_i^t + c_i \left\lceil \frac{b_i - r_i^{t}}{c_i} \right\rceil
\leq r_i^t + c_i \left( \frac{b_i - r_i^{t}}{c_i} + 1 \right)
= b_i + c_i
\]
Since this holds for any arbitrary time step, we have $s_i^{\mathrm{end}}(\bar\ell) \leq b_i + c_i$.
\end{proof}

\begin{lemma}[Upper bound on cumulative allocation]
\label{lem:upper-bound-on-cumulative}
Fix any hyperparameter $\bar{\gamma} = \{ \gamma_i \}_{i \in [n]}$ such that $\gamma_i \in [0,1]$ and let $\bar{\ell}$ be the online allocation of $\bar{\gamma}\mathrm{PA}$.
For every site $i \in [n]$ and time step $t \in \N$, we have $L_i^t \leq \gamma_i D_i^t + b_i+c_i$.
In particular, we have $L_i \leq \gamma_i D_i + b_i + c_i$.
\end{lemma}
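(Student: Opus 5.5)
We argue by considering the last time step at which the cumulative allocation increases. Fix a site $i$ and a time $t$.

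If site $i$ has never received supply up to time $t$, then $L_i^t = 0$ and the bound holds trivially. Otherwise, let $t_0 \leq t$ be the last step with $\ell_i^{t_0} > 0$. Then $L_i^t = L_i^{t_0} = L_i^{t_0-1} + \ell_i^{t_0}$.

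Since site $i$ received supply at step $t_0$, it belonged to the eligible set $R^{t_0}$. Eligibility gives two facts:
\begin{itemize}
\item[(a)] $L_i^{t_0-1} \leq \gamma_i D_i^{t_0}$;
\item[(b)] $\ell_i^{t_0} \leq c_i q_i^{t_0} = c_i \lceil (b_i - r_i^{t_0})/c_i \rceil < b_i - r_i^{t_0} + c_i \leq b_i + c_i$, using $r_i^{t_0} \geq 0$.
\end{itemize}
This is the same ceiling estimate used in \cref{lem:bounded-terminal-inventory}.

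Combining (a) and (b) gives
\[
L_i^t \leq \gamma_i D_i^{t_0} + b_i + c_i \leq \gamma_i D_i^t + b_i + c_i .
\]
The last inequality holds because cumulative demand $D_i^t$ is non-decreasing in $t$ and $\gamma_i \geq 0$. The truncation $\ell_i^{t_0} = \min\{s, c_i q_i^{t_0}\}$ only decreases the allocation, so it does not affect the bound.

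For the final claim, apply the bound at the last time step of the demand arrival process. Equivalently, take the limit over $t$: $L_i^t$ and $D_i^t$ stabilize at $L_i$ and $D_i$, and the inequality $L_i \leq \gamma_i D_i + b_i + c_i$ is preserved.

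The only care needed is the index convention in the eligibility condition. The condition compares $L_i^{t-1}$, the allocation before step $t$'s shipment, with $D_i^t$, the demand including step $t$. Hence only the single shipment at $t_0$ can overshoot the proportional target, and the overshoot is at most $b_i + c_i$.
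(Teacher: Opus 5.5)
Your proof is correct and is essentially the paper's argument. The paper inducts on $t$: when $i \notin R^t$ it carries the bound forward, and when $i \in R^t$ it combines the eligibility condition $L_i^{t-1} \leq \gamma_i D_i^t$ with the single-shipment bound $c_i q_i^t \leq b_i - r_i^t + c_i$, which is exactly your last-shipment argument written step by step. Your version also avoids two index slips in the paper's write-up: it writes $q_i^{t-1}$ where $q_i^t$ is meant, and it drops the $b_i$ from the induction hypothesis.
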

\begin{proof}
We prove $L_i^t \leq \gamma_i D_i^t + b_i + c_i$ by induction over $t \in \N$.
The claim follows by applying the induction repeatedly across all time steps, where $L_i^t$ becomes $L_i$ and $D_i^t$ becomes $D_i$.

At the base case, when $t = 0$, we have $L_i^0 = D_i^0 = 0$ across all sites $i \in [n]$.

Now, consider an arbitrary time step $t \geq 1$.
Recall that $\bar{\gamma}\mathrm{PA}$ only resupplies sites in the set $R^t = \{ i \in [n]: \text{$r_i^{t} < b_i$ and $L_i^{t-1} \leq \gamma_i D_i^{t}$} \}$, requesting $c_i q_i^t$ units of resupply for each site $i \in R^t$.
We show that the induction step holds for both $i \not\in R^t$ and $i \in R^t$.

If $i \not\in R^t$, then
\begin{align*}
L_i^{t}
&= L_i^{t-1} \tag{Since $i \not\in R^t$}\\
&\leq \gamma_i D_i^{t-1} + c_i \tag{Induction hypothesis}\\
&\leq \gamma_i D_i^{t} + c_i \tag{Since $D_i^{t} \geq D_i^{t-1}$}\\
&\leq \gamma_i D_i^{t} + b_i + c_i \tag{Since $b_i \geq 0$}
\end{align*}

If $i \in R^t$, then
\begin{align*}
L_i^{t}
&\leq L_i^{t-1} + c_i q_i^{t-1} \tag{Since $\bar{\gamma}\mathrm{PA}$ requests $\ell_i^{t-1} \leq c_i q_i^{t-1}$ units of resupply at $t-1$}\\
&\leq \gamma_i D_i^t + c_i q_i^{t-1} \tag{Since $i \in R^t$}\\
&\leq \gamma_i D_i^t + c_i \left( \frac{b_i - r_i^{t-1}}{c_i} + 1 \right) \tag{Since $q_i^{t-1} = \left\lceil \frac{b_i - r_i^{t-1}}{c_i} \right\rceil$}\\
&= \gamma_i D_i^t + b_i - r_i^{t-1} + c_i\\
&\leq \gamma_i D_i^t + b_i + c_i \tag{Since $r_i^{t-1} \geq 0$}
\end{align*}
\end{proof}
    
\begin{lemma}[Lower bound on cumulative allocation]
\label{lem:lower-bound-on-cumulative-when-hub-not-exhausted}
Fix any hyperparameter $\bar{\gamma} = \{ \gamma_i \}_{i \in [n]}$ such that $\gamma_i \in [0,1]$ and let $\bar{\ell}$ be the online allocation of $\bar{\gamma}\mathrm{PA}$.
If $s^{\mathrm{end}} > 0$, then we have $L_i \geq \gamma_i D_i$ for all sites $i \in [n]$.
\end{lemma}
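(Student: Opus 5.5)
We argue by contradiction, using an inventory-balance argument on the final stretch of the horizon during which site $i$ is eligible for resupply. Fix a site $i$ and let $T$ be the last time step. Since $s^{\mathrm{end}} > 0$, the hub is never exhausted. So every request made by \GPA{} is filled in full: $\ell_i^t = c_i q_i^t$ whenever $i \in R^t$. The key consequence is a \emph{no-stockout property}. If $i \in R^t$, then $k_i^{t+1} \geq b_i$. Otherwise $r_i^t \geq b_i$ already holds whenever the proportionality condition holds. Either way $k_i^{t+1} \geq b_i \geq d_i^{t+1}$, so demand at step $t+1$ is fully met.

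Suppose for contradiction that $L_i < \gamma_i D_i$. Let $t_0$ be the last time step $t \leq T$ at which the proportionality test fails, i.e.\ $L_i^{t-1} > \gamma_i D_i^t$.

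\textbf{Case $t_0$ does not exist.} The test passes at every step. By the no-stockout property (and $k_i^1 = b_i$ at the start), all demand is met and the final inventory is at least $b_i$. The conservation identity
\[
b_i + L_i = (\text{met demand}) + s_i^{\mathrm{end}}
\]
then gives $L_i \geq D_i \geq \gamma_i D_i$, using $\gamma_i \leq 1$. This is a contradiction.

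\textbf{Case $t_0$ exists.} By monotonicity of $L_i^t$, we have $L_i \geq L_i^{t_0-1} > \gamma_i D_i^{t_0}$. Every step in $(t_0, T]$ passes the test, so the no-stockout property holds from step $t_0+2$ onward. It remains to show that the allocation made after $t_0$ covers $\gamma_i$ times the demand $D_i - D_i^{t_0}$ arriving after $t_0$. Apply inventory balance on $[t_0+1, T]$:
\[
L_i - L_i^{t_0} = (\text{met demand on } [t_0+1,T]) + s_i^{\mathrm{end}} - k_i^{t_0+1}, \qquad s_i^{\mathrm{end}} \geq b_i.
\]
Also $\ell_i^{t_0} = 0$, since $i \notin R^{t_0}$, so $k_i^{t_0+1} = r_i^{t_0}$. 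If $r_i^{t_0} < b_i$, the right-hand side is at least $D_i - D_i^{t_0}$, using $d_i^{t_0+1} \leq b_i$ to handle the first step. Combined with $L_i^{t_0} > \gamma_i D_i^{t_0}$ and $\gamma_i \leq 1$, this gives $L_i \geq \gamma_i D_i$.

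\textbf{Main obstacle.} The delicate subcase is $r_i^{t_0} \geq b_i$, where the carried-over inventory $r_i^{t_0} - b_i$ (which can be up to $c_i$) enters negatively. Here I would not account from $t_0$. Instead I would move the accounting point forward to the first step $t_1 > t_0$ at which $r_i^{t_1} < b_i$. If no such step exists, then no allocation occurs after $t_0$, so $L_i = L_i^{t_0-1} > \gamma_i D_i^{t_0}$. The task is then to show $D_i - D_i^{t_0}$ is small enough, or to argue directly via the contradiction hypothesis that the test $L_i^{t-1} \leq \gamma_i D_i^t$ keeps holding. Between $t_0$ and $t_1$, demand is served from surplus inventory that was itself part of $L_i^{t_0-1}$, and no new allocation is made. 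If this boundary bookkeeping does not close exactly, the fallback is to absorb the surplus into the additive slack $b_i + c_i$ already present in \cref{lem:structural}, proving $L_i \geq \gamma_i D_i - O(b_i + c_i)$, which suffices for the downstream bounds.
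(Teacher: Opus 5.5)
Your argument is sound except in the subcase $r_i^{t_0}\ge b_i$, which you leave open, and none of your suggested remedies closes it. In that subcase your inventory balance yields only
$L_i > \gamma_i D_i^{t_0} + (D_i - D_i^{t_0}) - (r_i^{t_0}-b_i)$.
The only fact you keep about the past, $L_i^{t_0-1} > \gamma_i D_i^{t_0}$, says nothing about the surplus $r_i^{t_0}-b_i$, even though that surplus was paid for by this earlier allocation. Moving the accounting point to $t_1$ does not help. No allocation occurs on $[t_0,t_1)$, so $L_i^{t_1-1}=L_i^{t_0-1}$, and the demand served from surplus in between is still uncharged. When no $t_1$ exists, the test passing at $T$ gives $L_i=L_i^{T-1}\le\gamma_i D_i$, which is the wrong direction, so the contradiction hypothesis yields nothing. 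What you would need is $L_i^{t_0-1}\ge\gamma_i D_i^{t_0}+(r_i^{t_0}-b_i)$. That is exactly the surplus-aware invariant $L_i^t\ge\gamma_i D_i^t+(k_i^{t+1}-b_i)_+$, which the paper proves by induction over all $t$, splitting on whether $r_i^t\ge b_i$ and whether the test passes. Your fallback $L_i\ge\gamma_i D_i-O(b_i+c_i)$ is not the lemma as stated. It would also loosen the additive constants asserted in \cref{lem:structural}, such as the sitewise penalty bound, and in the results downstream of it.

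The difficulty comes from how you chose $t_0$. A step where the proportionality test fails but $r_i^t\ge b_i$ is harmless, since then $k_i^{t+1}=r_i^t\ge b_i$ anyway. Instead, let $t_0$ be the last step with $r_i^{t}<b_i$ \emph{and} $L_i^{t-1}>\gamma_i D_i^t$, i.e., the site is low and is refused. For every $t>t_0$, either $r_i^t\ge b_i$, or $i\in R^t$ and the request is filled in full. Either way $k_i^{t+1}\ge b_i$, which is all your balance computation on $[t_0+1,T]$ uses. Since now $r_i^{t_0}<b_i$ by definition, your own computation gives $L_i-L_i^{t_0}\ge D_i-D_i^{t_0}$ (trivially when $t_0=T$). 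Because $\ell_i^{t_0}=0$, this yields $L_i> \gamma_i D_i^{t_0}+(D_i-D_i^{t_0})\ge\gamma_i D_i$. If no such $t_0$ exists, your first case applies verbatim, and no contradiction hypothesis is needed. Repaired this way, your proof is a short non-inductive ``last bad time'' argument, a genuinely different route from the paper's. The paper's induction proves the stronger statement that cumulative allocation also covers any excess stock above $b_i$.
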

\begin{proof}
Fix a site $i \in [n]$.
Recall that $\bar{\gamma}\mathrm{PA}$ only resupplies sites in the set $R^t = \{ i \in [n]: \text{$r_i^{t} < b_i$ and $L_i^{t-1} \leq \gamma_i D_i^{t}$} \}$, requesting $c_i q_i^t$ units of resupply for each site $i \in R^t$.
Under $s^{\mathrm{end}} > 0$, site $i$ will \emph{always} be receive the full $c_i q_i^t$ units of resupply when $i \in R^t$.

We now prove inductively that $L_i^t \geq \gamma_i D_i^t  + (k_i^{t+1}-b_i)_{+}$ by induction over $t \in \N$. Hence, when the algorithm ends, $L_i^t =L_i$ and $D_i^t=D_i$ and since $(k_i^{t+1} - b_i)_{+} \geq 0$, we obtain $L_i \geq \gamma_i D_i $. 

At the base case, when $t = 0$, we have $L_i^0 = D_i^0$ and $k_i^1 = b_i \geq 0$, so
\[
L_i^0
\geq \gamma_i D_i^0 + (k_i^1 - b_i)_{+}
= \gamma_i D_i^0
\]

Now, consider an arbitrary time step $t \geq 1$.
We analyze the following three cases separately:
\begin{enumerate}
    \item $r_i^t \geq b_i$
    \item $r_i^t < b_i$ and $L^{t-1}_i \geq \gamma_i D^t_i$
    \item $r_i^t < b_i$ and $L^{t-1}_i < \gamma_i D^t_i$ 
\end{enumerate}

\paragraph{Case 1: $r_i^t \geq b_i$.}
In this case, $i \not\in R^t$, so $L_i^t = L_i^{t-1}$ and $k_i^{t+1} = r_i^t$.
Moreover, since $r_i^t \geq b_i$, we have $r_i^t = k_i^t - d_i^t$, and so $k_i^{t+1} = r_i^t = k_i^t - d_i^t$.
Thus,
\begin{align*}
L_i^t
&= L_i^{t-1}\\
&\geq \gamma_i D_i^{t-1}  + (k_i^t-b_i)_{+} \tag{By induction hypothesis}\\
&= \gamma_i D_i^{t-1}  + (k_i^t-b_i) \tag{Since \(b_i\leq r_i^t \leq k_i^t\)}\\
&= \gamma_i (D_i^{t}-d_i^t) + (k_i^{t+1}+d_i^t-b_i) \tag{Since $D_i^{t-1} = D_i^t - d_i^t$ and $k_i^t = k_i^{t+1} + d_i^t$}\\
&= \gamma_i D_i^t + (k_i^{t+1}-b_i) + (1-\gamma_i)d_i^t\\
&\geq \gamma_i D_i^t + (k_i^{t+1}-b_i) \tag{(Since $\gamma_i \leq 1$ and $d_i^t \geq 0$)}\\
&= \gamma_i D_i^t + (k_i^{t+1}-b_i)_{+} \tag{Since $k_i^{t+1}=r_i^t\geq b_i$}
\end{align*}

\paragraph{Case 2: $r_i^t < b_i$ and $L^{t-1}_i \geq \gamma_i D^t_i$.}
In this case, $i \not\in R^t$, so $L_i^t = L_i^{t-1}$ and $k_i^{t+1} = r_i^t < b_i$.
Therefore,
\begin{align*}
L_i^t
&= L_i^{t-1}\\
&\geq \gamma_i D_i^t \tag{By the case premise}\\
&= \gamma_i D_i^t + (k_i^{t+1}-b_i)_{+} \tag{Since $k_i^{t+1} = r_i^t < b_i$}
\end{align*}

\paragraph{Case 3: \(r_i^t < b_i\) and \(L^{t-1}_i < \gamma_i D^t_i\).}
In this case, $i \in R^t$.
Let the resupply amount be $\ell_i^t$.
Since the resupply brings the stock level to at least $b_i$, we have
\begin{equation}\label{eq:str1}
k_i^{t+1} \geq b_i
\end{equation}

We now split into two subcases depending on whether \(k_i^t \geq b_i\) or
\(k_i^t < b_i\).

\begin{enumerate}
    \item Suppose $k_i^t \geq b_i$.
    Since $r_i^t=(k_i^t-d_i^t)_+$, and in this case $r_i^t > 0$, we have
    $r_i^t=k_i^t-d_i^t$.

    Therefore,
    \begin{align*}
    L_i^t
    &= L_i^{t-1} + \ell_i^t\\
    &\geq \gamma_i D_i^{t-1} + k_i^t - b_i + \ell_i^t \tag{By induction hypothesis}\\
    &= \gamma_i(D_i^t - d_i^t) + d_i^t + r_i^t - b_i + \ell_i^t \tag{Since $D_i^t = D_i^{t-1} + d_i^t$ and $r_i^t=k_i^t-d_i^t$}\\
    &= \gamma_i D_i^t +(1-\gamma_i) d_i^t + k_i^{t+1} - b_i \tag{Since $k_i^{t+1} = r_i^t + \ell_i^t$}\\
    &\geq \gamma_i D_i^t + k_i^{t+1}-b_i \tag{Since $\gamma_i \leq 1$ and $d_i^t \geq 0$}\\
    &= \gamma_i D_i^t+(k_i^{t+1}-b_i)_+ \tag{By \cref{eq:str1}}
    \end{align*}

    \item Suppose $k_i^t < b_i$.
    If $r_i^t = 0$, then $d_i^t \leq b_i = b_i - r_i^t$.
    Meanwhile, if $r_i^t > 0$, then $r_i^t = k_i^t - d_i^t < b_i - d_i^t$.
    In either case, regardless of whether $r_i^t = 0$ or $r_i^t > 0$, we have
    \begin{equation}\label{eq:str4}
        b_i-r_i^t \geq d_i^t .
    \end{equation}
    Therefore,
    \begin{align*}
    L_i^t
    &= L_i^{t-1} + \ell_i^t\\
    &\geq \gamma_i D_i^{t-1} + \ell_i^t \tag{By the induction hypothesis}\\
    &= \gamma_i(D_i^t - d_i^t) + \ell_i^t \tag{Since $D_i^t = D_i^{t-1} +  d_i^t$}\\
    &\geq \gamma_i (D_i^t - (b_i - r_i^t) ) + \ell_i^t \tag{By \cref{eq:str4} and $\gamma_i \geq 0$)}\\
    &= \gamma_i D_i^t - \gamma_i(b_i - r_i^t) + k_i^{t+1} - r_i^t \tag{Since $k_i^{t+1} = r_i^t + \ell_i^t$}\\
    &= \gamma_i D_i^t + k_i^{t+1} - b_i + (1-\gamma_i)(b_i - r_i^t)\\
    &\geq \gamma_i D_i^t + k_i^{t+1} - b_i \tag{Since $\gamma_i \leq 1$ and $r_i^t < b_i$}\\
    &= \gamma_i D_i^t + (k_i^{t+1} - b_i)_+ \tag{By \cref{eq:str1}}
    \end{align*}
\end{enumerate}
\end{proof}
\begin{lemma}
\label{lem:helpful-manipulation}
If $0 < \tau \leq 1$, $w_i \leq p c_i$, and $\frac{\tau p c_i}{w_i} \leq \delta \leq 1$, then $(1 - \delta) (\frac{p c_i}{w_i} - 1) \leq \frac{(1 - \tau)^2}{4 \tau}$.
\end{lemma}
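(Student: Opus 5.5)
We substitute $x = \frac{p c_i}{w_i}$; the hypothesis $w_i \leq p c_i$ gives $x \geq 1$, so $x - 1 \geq 0$. (If $w_i = 0$ we read $x = \infty$; then $\tau x \leq \delta \leq 1$ forces this case out unless $\tau x$ is interpreted as $\infty$, so we treat $w_i > 0$ as the substantive case.) The hypothesis then reads $\tau x \leq \delta \leq 1$.

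Since $x - 1 \geq 0$, the left-hand side $(1-\delta)(x-1)$ is non-increasing in $\delta$. It is therefore maximized at the smallest admissible $\delta$, namely $\delta = \tau x$. This requires $\tau x \leq 1$, which is implied by the existence of such a $\delta$. It thus suffices to bound the one-variable function
\[
f(x) = (1 - \tau x)(x - 1), \qquad x \in [1, 1/\tau].
\]

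Expanding gives $f(x) = -\tau x^2 + (1+\tau)x - 1$, a concave quadratic. Its maximizer over $\mathbb{R}$ is $x^\ast = \frac{1+\tau}{2\tau}$, with value
\[
f(x^\ast) = \frac{(1+\tau)^2}{4\tau} - 1 = \frac{(1-\tau)^2}{4\tau}.
\]
Since the unconstrained maximum already bounds $f$ on the restricted interval, the claim follows immediately. (One can also check that $x^\ast \in [1, 1/\tau]$ because $\tau \leq 1$, although this is not needed for the upper bound.)

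There is no real obstacle. The only point needing care is the monotonicity step: it relies on $x - 1 \geq 0$, i.e., on $w_i \leq p c_i$, to justify replacing $\delta$ by its lower bound $\tau x$. The degenerate case $w_i = 0$ should also be handled or excluded explicitly.
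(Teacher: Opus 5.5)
Your proof is correct and follows the paper's argument essentially step for step: substitute $y = pc_i/w_i \geq 1$, use $\delta \geq \tau y$ and $y - 1 \geq 0$ to replace $(1-\delta)$ by $(1-\tau y)$, then bound the concave quadratic by its vertex value $\frac{(1-\tau)^2}{4\tau}$ (the paper writes this last step as completing the square). Your note about excluding the degenerate case $w_i = 0$ is extra care that the paper leaves implicit.
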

\begin{proof}
Let us define $y_i = \frac{p c_i}{w_i} \geq 1$.
Then,
\begin{align*}
(1 - \delta) \left( \frac{p c_i}{w_i} - 1 \right)
&= (1 - \delta) (y_i - 1) \tag{Since $y_i = \frac{p c_i}{w_i}$}\\
&\leq (1 - \tau y_i) (y_i - 1) \tag{Since $\delta \geq \frac{\tau p c_i}{w_i} = \tau y_i$ and $y_i \geq 1$}\\
&= -\tau \left( y_i - \frac{1 + \tau}{2 \tau} \right)^2 + \frac{(1 - \tau)^2}{4 \tau} \tag{Algebraic manipulation}\\
&\leq \frac{(1 - \tau)^2}{4 \tau} \tag{Since $\tau > 0$}
\end{align*}
\end{proof}

We are now ready to prove our structural lemma \cref{lem:structural}.
    
\structural*
\begin{proof}
We prove each property one by one.

\textbf{Property 1.}
Since $s^{\mathrm{end}} > 0$, all full shipments are fulfilled.
So,
\begin{align*}
\mathrm{transport}_i(\bar{\ell})
&= w_i \left\lceil \frac{L_i}{c_i} \right\rceil \tag{Since $\bar{\gamma}\mathrm{PA}$ always sends full shipments}\\
&\leq \frac{w_i}{c_i} (L_i + c_i)\\
&\leq \frac{w_i}{c_i} (\gamma_i D_i + b_i + 2c_i) \tag{By \cref{lem:upper-bound-on-cumulative}}\\
&\leq \gamma_i D_i \frac{w_i}{c_i} + p (b_i + 2 c_i) \tag{Since $w_i \leq p c_i$}
\end{align*}
Meanwhile, we see that
\begin{align*}
D_i^u(\bar{\ell})
&= D_i - D_i^m(\bar{\ell}) \tag{Since $D_i = D_i^m(\bar{\ell}) + D_i^u(\bar{\ell})$}\\
&= D_i - k_i^1 - L_i + s_i^{\mathrm{end}}(\bar{\ell}) \tag{Since $k_i^1 + L_i = D_i^m + s_i^{\mathrm{end}}(\bar{\ell})$}\\
&\leq D_i - k_i^1 - L_i + b_i + c_i \tag{By \cref{lem:bounded-terminal-inventory}}\\
&\leq D_i - k_i^1 - \gamma_i D_i + b_i + c_i \tag{By \cref{lem:lower-bound-on-cumulative-when-hub-not-exhausted}, since $s^{\mathrm{end}} > 0$}\\
&= (1 - \gamma_i) D_i + c_i \tag{Since $k_i^1 = b_i$}
\end{align*}
So, $\mathrm{penalty}_i(\bar{\ell}) = p D_i^u \leq (1-\gamma_i) p D_i + p c_i$ as desired.

\textbf{Property 2.}
Fix an arbitrary site $i \in [n]$.
\begin{align*}
&\; \mathrm{transport}_i(\bar{\ell}) - \mathrm{transport}_i(\OPT)\\
\leq &\; w_i \left\lceil \frac{L_i}{c_i} \right\rceil - w_i \left\lceil \frac{L_i^\star}{c_i} \right\rceil \tag{Since $\bar{\gamma}\mathrm{PA}$ always requests for full trucks}\\
\leq &\; w_i + \frac{w_i}{c_i} (L_i - L_i^\star)\\
\leq &\; w_i + \frac{w_i}{c_i} (\gamma_i D_i + b_i + c_i - L_i^\star) \tag{By \cref{lem:upper-bound-on-cumulative}}\\
\leq &\; 2 w_i + \frac{2 w_i b_i}{c_i} + \frac{w_i}{c_i} (\gamma_i D_i - b_i - L_i^\star) \tag{Pulling out $2 b_i + c_i$}\\
\leq &\; 2 w_i + \frac{2 w_i b_i}{c_i} + \frac{\gamma_i w_i}{c_i} (D_i - L_i^\star - b_i) \tag{Since $\gamma_i \leq 1$ and $L_i^\star, b_i \geq 0$}\\
\leq &\; 2 w_i + \frac{2 w_i b_i}{c_i} + \frac{\gamma_i w_i}{c_i} (D_i - L_i^\star - b_i)_+\\
= &\; 2 w_i + \frac{2 w_i b_i}{c_i} + \frac{\gamma_i w_i}{p c_i} \cdot \mathrm{penalty}_i(\OPT) \tag{By definition}\\
\leq &\; 2p (b_i + c_i) + \tau \cdot \mathrm{penalty}_i(\OPT) \tag{Since $w_i \leq p c_i$ and $\gamma_i \leq \frac{\tau p c_i}{w_i}$}
\end{align*}

\textbf{Property 3.}
We will upper bound $\mathrm{penalty}(\bar{\ell})$ and lower bound $\mathrm{penalty}(\OPT)$, before combining their implied inequalities.
\begin{align*}
\mathrm{penalty}(\bar{\ell})
&= \sum_{i \in [n]} p(D_i - D_i^m(\bar{\ell})) \tag{By definition}\\
&= \sum_{i \in [n]} p(D_i - L_i - k_i^1 + s_i^{\mathrm{end}}(\bar{\ell})) \tag{Since $D_i^m(\bar{\ell}) = L_i + k_i^1 - s_i^{\mathrm{end}}(\bar{\ell})$}\\
&\leq \sum_{i \in [n]} p(D_i - L_i - k_i^1 + b_i + c_i) \tag{By \cref{lem:bounded-terminal-inventory}}\\
&= \sum_{i \in [n]} p(D_i - L_i + c_i) \tag{Since $k_i^1 = b_i$}\\
&= \sum_{i \in [n]} p(D_i + c_i) - ps \tag{Since $s^{\mathrm{end}} = 0$ implies that $\sum_{i=1}^n L_i = s$}
\end{align*}
Meanwhile,
\begin{align*}
\mathrm{penalty}(\OPT)
&= \sum_{i \in [n]} p(D_i - L_i^\star - b_i) \tag{By definition}\\
&\geq \sum_{i \in [n]} p(D_i - b_i) - ps \tag{Since $\sum_{i=1}^n L_i^\star \leq s$}
\end{align*}
Putting together, we get $\mathrm{penalty}(\bar{\ell}) - \mathrm{penalty}(\OPT) \leq \sum_{i \in [n]} p(b_i + c_i)$ as desired.
\end{proof}

We are now ready to prove \cref{lem:generic-nonexhausted} and \cref{lem:generic-exhausted}.

\genericnonexhausted*
\begin{proof}
It suffices to bound this approximation for each site $i \in [n]$ since $\cost(\bar{\ell}) = \sum_{i=1}^n \cost_i(\bar{\ell})$.
In the rest of this proof, we will prove it with respect to an arbitrary fixed site $i \in [n]$.

We can lower bound $\cost(\OPT)$ in the relaxed setting where the only cost incurred is from transporting $L_i^\star = (D_i - b_i)_+$ supply to site $i$:
\begin{equation}
\label{eq:lower-bound-of-OPT}
\cost_i(\OPT)
\geq \mathrm{transport}_i(\OPT)
= w_i \cdot \left\lceil \frac{(D_i - b_i)_+}{c_i} \right\rceil
\geq (D_i - b_i) \frac{w_i}{c_i}
\end{equation}

Observe that
\begin{align*}
\cost_i(\bar{\ell})
&= \mathrm{transport}_i(\bar{\ell}) + \mathrm{penalty}_i(\bar{\ell})\\
&\leq \gamma_i D_i \frac{w_i}{c_i} + p(b_i + 2c_i) + (1-\gamma_i) p D_i + p c_i \tag{By \cref{eq:sitewise-transport-when-hub-not-exhausted} and \cref{eq:sitewise-penalty-when-hub-not-exhausted}}\\
&= \left( \gamma_i + (1 - \gamma_i) \frac{p c_i}{w_i} \right) \cdot \frac{w_i}{c_i} (D_i - b_i) + \gamma_i \frac{w_i}{c_i} b_i + (1 - \gamma_i) p b_i + p(b_i + 3c_i) \tag{Algebraic manipulation}\\
&\leq \left( \gamma_i + (1 - \gamma_i) \frac{p c_i}{w_i} \right) \cdot \frac{w_i}{c_i} (D_i - b_i) + p (2b_i + 3c_i) \tag{Since $w_i \leq p c_i$}\\
&\leq \left( \gamma_i + (1 - \gamma_i) \frac{p c_i}{w_i} \right) \cdot \cost_i(\OPT) + p (2b_i + 3c_i) \tag{By \cref{eq:lower-bound-of-OPT}}
\end{align*}

That is,
\begin{equation}
\label{eq:upper-bound-of-cost_i}
\cost_i(\bar{\ell})
\leq \left( \gamma_i + (1 - \gamma_i) \frac{p c_i}{w_i} \right) \cdot \cost_i(\OPT) + p (2b_i + 3c_i)
\end{equation}

We consider two cases:
(i) $\frac{\tau p c_i}{w_i} \geq 1$, and
(ii) $\frac{\tau p c_i}{w_i} < 1$.

In case (i), $\gamma_i \geq \min \left\{ 1, \frac{\tau p c_i}{w_i} \right\} = 1$.
Setting $\gamma_i = 1$ in \cref{eq:upper-bound-of-cost_i} yields $\cost_i(\bar{\ell}) \leq \cost_i(\OPT) + p (2b_i + 3c_i)$.
The claim follows because $\tau > 0$ implies that $\frac{(1 - \tau)^2}{4 \tau} \geq 0$, and so $1 \leq 1 + \frac{(1 - \tau)^2}{4 \tau}$.

In case (ii), we see that $\frac{\tau p c_i}{w_i} \leq \gamma_i \leq 1$.
Using \cref{lem:helpful-manipulation} with $\delta = \gamma_i$, we see that $\gamma_i + (1 - \gamma_i) \frac{p c_i}{w_i} = 1 + (1 - \gamma_i) \cdot ( \frac{p c_i}{w_i} - 1) \leq 1 +\frac{(1 - \tau)^2}{4 \tau}$.
Plugging this into \cref{eq:upper-bound-of-cost_i} yields the claim.
\end{proof}

\genericexhausted*
\begin{proof}
Let us define $\Delta(\mathrm{transport}) = \mathrm{transport}(\bar{\ell}) - \mathrm{transport}(\OPT)$ and $\Delta(\mathrm{penalty}) = \mathrm{penalty}(\bar{\ell}) - \mathrm{penalty}(\OPT)$.
Then,
\begin{align*}
\cost(\bar{\ell})
&= \cost(\OPT) + \Delta(\mathrm{transport}) + \Delta(\mathrm{penalty})\\
&\leq \cost(\OPT) + \tau \cdot \mathrm{penalty}(\OPT) + \sum_{i \in [n]} 2p (b_i + c_i) + \Delta(\mathrm{penalty}) \tag{By item 2 of \cref{lem:structural}}\\
&\leq \cost(\OPT) + \tau \cdot \mathrm{penalty}(\OPT) + \sum_{i \in [n]} 2p (b_i + c_i) + \sum_{i \in [n]} p(b_i + c_i) \tag{By item 3 of \cref{lem:structural}}\\
&\leq (1+\tau) \cdot \cost(\OPT) + \sum_{i \in [n]} 3 p (b_i + c_i) \tag{Since $\mathrm{penalty}(\OPT) \leq \cost(\OPT)$}
\end{align*}
\end{proof}

\cref{thm:online-upper-bound} follows immediately by setting $\tau = 1/3$.

\onlineupperbound*
\begin{proof}
Set $\tau = 1/3$, then apply \cref{lem:generic-nonexhausted} and \cref{lem:generic-exhausted}.
\end{proof}

\lowerone*
\begin{proof}
We will prove using Yao's minimax principle \cite{yao1977probabilistic}.
To do so, it suffices to construct a probability distribution $\mathcal{D}$ over \texttt{OSSA} instances such that for any deterministic online algorithm $\ALG$, the ratio of its expected cost over $\mathcal{D}$ to the optimal offline cost $\OPT$ is at least $\alpha$.

As a reminder, the superscripts in our notation are time steps and \emph{not} actual powers.
For instance, $d_i^2$ is the demand arriving at site $i$ at time step $2$.

Suppose, for contradiction, that $\alpha \leq k$ for some constant $k$.

\textbf{\texttt{OSSA} instance parameters.}
Let $n$ be even, $\gamma \geq 1$ be an arbitrary integer, and $s = \frac{n \gamma}{2}$.
Define the remaining \texttt{OSSA} instance parameters for each site $i \in [n]$ as follows:
\[
b_i = c_i = \gamma
\qquad
\text{and}
\qquad
w_i = \frac{p \gamma \eps}{2(k+1)}
\]
Under these parameters, we see that
\begin{equation}
\label{eq:lower1-additive-term}
(1 - \eps)\frac{\sum_{i \in [n]}(b_i + c_i)p}{8}
= (1 - \eps)\frac{np \gamma}{4}    
\end{equation}

\textbf{Demand arrival.}
There is only demand arriving at $t = 1$ and $t = 2$.
Define $d_i^1 = \gamma$ for all $i \in [n]$ so that the initial stock is completely consumed after the first time step at all sites $i \in [n]$.
We now define a distribution over the second-round demand.
Choose a subset $H \subseteq [n]$ uniformly at random among all subsets of size $n/2$, and set
\begin{equation}
\label{eq:lower1-second-round-demand}
d_i^2 = \begin{cases}
\gamma & i \in H\\
0 & i \notin H
\end{cases}
\end{equation}

\textbf{Lower bounding expected cost for any deterministic algorithm.}
Fix an arbitrary deterministic algorithm.
Let $k_i^2$ be the stock at site $i$ just \emph{before} the second-round demand arrives.
Then, over the random choice of $H$, we see that
\begin{align*}
\E_{H} \left[ \sum_{i=1}^n (d_i^2 - k_i^2) \right]
&= \E_{H} \left[ \sum_{i=1}^n \mathbbm{1}[i \in H] \cdot (\gamma - k_i^2) \right] \tag{By \cref{eq:lower1-second-round-demand}}\\
&= \sum_{i=1}^n \E_{H} \left[ \mathbbm{1}[i \in H] \cdot (\gamma - k_i^2) \right] \tag{By linearity of expectation}\\
&= \sum_{i=1}^n \Pr(i \in H) \cdot (\gamma - k_i^2)\\
&= \frac{1}{2} \sum_{i=1}^n (\gamma - k_i^2) \tag{Since $\Pr(i \in H) = 1/2$ for all $i \in [n]$}\\
&\geq \frac{n \gamma}{4} \tag{Since $\sum_{i=1}^n k_i^2 \leq s = \frac{n \gamma}{2}$}
\end{align*}
Hence, any deterministic online algorithm $\ALG$ has expected cost at least
\begin{equation}
\label{eq:lower1-ALG}
\E_{H}[\cost(\ALG)]
\geq \E_{H}[\mathrm{penalty}(\ALG)]
\geq \frac{np \gamma}{4}
\end{equation}

\textbf{Upper bounding cost of $\OPT$.}
On the other hand, the offline optimum $\OPT$ knows the realization of $H$ and can send $\gamma$ units to exactly the sites in $H$ at time $t=1$, so that $k_i^2 = \gamma$ if $i \in H$.
Thus
\begin{equation}
\label{eq:lower1-OPT}
\cost(\OPT)
= \mathrm{transport}(\OPT)
= w_i \cdot \frac{n}{2}
= \frac{np \gamma \eps}{4(k+1)}
\end{equation}

\textbf{Combining.}
By the assumption that $\ALG$ is $\alpha$-competitive with an additive error $\beta = \frac{(1 - \eps)p}{8} \sum (b_i + c_i)$, we rearrange the performance guarantee:
\begin{align*}
\alpha
&\geq \frac{\E[\cost(\ALG)] - \frac{(1 - \eps)p}{8} \cdot \sum_{i \in [n]} (b_i + c_i)}{\cost(\OPT)}\\
&\geq \frac{\frac{np \gamma}{4} - (1-\eps) \frac{np \gamma}{4}}{\frac{np \gamma \eps}{4(k+1)}} \tag{By \cref{eq:lower1-additive-term}, \cref{eq:lower1-ALG}, and \cref{eq:lower1-OPT}}\\
&= k+1
\end{align*}
Since this bound holds for any deterministic algorithm against the distribution $H$, Yao's minimax principle \cite{yao1977probabilistic} tells us that no randomized algorithm can achieve a competitive ratio less than $k+1$ with the given additive error.
This contradicts the assumption $\alpha \leq k$.
\end{proof}

\lowertwo*
\begin{proof}
We will prove using Yao's minimax principle \cite{yao1977probabilistic}.
To do so, it suffices to construct a probability distribution $\mathcal{D}$ over \texttt{OSSA} instances such that for any deterministic online algorithm $\ALG$ achieving $\cost(\ALG) \leq \alpha \cdot \cost(\OPT) + \beta$ with $\alpha = \left( \frac{4}{3} - \eps \right)$ must have $\beta \in \Omega(s p \eps)$.

As a reminder, the superscripts in our notation are time steps and \emph{not} actual powers.
For instance, $d_i^2$ is the demand arriving at site $i$ at time step $2$.

\textbf{\texttt{OSSA} instance parameters.}
Let $n = 2$, $w_1 = 0$, $w_2 = p/2$ and $b_1 = b_2 = c_1 = c_2 = 1$.

\textbf{Demand arrival.}
Define $d_1^1 = d_2^1 = 1$ so that the initial stock is completely consumed after the first time step at both sites.
We define a probability distribution over the two remaining demand arrival events $\mathcal{E}_1$ and $\mathcal{E}_2$ with equal probability.
\begin{enumerate}
    \item \textbf{$\mathcal{E}_1$:}
    Define $d_1^t = 0$ and $d_2^t = 1$ for $t = 2, \ldots, s$, and no further demand arrivals
    \item \textbf{$\mathcal{E}_2$:}
    Define $d_1^t = 0$ and $d_2^t = 1$ for $t = 2, \ldots, s$, and $d_1^t = 1$, and $d_2^t = 0$ for $t = s+1, \ldots, 2s$
\end{enumerate}

\textbf{Lower bounding expected cost for any deterministic algorithm.}
Fix an arbitrary deterministic algorithm $\ALG$.
Let $0 \leq m \leq s$ be the total number of demands at site $2$ served in time steps $t = 2, \ldots, s$.
This incurs a transport cost of $m w_2 = \frac{mp}{2}$.
Note that, under $\mathcal{E}_2$, there is total demand of $2s-1$, so at least $s-1$ demand will be unmet.
Thus,
\begin{align*}
\cost(\ALG \mid \mathcal{E}_1)
&= \mathrm{transport}(\ALG \mid \mathcal{E}_1) + \mathrm{penalty}(\ALG \mid \mathcal{E}_1)
= \frac{mp}{2} + p((s-1) - m)\\
\cost(\ALG \mid \mathcal{E}_2)
&= \mathrm{transport}(\ALG \mid \mathcal{E}_2) + \mathrm{penalty}(\ALG \mid \mathcal{E}_2)
\geq \frac{mp}{2} + p(s-1)
\end{align*}

\textbf{Upper bounding cost of $\OPT$.}
Meanwhile, $\OPT$ sees the event realization and can send $s-1$ supply to site 2 under $\mathcal{E}_1$ and send $s$ supply to site 1 under $\mathcal{E}_1$.
Thus,
\begin{align*}
\cost(\OPT \mid \mathcal{E}_1)
&= \mathrm{transport}(\OPT \mid \mathcal{E}_1) + \mathrm{penalty}(\OPT \mid \mathcal{E}_1)
= (s-1) w_2 + 0
= \frac{p(s-1)}{2}\\
\cost(\OPT \mid \mathcal{E}_2)
&= \mathrm{transport}(\OPT \mid \mathcal{E}_2) + \mathrm{penalty}(\OPT \mid \mathcal{E}_2)
= 0 + p(s-1)
\end{align*}

\textbf{Combining.}
Let $\alpha = \left( \frac{4}{3} - \eps \right)$.
Putting together the above, we see that
\begin{align*}
\cost(\ALG \mid \mathcal{E}_1) - \alpha \cdot \cost(\OPT \mid \mathcal{E}_1)
&\geq p \left( (s-1) - \frac{m}{2} - \frac{\alpha (s-1)}{2} \right)\\
\cost(\ALG \mid \mathcal{E}_2) - \alpha \cdot \cost(\OPT \mid \mathcal{E}_2)
&\geq p \left( (s-1) + \frac{m}{2} - \alpha (s-1) \right)
\end{align*}
So, over in expectation over the events,
\begin{align*}
&\; \E \left[ \cost(\ALG) - \alpha \cdot \cost(\OPT) \right]\\
\geq &\; \frac{p}{2} \left( (s-1) - \frac{m}{2} - \frac{\alpha (s-1)}{2} \right) + \frac{p}{2} \left( (s-1) + \frac{m}{2} - \alpha (s-1) \right) \tag{Since $\Pr(\mathcal{E}_1) = \Pr(\mathcal{E}_2) = 1/2$}\\
= &\; p \left( (s-1) - \frac{3 \alpha (s-1)}{4} \right)\\
= &\; \frac{3 (s-1) p \eps}{4} \tag{Since $\alpha = \left( \frac{4}{3} - \eps \right)$}
\end{align*}
Since this bound holds for any deterministic algorithm against the given demand arrival distribution above, Yao's minimax principle \cite{yao1977probabilistic} tells us that any randomized algorithm achieving $\cost(\ALG) \leq \alpha \cdot \cost(\OPT) + \beta$ with $\alpha = \left( \frac{4}{3} - \eps \right)$ must have $\beta \in \Omega(s p \eps)$.
\end{proof}

\subsection{Proofs for \texorpdfstring{\cref{sec:learning-augmented}}{Section 6}}

\advicetooweak*
\begin{proof}
Suppose, for a contradiction, that there is an online algorithm \textsc{ALG} and a constant $C > 0$ such that $\E[\cost(\textsc{ALG})] \leq \cost(\textsc{OPT}) + C \sum_{i=1}^n p(b_i+c_i)$ on every instance.

We will prove using Yao's minimax principle \cite{yao1977probabilistic}.
To do so, it suffices to construct a probability distribution $\mathcal{D}$ over \texttt{OSSA} instances such that for any deterministic online algorithm $\cost(\textsc{ALG}) > \cost(\textsc{OPT}) + C \sum_{i=1}^n p(b_i+c_i)$.
We will construct a distribution $\mathcal{D}$ over \texttt{OSSA} instances that induce identical predictions (for each prediction type), so even with perfect predictions no algorithm can distinguish among instances drawn from $\mathcal{D}$.

As a reminder, the superscripts in our notation are time steps and \emph{not} actual powers.
For instance, $d_i^2$ is the demand arriving at site $i$ at time step $2$.

\textbf{\texttt{OSSA} instance parameters.}
Let $n = 3$, $w_1 = 0$, $w_2 = p/2$, $w_3= p$ and $b_i = c_i = 1$ for all $i \in [n]$.

\textbf{Demand arrival.}
Define $d_1^1 = d_2^1 = d_3^1 = 1$ so that the initial stock is completely consumed after the first time step at all three sites. 
Then, we define a probability distribution that assigns probability $1/2$ to each of the two remaining demand-arrival events, $\mathcal{E}_1$ and $\mathcal{E}_2$.

We use this construction in two separate cases: first, for site-wise total demand prediction and one-step lookahead prediction; and second, for total supply and total demand prediction. 

\begin{enumerate}
    \item We first tackle the site-wise total demand prediction and one-step lookahead prediction.
    Define constant $K = 28C$.
    For both $\mathcal{E}_1$ and $\mathcal{E}_2$, we have $d_1^t = 0, d_2^t = 1,d_3^t=0$ for timestep $t = 2,\dots, K+1$  and we have $d_1^{t} = 1, d_2^t = 0,d_3^t=0$ for timestep  $t = K+2,\dots, 2K+1$. As the demand arrivals for both events \(\mathcal{E}_1\) and \(\mathcal{E}_2\) are the same, the predictions are unable to inform the online algorithm.  The difference in \(\mathcal{E}_1\) and \(\mathcal{E}_2\) is that $s = k$ under \(\mathcal{E}_1\) and $s = 2k$  \(\mathcal{E}_2\). 

\textbf{Lower bounding expected cost for any deterministic algorithm.}
Fix an arbitrary deterministic algorithm $\ALG$.
Let $0 \leq m \leq  K$ be the total number of demands at site $2$ served in time steps $t = 2, \ldots, K+1$.
This incurs a transport cost of $m w_2 = \frac{mp}{2}$. In \(\mathcal{E}_1\), as $D_1+D_2+D_3 - k_1^1-k_2^1-k_3^1= 2K =s+K$,  hence $\mathrm{penalty}(\ALG\mid \mathcal{E}_1) \geq Kp$. In \(\mathcal{E}_2\) as  only $m$ of the total demand at site $2$ was serve, this incur a penalty of $\mathrm{penalty}(\ALG\mid \mathcal{E}_2) \geq (K-m)p$
\begin{align*}
\cost(\ALG \mid \mathcal{E}_1)
&= \mathrm{transport}(\ALG \mid \mathcal{E}_1) + \mathrm{penalty}(\ALG \mid \mathcal{E}_1)
\geq \frac{mp}{2} + Kp\\
\cost(\ALG \mid \mathcal{E}_2)
&= \mathrm{transport}(\ALG \mid \mathcal{E}_2) + \mathrm{penalty}(\ALG \mid \mathcal{E}_2)
\geq \frac{mp}{2} + (K-m)p
\end{align*}

\textbf{Upper bounding cost of $\OPT$.}
Meanwhile $\OPT$ see the total supply and send no supply to site 2 in $\mathcal{E}_1$ and send $K$ supplies to both site 1 and 2 in $\mathcal{E}_2$.
    \begin{align*}
\cost(\OPT \mid \mathcal{E}_1)
&= \mathrm{transport}(\OPT \mid \mathcal{E}_1) + \mathrm{penalty}(\OPT \mid \mathcal{E}_1)
= Kp\\
\cost(\OPT \mid \mathcal{E}_2)
&= \mathrm{transport}(\OPT \mid \mathcal{E}_2) + \mathrm{penalty}(\OPT \mid \mathcal{E}_2)
= \frac{Kp}{2}
\end{align*}
 \textbf{Combining}.
\begin{align*}
\; \E \left[ \cost(\ALG) - \alpha \cdot \cost(\OPT) \right]
&= \frac{1}{2}\left(\cost(\ALG \mid \mathcal{E}_1) + \cost(\ALG \mid \mathcal{E}_2) - \cost(\OPT \mid \mathcal{E}_1) - \cost(\OPT \mid \mathcal{E}_2) \right)\\
 &\geq 
 \frac{Kp}{4} = 7Cp>6Cp = C \sum_{i =1}^3 (b_i+c_i)p\end{align*}

 \item  We next tackle the total demand and total supply prediction. Let $K = 28 C$. For both \(\mathcal{E}_1\) and \(\mathcal{E}_2\), we have $s = K$. We define a probability distribution over the two remaining demand arrival events $\mathcal{E}_1$ and $\mathcal{E}_2$ with equal probability. 
 
 \begin{enumerate} 
 \item \textbf{$\mathcal{E}_1$:} Define $d_1^t = 0, d_2^t = 1,d_3^t=0$ for timestep $t = 2,\dots, K+1$ and we have $d_1^{t} = 1, d_2^t = 0,d_3^t=0$ for timestep $t = K+2,\dots, 2K+1$. 
 \item \textbf{$\mathcal{E}_2$:} Define $d_1^t = 0, d_2^t = 1,d_3^t=0$ for timestep $t = 2,\dots, K+1$ and we have $d_1^{t} = 0, d_2^t = 0,d_3^t=1$ for timestep $t = K+2,\dots, 2K+1$. 
 \end{enumerate} 
 
 We note that $D_1+D_2+D_3 = 2K+3$ under both \(\mathcal{E}_1\) and \(\mathcal{E}_2\). Hence these predictions are
unable to inform the online algorithm which event will occur.

\textbf{Lower bounding expected cost for any deterministic algorithm.}
Fix an arbitrary deterministic algorithm \(\ALG\). Let \(0\le m\le K\) be the
total number of demands at site \(2\) served in time steps
\(t=2,\ldots,K+1\). This incurs transport cost
\(
mw_2=\frac{mp}{2}.
\) In \(\mathcal{E}_1\), the later demand is at site \(1\). Since site \(1\) has
transport cost zero, serving site \(2\) early does not reduce the total penalty
relative to saving the supply for site \(1\), but it does incur transport cost
\(\frac{mp}{2}\). Since the total demand exceeds supply by \(K\), we have
\(
\mathrm{penalty}(\ALG\mid \mathcal{E}_1)\ge Kp.
\) Therefore,
\[
\cost(\ALG\mid \mathcal{E}_1)
\ge
\frac{mp}{2}+Kp.
\]

In \(\mathcal{E}_2\), the later demand is at site \(3\). Since \(\ALG\) served
only \(m\) units of the site-\(2\) demand during the first block, at least
\(K-m\) units of demand at site \(2\) remain unmet. Also, the final \(K\) units
of demand at site \(3\) cost at least \(Kp\), either as penalty or as transport
cost. Hence
\[
\cost(\ALG\mid \mathcal{E}_2)
\ge
\frac{mp}{2}+(K-m)p+Kp.
\]

\textbf{Upper bounding cost of \(\OPT\).}
Meanwhile, \(\OPT\) sees the full demand sequence in advance. In
\(\mathcal{E}_1\), \(\OPT\) sends the available extra supply to site \(1\),
which has transport cost zero, and pays penalty for the site-\(2\) demand.
Thus
\[
\cost(\OPT\mid \mathcal{E}_1)=Kp.
\]

In \(\mathcal{E}_2\), \(\OPT\) sends the available extra supply to site \(2\),
because site \(2\) has transport cost \(w_2=p/2\), while site \(3\) has
transport cost \(w_3=p\). It then pays penalty for the site-\(3\) demand.
Therefore,
\[
\cost(\OPT\mid \mathcal{E}_2)
=
Kw_2+Kp
=
\frac{Kp}{2}+Kp
=
\frac{3Kp}{2}.
\]

\textbf{Combining.}
\begin{align*}
\E\left[\cost(\ALG)-\cost(\OPT)\right]
&=
\frac12\left(
\cost(\ALG\mid \mathcal{E}_1)-\cost(\OPT\mid \mathcal{E}_1)
\right) \\
&\qquad
+
\frac12\left(
\cost(\ALG\mid \mathcal{E}_2)-\cost(\OPT\mid \mathcal{E}_2)
\right) \\
&\ge
\frac12\cdot \frac{mp}{2}
+
\frac12\left(
\frac{mp}{2}+(K-m)p+Kp-\frac{3Kp}{2}
\right) \\
&=
\frac12\cdot \frac{mp}{2}
+
\frac12\cdot \frac{(K-m)p}{2} \\
&=
\frac{Kp}{4}
=
7Cp
>
6Cp
=
C\sum_{i=1}^3 (b_i+c_i)p.
\end{align*}

\end{enumerate}

\end{proof}

For \cref{thm:learning-augmented}, let $\ALG$ be $\bar{\gamma}\mathrm{PA}$ run on the $\bar{\gamma}$ produced by \cref{alg:prediction} for some $\lambda \in (0, \frac{1}{3})$.
We will prove each part separately and rely on the following helper lemmas.

\begin{lemma}
\label{lem:tau-lambda-relationship}
For $0 < \lambda \leq \frac{1}{3}$.
If $\tau = (\sqrt{1 + \lambda} - \sqrt{\lambda})^2$, then $\frac{(1 - \tau)^2}{4 \tau} = \lambda$ and $\tau \leq \frac{(1 - \lambda)^2}{4 \lambda}$.
\end{lemma}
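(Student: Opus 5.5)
Both claims are elementary algebra, so the plan is a direct computation. For the identity, I will set $u = \sqrt{1+\lambda}$ and $v = \sqrt{\lambda}$, so that $u^2 - v^2 = 1$ and $\tau = (u-v)^2$. The key observation is that $(u-v)(u+v) = 1$, hence $1/\tau = (u+v)^2$. Then
\[
\frac{(1-\tau)^2}{4\tau} = \frac{1}{4}\left(\frac{1}{\sqrt{\tau}} - \sqrt{\tau}\right)^2 = \frac{1}{4}\bigl((u+v) - (u-v)\bigr)^2 = \frac{1}{4}(2v)^2 = v^2 = \lambda .
\]
The only fact used is $\tau > 0$ (true since $u > v$), which lets me write $(1-\tau)^2/\tau = (\tau^{-1/2} - \tau^{1/2})^2$.

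For the inequality $\tau \leq \frac{(1-\lambda)^2}{4\lambda}$, I will use monotonicity instead of expanding. The map $g(x) = \frac{(1-x)^2}{4x}$ is strictly decreasing on $(0,1]$, since $g(x) = \frac{1}{4}(1/x - 2 + x)$ has $g'(x) = \frac{1}{4}(1 - 1/x^2) < 0$ there. The first part gives $g(\tau) = \lambda$. The inequality $\tau \le g(\lambda)$ is therefore equivalent, after applying the decreasing $g$ to both sides (both points lie in $(0,1]$), to $g(\tau) \ge g(g(\lambda))$, that is, to $\lambda \geq g(g(\lambda))$, provided $g(\lambda) \le 1$. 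Directly, though, it is cleaner to show $\lambda \le \tau$ together with $\tau \le 1 \le g(\lambda)$: for $\lambda \le 1/3$ we have $g(\lambda) \ge g(1/3) = \frac{(2/3)^2}{4/3} = 1/3$. That alone is not enough, so I will instead compare $\tau$ and $g(\lambda)$ through $g$ itself.

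\textbf{Cleanest route.} Both $\tau$ and $\lambda$ lie in $(0,1]$, and I will show $\lambda \le \tau$. This holds because $g(\tau) = \lambda \le 1/3 = g(1/3)$, so $\tau \ge 1/3 \ge \lambda$ by monotonicity. Then there are two cases.
\begin{itemize}
\item If $g(\lambda) \geq 1$, the claim is immediate because $\tau \le 1$: we have $u - v < 1$, since $u < 1 + v$ follows from $1 + \lambda < (1+\sqrt{\lambda})^2$.
\item If $g(\lambda) < 1$, apply the decreasing $g$. The claim $\tau \le g(\lambda)$ becomes $\lambda = g(\tau) \ge g(g(\lambda))$. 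Writing $\mu = g(\lambda)$, I need $g(\mu) \le \lambda$ where $\mu \ge 1/3$. Since $\lambda \le 1/3 \le \mu$, monotonicity gives $g(\mu) \le g(1/3) = 1/3$, but I need the sharper bound $\le \lambda$.
\end{itemize}

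The main obstacle is this last comparison. I will handle it by direct expansion in the single variable $\lambda \in (0, 1/3]$. The target $\tau \le (1-\lambda)^2/(4\lambda)$ is equivalent to $4\lambda(1 + 2\lambda - 2\sqrt{\lambda(1+\lambda)}) \le (1-\lambda)^2$, and I will check it by isolating the square root and squaring (both sides stay nonnegative on this range), which reduces it to a polynomial inequality in $\lambda$. As a sanity check, at $\lambda = 1/3$ we get $\tau = 1/3 = g(1/3)$, so equality holds at the endpoint. I expect the polynomial to factor with a root at $\lambda = 1/3$ and to have a fixed sign on $(0, 1/3]$, which would finish the proof.
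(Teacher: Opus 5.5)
Your proof of the identity $\frac{(1-\tau)^2}{4\tau}=\lambda$ is the same as the paper's: rationalize to get $\tau^{-1/2}=\sqrt{1+\lambda}+\sqrt{\lambda}$, then expand $\frac14(\tau^{-1/2}-\tau^{1/2})^2$. Your derivation of $\lambda\le\frac13\le\tau\le1$ is also correct. The gap is in the inequality $\tau\le\frac{(1-\lambda)^2}{4\lambda}$. You leave it as an expectation, and the brute-force check you announce would not go through as stated. Isolating the radical in $4\lambda\tau\le(1-\lambda)^2$ gives
\[
(7\lambda-1)(\lambda+1)\le 8\lambda\sqrt{\lambda(1+\lambda)}.
\]
The left side is negative for $\lambda<\frac17$. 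So it is false that both sides are nonnegative on $(0,\frac13]$, and squaring there is not an equivalence. Correspondingly, the polynomial you would obtain, $15\lambda^3-35\lambda^2+13\lambda-1=(3\lambda-1)(5\lambda^2-10\lambda+1)$, does \emph{not} have a fixed sign on $(0,\frac13]$: it is negative for $\lambda<1-\frac{2}{\sqrt{5}}\approx0.106$.

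The computation can be rescued in either of two ways. You can split at $\lambda=\frac17$: below it the inequality is trivial, and above it both factors are $\le 0$ because $1-\frac{2}{\sqrt{5}}<\frac17$. Alternatively, use the expansion only inside your second case ($g(\lambda)<1$, i.e.\ $\lambda>3-2\sqrt{2}\approx0.172>\frac17$), where squaring is legitimate. As written, though, the decisive step rests on two false claims.

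More to the point, you already had everything needed for the two-line argument the paper uses, and you set it aside too early. The identity you proved says $4\lambda\tau=(1-\tau)^2$. You also showed $\lambda\le\tau\le1$, so $0\le1-\tau\le1-\lambda$ and therefore
\[
4\lambda\tau=(1-\tau)^2\le(1-\lambda)^2 .
\]
Dividing by $4\lambda>0$ gives exactly the claim. No case split on $g(\lambda)$, no composition $g\circ g$, and no squaring of radicals is needed. The paper gets $\lambda\le\tau$ from $4\lambda\le1+\lambda$, i.e.\ $\sqrt{\lambda}\le\sqrt{1+\lambda}-\sqrt{\lambda}$. Your monotonicity argument, $g(\tau)=\lambda\le g(\frac13)\Rightarrow\tau\ge\frac13$, works just as well and gives the slightly stronger $\tau\ge\frac13$.
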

\begin{proof}

    For the first claim, we first note that
\[
\sqrt{1+\lambda}-\sqrt{\lambda}
= \frac{(\sqrt{1+\lambda}-\sqrt{\lambda})
(\sqrt{1+\lambda}+\sqrt{\lambda})}
{\sqrt{1+\lambda}+\sqrt{\lambda}} 
= \frac{(1+\lambda)-\lambda}
{\sqrt{1+\lambda}+\sqrt{\lambda}} 
= \frac{1}{\sqrt{1+\lambda}+\sqrt{\lambda}}.
\]

Hence $\left(\sqrt{1+\lambda}-\sqrt{\lambda}\right)^2= \frac{1}{\left(\sqrt{1+\lambda}+\sqrt{\lambda}\right)^2}$. Thus,

\[\frac{(1-\tau)^2}{4\tau}
= \frac{1}{4}\left(\frac{1}{\sqrt{\tau}}-\sqrt{\tau}\right)^2 
= \frac{1}{4}\left(\sqrt{1+\lambda}+\sqrt{\lambda}
      - \sqrt{1+\lambda}+\sqrt{\lambda}\right)^2 
= \lambda \]

For the second claim, as we showed that
\(
\lambda = \frac{(1-\tau)^2}{4\tau}
\), it is enough to show that \(\lambda \leq \tau\), because then
\(1-\tau \leq 1-\lambda\), and hence
\(
4\lambda\tau = (1-\tau)^2 \leq (1-\lambda)^2.
\)
Dividing by \(4\lambda>0\) gives the desired result.

It remains to prove \(\lambda \leq \tau\). Since \(\lambda \leq \frac13\), we have
\(
4\lambda \leq 1+\lambda
\) and therefore
\(
2\sqrt{\lambda} \leq \sqrt{1+\lambda}
\). 
Rearranging, we have 
\(
\sqrt{\lambda}
\leq
\sqrt{1+\lambda}-\sqrt{\lambda}.
\)
Squaring both sides gives
\(
\lambda \leq \left(\sqrt{1+\lambda}-\sqrt{\lambda}\right)^2 = \tau
\) and
hence
\(
\tau \leq \frac{(1-\lambda)^2}{4\lambda}.
\)
\end{proof}

Before we proceed, let us recall some notation from the main paper that will be helpful.
Define the pivotal index $i^\star = \min \{ i \in [n] : \sum_{j=1}^i N_j \geq s \}$ and the corresponding pivotal value $\zeta = (s - \sum_{j=1}^{i^\star-1} N_j) / N_{i^\star} \in (0,1]$.
If $s \geq \sum_{j=1}^n N_j$, we define $i^\star = n$ and $\zeta = 1$.
Let us now write the optimal allocation and allocation fractions respectively as
\[
L_i^\star =
(\ell_i^1)^\star =
\begin{cases}
N_i & i < i^\star\\
\zeta \cdot N_{i^\star} & i = i^\star\\
0 & i > i^\star
\end{cases}
\qquad
\text{and}
\qquad
\gamma_i^\star =
\begin{cases}
1 & i < i^\star\\
\zeta, & i = i^\star\\
0 & i > i^\star
\end{cases}
\]
Given predictions $\hat{s}$ and $\{ \hat{D}_i \}_{i \in [n]}$, let $\hat{i}^\star$, $\hat{\zeta}$, $\hat{L}_i$ and $\hat{\gamma}_i$ be corresponding terms computed using the predictions.
Furthermore, in \cref{alg:prediction}, we have
\[
\gamma_i
= \begin{cases}
\texttt{upper}_i & \text{if $i \in \{1, \ldots, i^\star - 1\}$}\\
\min\{\texttt{upper}_i, \max\{\hat{\zeta}, \texttt{lower}_i \}\} & \text{if $i = i^\star$}\\
\texttt{lower}_i & \text{if $i \in \{i^\star + 1, \ldots, n\}$}
\end{cases}
\]
where $\tau = ( \sqrt{1 + \lambda} -\sqrt{\lambda} )^2$, $\texttt{lower}_i = \min \{ 1, \frac{\lambda p c_i}{w_i} \}$, $\texttt{upper}_i = \min \{ 1, \frac{\tau p c_i}{w_i} \}$.

\begin{lemma}
\label{lem:smoothness-helper1}
Consider \cref{alg:prediction}.
Recall the definitions of $\hat{L}_i^\star$ and $\hat{\gamma}_i$ above.
Define
\[
i' = \begin{cases}
n & \text{if $\sum_{i=1}^n \hat{D}_i < \hat{s}$}\\
\hat{i}^\star - 1 & \text{if $\gamma_{\hat{i}^\star} < \min\{1, \frac{\tau p c_{\hat{i}^\star}}{w_{\hat{i}^\star}} \}$}\\
\hat{i}^\star & \text{otherwise}
\end{cases}
\]
Then, for sites $i \in \{i' + 1, \ldots, n\}$, we have $\gamma_i \geq \hat{\gamma}_i$.
\end{lemma}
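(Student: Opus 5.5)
We prove the claim by unpacking the definitions site by site. Here $\hat{\gamma}_i$ is the prediction-based analogue of $\gamma_i^\star$: it equals $1$ for $i < \hat{i}^\star$, equals $\hat{\zeta}$ for $i = \hat{i}^\star$, and equals $0$ for $i > \hat{i}^\star$. Two facts drive everything. First, $\gamma_i \geq 0$ always, since $\texttt{lower}_i, \texttt{upper}_i \geq 0$. Second, on indices beyond the pivot, $\hat{\gamma}_i = 0$. The whole content of the lemma is therefore to check that $\{i'+1,\ldots,n\}$ contains only indices where $\hat{\gamma}_i = 0$, or the pivot index in a case where the clipping in \cref{alg:prediction} only pushed $\gamma_{\hat{i}^\star}$ upward.

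\textbf{Case $i' = n$.} The set $\{i'+1,\ldots,n\}$ is empty, so the claim holds vacuously.

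\textbf{Case $i' = \hat{i}^\star$.} Every $i \in \{i'+1,\ldots,n\}$ satisfies $i > \hat{i}^\star$. For such $i$ we have $\hat{\gamma}_i = 0 \leq \texttt{lower}_i = \gamma_i$.

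\textbf{Case $i' = \hat{i}^\star - 1$.} Here $\gamma_{\hat{i}^\star} < \texttt{upper}_{\hat{i}^\star}$. The indices $i > \hat{i}^\star$ are handled exactly as in the previous case. It remains to treat $i = \hat{i}^\star$, where $\gamma_{\hat{i}^\star} = \min\{\texttt{upper}_{\hat{i}^\star}, \max\{\hat{\zeta}, \texttt{lower}_{\hat{i}^\star}\}\}$. Because this minimum is strictly below $\texttt{upper}_{\hat{i}^\star}$, it must be attained by the second argument, so $\gamma_{\hat{i}^\star} = \max\{\hat{\zeta}, \texttt{lower}_{\hat{i}^\star}\} \geq \hat{\zeta} = \hat{\gamma}_{\hat{i}^\star}$.

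The only delicate point, and the expected obstacle, is bookkeeping at the boundary between the three regimes. The first branch of $i'$ uses the condition $\sum_i \hat{D}_i < \hat{s}$, whereas \cref{alg:prediction} branches on $\hat{s} \geq \sum_j \hat{N}_j$. Outside the first branch of $i'$, we may nonetheless be in the algorithm's regime with $\hat{i}^\star = n$ and $\hat{\zeta} = 1$. This causes no trouble, because the argument above never uses which of the algorithm's regimes defined $\hat{i}^\star$ and $\hat{\zeta}$. It only uses the case formula for $\gamma_i$ and the corresponding formula for $\hat{\gamma}_i$, both of which hold in either regime.

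There is also a potential overlap among the branches of $i'$: both the first and the second condition could hold at once. We resolve this by reading the branches in order, so the first applicable one is taken. Since every branch leads to a valid conclusion by the cases above, the lemma holds whichever branch applies.
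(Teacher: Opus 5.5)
Your proof is correct and follows the same route as the paper. For $i > \hat{i}^\star$ you use $\hat{\gamma}_i = 0 \leq \texttt{lower}_i = \gamma_i$, and the pivot $\hat{i}^\star$ only enters the range in the second branch, where $\gamma_{\hat{i}^\star} < \texttt{upper}_{\hat{i}^\star}$ forces the minimum to equal $\max\{\hat{\zeta}, \texttt{lower}_{\hat{i}^\star}\} \geq \hat{\zeta}$. Your non-strict inequality there is the right one: the paper writes it as strict, which fails when $\hat{\zeta} \geq \texttt{lower}_{\hat{i}^\star}$.
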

\begin{proof}
For $i > \hat{i}^\star$, we see that $\hat{\gamma}_i = 0 \leq \min \{1, \frac{\lambda p c_i}{w_i} \} = \gamma_i$.
Meanwhile, $i^\star \in \{i'+1, \ldots, n\}$ only if $\gamma_{\hat{i}^\star} < \min\{1, \frac{\tau p c_{\hat{i}^\star}}{w_{\hat{i}^\star}} \} = \texttt{upper}_i$.
Since $\gamma_{\hat{i}^\star} = \min\{\texttt{upper}_i, \max\{\hat{\zeta}, \texttt{lower}_i \}\}$, it must be the case that $\gamma_{\hat{i}^\star} = \max\{\hat{\zeta}, \texttt{lower}_i \} > \hat{\zeta} = \hat{\gamma}_{\hat{i}^\star}$.
\end{proof}

\begin{lemma}
\label{lem:smoothness-helper2}
Consider \cref{alg:prediction}.
Recall the definitions of $\hat{L}_i^\star$ and $\hat{\gamma}_i$ above.
Define
\[
i' = \begin{cases}
n & \text{if $\sum_{i=1}^n \hat{D}_i < \hat{s}$}\\
\hat{i}^\star - 1 & \text{if $\gamma_{\hat{i}^\star} < \min\{1, \frac{\lambda p c_{\hat{i}^\star}}{w_{\hat{i}^\star}} \}$}\\
\hat{i}^\star & \text{otherwise}
\end{cases}
\]
Then, for sites $i \in \{1, \ldots, i'\}$, we have $\gamma_i \leq \hat{\gamma}_i$.
\end{lemma}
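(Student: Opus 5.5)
The plan is to compare $\gamma_i$ from \cref{alg:prediction} with the prediction-induced offline fractions $\hat{\gamma}_i$ index by index. These fractions are $\hat{\gamma}_i = 1$ for $i < \hat{i}^\star$, $\hat{\gamma}_{\hat{i}^\star} = \hat{\zeta}$, and $\hat{\gamma}_i = 0$ for $i > \hat{i}^\star$. I will split according to the three cases defining $i'$, mirroring the proof of \cref{lem:smoothness-helper1}. The bound $\texttt{lower}_i \leq \texttt{upper}_i \leq 1$ will be used throughout. It holds because $\lambda \leq \tau$ by \cref{lem:tau-lambda-relationship}, and the map $x \mapsto \min\{1, x\, p c_i / w_i\}$ is monotone in $x$.

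\textbf{Case $i' = n$.} Here $\sum_i \hat{D}_i < \hat{s}$. Then $\sum_j \hat{N}_j \leq \sum_j \hat{D}_j < \hat{s}$, so \cref{alg:prediction} sets $\hat{i}^\star = n$ and $\hat{\zeta} = 1$. Hence $\hat{\gamma}_i = 1$ for every $i \in [n]$. Every $\gamma_i$ is a minimum involving $\texttt{upper}_i \leq 1$, so $\gamma_i \leq 1 = \hat{\gamma}_i$ for all $i \leq n = i'$.

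\textbf{Indices $i < \hat{i}^\star$ in the remaining cases.} In both remaining cases $i' \leq \hat{i}^\star$. Every $i \leq i'$ with $i < \hat{i}^\star$ has $\hat{\gamma}_i = 1$ and $\gamma_i = \texttt{upper}_i \leq 1$, so the claim holds.

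\textbf{The pivotal index $\hat{i}^\star$ (main obstacle).} This index matters only when $i' = \hat{i}^\star$. I first observe that the second case of the definition of $i'$ is governed by a comparison with $\texttt{lower}_{\hat{i}^\star}$. Since $\gamma_{\hat{i}^\star} = \min\{\texttt{upper}_{\hat{i}^\star}, \max\{\hat{\zeta}, \texttt{lower}_{\hat{i}^\star}\}\}$ and $\texttt{upper}_{\hat{i}^\star} \geq \texttt{lower}_{\hat{i}^\star}$, we always have $\gamma_{\hat{i}^\star} \geq \texttt{lower}_{\hat{i}^\star}$. So the strict inequality $\gamma_{\hat{i}^\star} < \texttt{lower}_{\hat{i}^\star}$ can never hold, and outside the first case we are always in the third case, $i' = \hat{i}^\star$. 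It therefore remains to show $\gamma_{\hat{i}^\star} \leq \hat{\zeta}$.

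If $\hat{\zeta} \geq \texttt{lower}_{\hat{i}^\star}$, then the inner maximum equals $\hat{\zeta}$. In that case $\gamma_{\hat{i}^\star} = \min\{\texttt{upper}_{\hat{i}^\star}, \hat{\zeta}\} \leq \hat{\zeta} = \hat{\gamma}_{\hat{i}^\star}$, as required.

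The delicate sub-case is $\hat{\zeta} < \texttt{lower}_{\hat{i}^\star}$. There the clipping yields $\gamma_{\hat{i}^\star} = \texttt{lower}_{\hat{i}^\star} > \hat{\zeta}$, so the pivotal inequality fails and the claim as literally stated does not go through at $\hat{i}^\star$. This is exactly the situation in which the clipping pushed $\gamma_{\hat{i}^\star}$ to the $\texttt{lower}$ boundary. The intended reading of the second case is therefore that it should fire precisely when $\hat{\zeta} < \texttt{lower}_{\hat{i}^\star}$, i.e.\ when the clip is active, rather than via the vacuous test $\gamma_{\hat{i}^\star} < \texttt{lower}_{\hat{i}^\star}$. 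Under that reading $i' = \hat{i}^\star - 1$, the pivotal index is excluded from the range $\{1, \ldots, i'\}$, and the argument for $i < \hat{i}^\star$ covers every index that remains.

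My first attempt would thus present the proof with this condition made explicit at the pivotal index. Whether this repair is what the authors intend can only be confirmed against the use of this lemma in the smoothness bound. There, the indices $i \leq i'$ should contribute $N_i(\hat{\gamma}_i - \gamma_i) \geq 0$ terms, and the pivotal site's deviation should be absorbed separately via $\gamma_{\hat{i}^\star} \leq \texttt{upper}_{\hat{i}^\star}$ and $\gamma_{\hat{i}^\star} \geq \texttt{lower}_{\hat{i}^\star}$, as in the robustness argument.
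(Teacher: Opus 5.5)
Your case analysis is correct, and your objection is valid: the statement is false as written. Since $\lambda \le \tau$ gives $\texttt{lower}_{\hat{i}^\star} \le \texttt{upper}_{\hat{i}^\star}$, the clipped value always satisfies $\gamma_{\hat{i}^\star} \ge \texttt{lower}_{\hat{i}^\star}$. So the middle case of $i'$ never fires, and $i' = \hat{i}^\star$ whenever $\sum_i \hat{D}_i \ge \hat{s}$.

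You should give an explicit counterexample rather than saying the claim ``does not go through.'' Take $n = 1$, $b_1 = c_1 = 1$, $w_1 = \lambda p$, $\hat{D}_1 = 3$ and $\hat{s} = 1$. Then $\hat{N}_1 = 2$, $\hat{i}^\star = 1$, $\hat{\zeta} = 1/2$, and $\texttt{lower}_1 = \texttt{upper}_1 = 1$. Hence $\gamma_1 = 1$ and $i' = 1$, yet $\hat{\gamma}_1 = 1/2 < \gamma_1$.

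The paper's own proof does not avoid this. It treats the indices below the pivot correctly (there is a typo, $i > \hat{i}^\star$ for $i < \hat{i}^\star$). At the pivot, it argues that $\hat{i}^\star \le i'$ forces $\gamma_{\hat{i}^\star} \ge \texttt{lower}_{\hat{i}^\star}$, and then infers $\gamma_{\hat{i}^\star} \ge \hat{\zeta}$. That step is a non sequitur: if $\texttt{upper}_{\hat{i}^\star} < \hat{\zeta}$, then $\gamma_{\hat{i}^\star} = \texttt{upper}_{\hat{i}^\star} < \hat{\zeta}$. It is also the reverse of the inequality the lemma asserts. So the defect lies in the statement and the paper's argument, not in your reasoning.

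Your repair is the right one: exclude $\hat{i}^\star$ exactly when $\hat{\zeta} < \texttt{lower}_{\hat{i}^\star}$, i.e., when the lower clip is active. This is the genuine mirror image of \cref{lem:smoothness-helper1}, whose test $\gamma_{\hat{i}^\star} < \texttt{upper}_{\hat{i}^\star}$ is non-vacuous.

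One correction to your last paragraph concerns the $s^{\mathrm{end}} = 0$ branch of \cref{lem:learning-augmented-smoothness}. There an excluded pivotal site is not absorbed via $\gamma_{\hat{i}^\star} \le \texttt{upper}_{\hat{i}^\star}$; that would only yield a factor $\tau$ rather than $\lambda$. Instead it falls into the range $\{i'+1,\ldots,n\}$. For those sites the proof uses $\gamma_i = \texttt{lower}_i \le \lambda p c_i / w_i$ together with item 2 of \cref{lem:structural}, with $\lambda$ in place of $\tau$. Under your repaired condition the clip is active, so $\gamma_{\hat{i}^\star} = \texttt{lower}_{\hat{i}^\star}$ and that bound applies. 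Under the literal condition, the pivotal site stays in $\{1,\ldots,i'\}$, where the needed inequality $\gamma_i \le \hat{\gamma}_i$ can fail.
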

\begin{proof}
For $i > \hat{i}^\star$, we established that $\hat{\gamma_{i}} = 1 \geq \gamma_i$,  as $\gamma_i \in [0,1]$.
Meanwhile, $\hat{i}^\star \in \{1, \dots, i'\}$ only if $\gamma_{\hat{i}^\star} \geq \min \{1, \frac{\lambda p c_{\hat{i}^\star}}{w_{\hat{i}^\star}} \} = \texttt{lower}_i$.
Since $\gamma_{\hat{i}^\star} = \min\{\texttt{upper}_i, \max\{\hat{\zeta}, \texttt{lower}_i \}\}$, it must be the case that $\gamma_{\hat{i}^\star} \geq \hat{\zeta}$ as well, so $\gamma_{\hat{i}^\star} \geq \hat{\zeta} = \hat{\gamma}_{\hat{i}^\star}$.
\end{proof}

\begin{lemma}
\label{lem:smoothness-helper3}
For the predicted instance \((\hat{s},\{\hat{D}_i\}_{i\in[n]})\), define
\(\hat{N}_i=(\hat{D}_i-k_i^1)_+\) and
\[
\hat{L_i^\star}
=
\min\left\{
\hat{N}_i,
\left(\hat{s}-\sum_{j=1}^{i-1}\hat{N}_j\right)_+
\right\}.
\]
Let \(\hat{i}^\star=\max\{i\in[n]:\hat{L_i^\star}>0\}\), and define
\[
\hat{\zeta}
=
\begin{cases}
\frac{\hat{L}^\star_{\hat{i}^\star}}{\hat{N}_{\hat{i}^\star}}
& \text{if } \hat{N}_{\hat{i}^\star}>0,\\
1
& \text{if } \hat{N}_{\hat{i}^\star}\leq 0.
\end{cases}
\]
Let
\[
\hat{\gamma}_i
=
\begin{cases}
1 & \text{if } i<\hat{i}^\star,\\
\hat{\zeta} & \text{if } i=\hat{i}^\star,\\
0 & \text{if } i>\hat{i}^\star.
\end{cases}
\]

For the true instance \((s,\{D_i\}_{i\in[n]})\), define
\(N_i=(D_i-k_i^1)_+\) and
\[
L_i^\star
=
\min\left\{
N_i,
\left(s-\sum_{j=1}^{i-1}N_j\right)_+
\right\}.
\]
Let \(i^\star=\max\{i\in[n]:L_i^\star>0\}\), and define
\[
\zeta
=
\begin{cases}
\frac{L_{i^\star}}{N_{i^\star}}
& \text{if } N_{i^\star}>0,\\
1
& \text{if } N_{i^\star}\leq 0.
\end{cases}
\]
Let
\[
\gamma_i^{\textsc{OPT}}
=
\begin{cases}
1 & \text{if } i<i^\star,\\
\zeta & \text{if } i=i^\star,\\
0 & \text{if } i>i^\star.
\end{cases}
\]
Then
\[
\sum_{i\in[n]} N_i\left|\gamma_i^{\textsc{OPT}}-\hat{\gamma}_i\right|
\leq
3\operatorname{err}(\hat{s},\hat{D}),
\]

\end{lemma}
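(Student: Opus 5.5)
The plan is to pass from the fractions $\gamma$ to the allocations $L$, where the greedy structure is easy to control. Throughout, $\operatorname{err}(\hat{s},\hat{D}) = |s-\hat{s}| + \sum_{i} |D_i - \hat{D}_i|$, matching $\eta$ in \cref{thm:learning-augmented}.

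\textbf{Step 1: from fractions to allocations.} By the definitions in the statement, $N_i \gamma_i^{\textsc{OPT}} = L_i^\star$ for every $i$. For $i<i^\star$ this holds because $L_i^\star = N_i$. For $i=i^\star$ it holds by the definition of $\zeta$. For $i>i^\star$ both sides vanish. Similarly, $\hat{N}_i \hat{\gamma}_i = \hat{L}_i^\star$ whenever $\hat{N}_i > 0$. Hence I would write
\[
N_i\bigl|\gamma_i^{\textsc{OPT}} - \hat{\gamma}_i\bigr| \leq \bigl|L_i^\star - \hat{L}_i^\star\bigr| + \hat{\gamma}_i \bigl|N_i - \hat{N}_i\bigr| \leq \bigl|L_i^\star - \hat{L}_i^\star\bigr| + \bigl|N_i - \hat{N}_i\bigr|,
\]
using $\hat{\gamma}_i \in [0,1]$. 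The degenerate case $\hat{N}_i = 0$ also fits this bound: then $\hat{L}_i^\star = 0$ and $N_i \hat{\gamma}_i \leq N_i = |N_i - \hat{N}_i|$. Since $x \mapsto (x - k_i^1)_+$ is $1$-Lipschitz, we also have $|N_i - \hat{N}_i| \leq |D_i - \hat{D}_i|$.

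\textbf{Step 2: the greedy map is Lipschitz in $\ell_1$.} Write $P_i = \sum_{j\le i} N_j$, so that $L_i^\star = \min\{P_i, s\} - \min\{P_{i-1}, s\}$, and define $G(s,N) = (L_i^\star)_{i\in[n]}$. The target is
\[
\sum_i \bigl|L_i^\star - \hat{L}_i^\star\bigr| \leq |s-\hat{s}| + 2\sum_i \bigl|N_i - \hat{N}_i\bigr|.
\]
I would prove this by a hybrid argument: change one input at a time and apply the triangle inequality.
\begin{enumerate}
\item Changing $s$ by $\delta$ only extends or shrinks the allocation at the frontier, and the $L_i^\star$ are monotone in $s$. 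So the $\ell_1$ change equals the change in $\sum_i L_i^\star = \min\{P_n, s\}$, which is at most $\delta$.
\item Changing a single $N_j$ by $\delta \geq 0$ (increasing it, say) changes $L_j^\star$ by at most $\delta$. Downstream sites $i>j$ can only lose allocation, by at most $\delta$ in total, since $\sum_{i} L_i^\star \leq s$ and the prefix totals $\min\{P_i,s\}$ move by at most $\delta$. Upstream sites $i<j$ are unchanged. This gives an $\ell_1$ change of at most $2\delta$; decreasing $N_j$ is symmetric.
\end{enumerate}

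\textbf{Step 3: combine.} Summing Step 1 over $i$ and inserting Step 2 gives
\[
\sum_i N_i\bigl|\gamma_i^{\textsc{OPT}} - \hat{\gamma}_i\bigr| \leq |s-\hat{s}| + 3\sum_i \bigl|N_i - \hat{N}_i\bigr| \leq 3\operatorname{err}(\hat{s},\hat{D}).
\]

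\textbf{Main obstacle.} The delicate step is the single-coordinate perturbation of $N_j$ in Step 2. There I would argue cleanly via the prefix quantities $A_i = \min\{P_i, s\}$. Each $A_i$ with $i \geq j$ moves monotonically by at most $\delta$, and $A_n$ moves by at most $\delta$. Because the moves are monotone, the sum of downstream differences telescopes to at most $\delta$, which avoids a naive factor-of-$n$ bound. A secondary point is checking that the statement's $\max\{i : L_i^\star > 0\}$ convention for $i^\star$ and $\zeta$ still gives $N_i\gamma_i = L_i$ in the boundary cases ($s \geq \sum_j N_j$, or $N_{i^\star} = 0$), which I would verify case by case.
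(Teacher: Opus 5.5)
Your proposal is correct and matches the paper's proof essentially step for step. The paper makes the same three moves: it passes from fractions to allocations via $L_i^\star=\gamma_i^{\textsc{OPT}}N_i$ and $\hat{L}_i^\star=\hat{\gamma}_iN_i$-type triangle inequalities (\cref{lem:err1}), then runs the same hybrid that first changes the supply with true net demands fixed and afterwards swaps the net demands one coordinate at a time (\cref{lem:err2} and \cref{lem:err3}), and finally uses the same $1$-Lipschitz bound for $x\mapsto(x-k_i^1)_+$. Your telescoping through the prefix quantities $A_i=\min\{P_i,s\}$, with all downstream allocations moving in the same direction, is a cleaner justification of the single-coordinate step than the paper's informal one.
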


\begin{proof}
By construction, \(L_i^\star=\gamma_i^{\textsc{OPT}}N_i\) and
\(\hat{L_i^\star}=\hat{\gamma}_i\hat{N}_i\). We prove the bound through three
intermediate lemmas.

\begin{lemma}
\label{lem:err1}
We have
\[
\sum_{i\in[n]}N_i\left|\gamma_i^{\textsc{OPT}}-\hat{\gamma}_i\right|
\leq
\sum_{i\in[n]}|L_i^\star-\hat{L_i^\star}|
+
\sum_{i\in[n]}|N_i-\hat{N}_i|.
\]
\end{lemma}

\begin{proof}
For every site \(i\),
\[
N_i\left|\gamma_i^{\textsc{OPT}}-\hat{\gamma}_i\right|
=
\left|\gamma_i^{\textsc{OPT}}N_i-\hat{\gamma}_iN_i\right|
=
\left|L_i^\star-\hat{\gamma}_iN_i\right|.
\]
By the triangle inequality,
\[
\left|L_i^\star-\hat{\gamma}_iN_i\right|
\leq
|L_i^\star-\hat{L_i^\star}|
+
|\hat{L_i^\star}-\hat{\gamma}_iN_i|.
\]
Since \(\hat{L_i^\star}=\hat{\gamma}_i\hat{N}_i\), we have
\[
|\hat{L_i^\star}-\hat{\gamma}_iN_i|
=
\hat{\gamma}_i|\hat{N}_i-N_i|
\leq
|\hat{N}_i-N_i|,
\]
where the last step uses \(0\leq \hat{\gamma}_i\leq 1\). Therefore,
\[
N_i\left|\gamma_i^{\textsc{OPT}}-\hat{\gamma}_i\right|
\leq
|L_i^\star-\hat{L_i^\star}|
+
|N_i-\hat{N}_i|.
\]
Summing over all sites proves the claim.
\end{proof}

Now define the intermediate allocation $Y$ by
\[
Y_i
=
\min\left\{
N_i,
\left(\hat{s}-\sum_{h<i}N_h\right)_+
\right\}.
\]
This is the allocation obtained by using the true net demands \(N_i\), but the
predicted supply \(\hat{s}\).

\begin{lemma}
\label{lem:err2}
We have
\[
\sum_{i\in[n]}|L_i^\star-Y_i|
\leq
|s-\hat{s}|.
\]
\end{lemma}

\begin{proof}
The only difference between \(L_i^\star\) and $Y$ is the amount of available supply.
Both allocations use the same net demands and serve sites in the same fixed
order. Hence one allocation is contained in the other: if \(\hat{s}\geq s\),
then \(Y_i\geq L_i^\star\) for every \(i\), and if \(\hat{s}\leq s\), then
\(Y_i\leq L_i^\star\) for every \(i\). Therefore,
\[
\sum_{i\in[n]}|L_i^\star-Y_i|
=
\left|\sum_{i\in[n]}L_i^\star-\sum_{i\in[n]}Y_i\right|.
\]
The total amount served by \(x\) is
\(\sum_i L_i^\star=\min\{s,\sum_i N_i\}\), and the total amount served by $Y$ is
\(\sum_i Y_i=\min\{\hat{s},\sum_i N_i\}\). Thus
\[
\sum_{i\in[n]}|L_i^\star-Y_i|
=
\left|
\min\left\{s,\sum_{i\in[n]}N_i\right\}
-
\min\left\{\hat{s},\sum_{i\in[n]}N_i\right\}
\right|
\leq
|s-\hat{s}|.
\]
The final inequality follows from
\(|\min\{a,b\}-\min\{a,c\}|\leq |b-c|\).
\end{proof}

\begin{lemma}
\label{lem:err3}
We have
\[
\sum_{i\in[n]}|Y_i-\hat{L_i^\star}|
\leq
2\sum_{i\in[n]}|N_i-\hat{N}_i|.
\]
\end{lemma}

\begin{proof}
Define intermediate net-demand vectors \(N^{(0)}=N\), and for
\(k=1,\ldots,n\),
\[
N^{(k)}
=
(\hat{N}_1,\ldots,\hat{N}_k,N_{k+1},\ldots,N_n).
\]
Thus \(N^{(n)}=\hat{N}\). Let \(z^{(k)}\) be the prefix allocation with supply
\(\hat{s}\) and net-demand vector \(N^{(k)}\). Hence \(z^{(0)}=Y\) and
\(z^{(n)}=\hat{{L_i^\star}}\).

When moving from \(N^{(k-1)}\) to \(N^{(k)}\), only site \(k\)'s net demand
changes. Let \(\delta_k=\hat{N}_k-N_k\), so
\(|\delta_k|=|\hat{N}_k-N_k|\). Changing site \(k\)'s net demand by
\(|\delta_k|\) can change the allocation at site \(k\) by at most
\(|\delta_k|\). It can also change the total amount of supply available to
later sites by at most \(|\delta_k|\). Therefore,
\[
\sum_{i\in[n]}|z_i^{(k)}-z_i^{(k-1)}|
\leq
2|\delta_k|
=
2|\hat{N}_k-N_k|.
\]
Using the triangle inequality over the sequence
\(z^{(0)},z^{(1)},\ldots,z^{(n)}\), we get
\[
\sum_{i\in[n]}|Y_i-\hat{L_i^\star}|
\leq
\sum_{k=1}^n\sum_{i\in[n]}|z_i^{(k)}-z_i^{(k-1)}|
\leq
2\sum_{k=1}^n|N_k-\hat{N}_k|.
\]
\end{proof}

Combining the three lemmas,
\begin{align*}
\sum_{i\in[n]}N_i\left|\gamma_i^{\textsc{OPT}}-\hat{\gamma}_i\right|
&\leq
\sum_{i\in[n]}|L_i^\star-\hat{L_i^\star}|
+
\sum_{i\in[n]}|N_i-\hat{N}_i| \\
&\leq
\sum_{i\in[n]}|L_i^\star-Y_i|
+
\sum_{i\in[n]}|Y_i-\hat{L_i^\star}|
+
\sum_{i\in[n]}|N_i-\hat{N}_i| \\
&\leq
|s-\hat{s}|
+
3\sum_{i\in[n]}|N_i-\hat{N}_i|.
\end{align*}
Finally, because \(N_i=(D_i-k_i^1)_+\) and
\(\hat{N}_i=(\hat{D}_i-k_i^1)_+\), the map \(x\mapsto (x-k_i^1)_+\) is
1-Lipschitz, so \(|N_i-\hat{N}_i|\leq |D_i-\hat{D}_i|\). Hence
\[
\sum_{i\in[n]}N_i\left|\gamma_i^{\textsc{OPT}}-\hat{\gamma}_i\right|
\leq
|s-\hat{s}|
+
3\sum_{i\in[n]}|D_i-\hat{D}_i|
\leq
3\operatorname{err}(\hat{s},\hat{D}),
\]
as required.
\end{proof}

We are now ready to prove \cref{thm:learning-augmented}, from follows by combining \cref{lem:learning-augmented-robustness} and \cref{lem:learning-augmented-smoothness}.

\begin{lemma}
\label{lem:learning-augmented-robustness}
Consider the predictions $\{\hat{s}$, $\{ \hat{D}_i \}_{i \in [n]}\}$ for the given \texttt{OSSA} instance with prediction error $\eta = | s - \hat{s} | + \sum_{i=1}^n | D_i - \hat{D}_i | \geq 0$.
Let $\ALG$ be $\bar{\gamma}\mathrm{PA}$ run on the $\bar{\gamma}$ produced by \cref{alg:prediction} for some $\lambda \in (0, \frac{1}{3})$.
Then, $\cost(\ALG) \leq (1 + \frac{(1 - \lambda)^2}{4 \lambda}) \cdot \cost(\OPT) + O(\sum_{i \in [n]} (b_i + c_i))$.
\end{lemma}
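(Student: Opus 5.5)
The plan is to split on whether the hub is exhausted, exactly as in the proof of \cref{thm:online-upper-bound}, and to apply \cref{lem:generic-nonexhausted} and \cref{lem:generic-exhausted} with appropriately chosen $\tau$ parameters. The key observation is that, by construction in \cref{alg:prediction}, every coordinate of $\bar{\gamma}$ lies in the band $\texttt{lower}_i \leq \gamma_i \leq \texttt{upper}_i$. The prefix and suffix sites equal one of the endpoints. For the pivotal site $\hat{i}^\star$, the value $\min\{\texttt{upper}_i, \max\{\hat{\zeta}, \texttt{lower}_i\}\}$ is a clipping of $\hat{\zeta}$ into $[\texttt{lower}_i, \texttt{upper}_i]$. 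This clipping is well defined because $\lambda \leq \tau$ by \cref{lem:tau-lambda-relationship}, so $\texttt{lower}_i \leq \texttt{upper}_i$. Hence
\[
\min\left\{1, \frac{\lambda p c_i}{w_i}\right\} \leq \gamma_i \leq \min\left\{1, \frac{\tau p c_i}{w_i}\right\}
\]
for every $i \in [n]$, irrespective of the predictions.

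\textbf{Case $s^{\mathrm{end}} > 0$.} Apply \cref{lem:generic-nonexhausted} with its parameter set to $\lambda$, using the lower bound $\gamma_i \geq \texttt{lower}_i$. This gives $\cost(\ALG) \leq (1 + \frac{(1-\lambda)^2}{4\lambda}) \cost(\OPT) + \sum_i 3p(b_i + c_i)$, which is exactly the claimed ratio.

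\textbf{Case $s^{\mathrm{end}} = 0$.} Apply \cref{lem:generic-exhausted} with its parameter set to $\tau = (\sqrt{1+\lambda} - \sqrt{\lambda})^2$, using $\gamma_i \leq \texttt{upper}_i$. This gives ratio $1 + \tau$. By the second claim of \cref{lem:tau-lambda-relationship}, $\tau \leq \frac{(1-\lambda)^2}{4\lambda}$, so $1 + \tau$ is at most the claimed ratio. The additive term is again $\sum_i 3p(b_i + c_i)$, which is $O(\sum_i (b_i + c_i))$ since $p$ is a fixed instance parameter.

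\textbf{Checks.} Both lemmas require the parameter to lie in $(0,1]$. This holds for $\lambda \in (0, \tfrac{1}{3})$, and for $\tau \in [\tfrac{1}{3}, 1)$ as stated in \cref{alg:prediction}. The only real obstacle is the bookkeeping at the pivotal coordinate, i.e.\ confirming the band membership above; the degenerate case $\hat{s} \geq \sum_j \hat{N}_j$ with $\hat{\zeta} = 1$ is handled by the same clipping argument. The edge case $w_i = 0$ is covered by interpreting $\frac{\lambda p c_i}{w_i} = \infty$, so that both bounds equal $1$.
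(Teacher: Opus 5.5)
Your proposal is correct and follows the paper's proof: you establish $\texttt{lower}_i \le \gamma_i \le \texttt{upper}_i$, invoke \cref{lem:generic-nonexhausted} with parameter $\lambda$ when $s^{\mathrm{end}} > 0$ and \cref{lem:generic-exhausted} with parameter $\tau$ when $s^{\mathrm{end}} = 0$, and finish with $\tau \le \frac{(1-\lambda)^2}{4\lambda}$ from \cref{lem:tau-lambda-relationship}. Your explicit check that the clipping at $\hat{i}^\star$ lands in the band because $\lambda \le \tau$ is a useful detail that the paper simply asserts ``by construction.''
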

\begin{proof}
By construction, we have
$
\min \left\{ 1, \frac{\lambda p c_i}{w_i} \right\}
\leq \gamma_i
\leq \left\{ 1, \frac{\tau p c_i}{w_i} \right\}
$
for all $i \in [n]$.

We consider the cases of $s^{\mathrm{end}} > 0$ and $s^{\mathrm{end}} = 0$ separately.

Suppose $s^{\mathrm{end}} > 0$.
Since $\gamma_i \geq \min \left\{ 1, \frac{\lambda p c_i}{w_i} \right\}$ and $\lambda \in (0,1]$, \cref{lem:generic-nonexhausted} tells us that
\[
\cost(\bar{\ell})
\leq \left( 1 + \frac{(1-\lambda)^2}{4\lambda} \right) \cdot \cost(\OPT) + \sum_{i \in [n]} 3p (b_i + c_i)
\]
 
Suppose $s^{\mathrm{end}} = 0$.
We have
\begin{align*}
\cost(\bar{\ell})
&\leq (1+\tau) \cdot \cost(\OPT) + \sum_{i \in [n]} 3 p (b_i + c_i) \tag{By \cref{lem:generic-exhausted} since $\gamma_i \leq \min \left\{ 1, \frac{\tau p c_i}{w_i} \right\}$}\\
&\leq \left( 1 + \frac{(1-\lambda)^2}{4\lambda} \right) \cdot \cost(\OPT) + \sum_{i \in [n]} 3 p (b_i + c_i) \tag{By \cref{lem:tau-lambda-relationship}}
\end{align*}
\end{proof}

\begin{lemma}
\label{lem:learning-augmented-smoothness}
Consider the predictions $\{\hat{s}$, $\{ \hat{D}_i \}_{i \in [n]}\}$ for the given \texttt{OSSA} instance with prediction error $\eta = | s - \hat{s} | + \sum_{i=1}^n | D_i - \hat{D}_i | \geq 0$.
Let $\ALG$ be $\bar{\gamma}\mathrm{PA}$ run on the $\bar{\gamma}$ produced by \cref{alg:prediction} for some $\lambda \in (0, \frac{1}{3})$.
Then, $\cost(\ALG) \leq (1 + \lambda) \cdot \cost(\OPT) + 3 \eta p + O( \sum_{i \in [n]} p(b_i + c_i))$.
\end{lemma}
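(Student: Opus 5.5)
Split on $s^{\mathrm{end}}>0$ versus $s^{\mathrm{end}}=0$, as in \cref{lem:generic-nonexhausted} and \cref{lem:generic-exhausted}. In each regime, compare $\ALG$ site by site against $\OPT$ through the deviation $\sum_i N_i|\gamma_i^\star-\hat\gamma_i|$, which \cref{lem:smoothness-helper3} bounds by $3\eta$.

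\textbf{Exhausted regime ($s^{\mathrm{end}}=0$).} Item 3 of \cref{lem:structural} gives $\Delta(\mathrm{penalty})\le\sum_i p(b_i+c_i)$, so only $\Delta(\mathrm{transport})$ needs work. Split the sites using $i'$ from \cref{lem:smoothness-helper2}.

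For $i\le i'$ we have $\gamma_i\le\hat\gamma_i$. \cref{lem:upper-bound-on-cumulative} then gives $L_i\le \hat\gamma_i D_i+b_i+c_i$, and hence
\[
\mathrm{transport}_i(\ALG)-\mathrm{transport}_i(\OPT)\le \tfrac{w_i}{c_i}\bigl(\hat\gamma_iN_i-\gamma_i^\star N_i\bigr)+O(p(b_i+c_i)),
\]
using $D_i\le N_i+b_i$. Since $w_i/c_i\le p$, the total over these sites is at most $p\sum_i N_i|\gamma^\star_i-\hat\gamma_i|+O(\cdot)\le 3\eta p+O(\cdot)$.

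For $i>i'$, the constructed $\gamma_i$ is either $\texttt{lower}_i$ or the clipped value at $\hat i^\star$. Where $\gamma_i=\texttt{lower}_i\le \lambda pc_i/w_i$, the argument of item 2 of \cref{lem:structural} with $\tau$ replaced by $\lambda$ gives $\Delta(\mathrm{transport}_i)\le\lambda\,\mathrm{penalty}_i(\OPT)+2p(b_i+c_i)$. Summing the two groups and adding the penalty gap gives $(1+\lambda)\cost(\OPT)+3\eta p+O(\sum_ip(b_i+c_i))$.

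\textbf{Non-exhausted regime ($s^{\mathrm{end}}>0$).} \cref{lem:structural}(1) gives, per site,
\[
\cost_i(\ALG)\le \gamma_iD_i\tfrac{w_i}{c_i}+(1-\gamma_i)pD_i+O(p(b_i+c_i)).
\]
Write $\OPT$'s cost as $\gamma_i^\star N_i\frac{w_i}{c_i}+(1-\gamma_i^\star)pN_i$ up to additive terms. The difference is then $(\gamma_i-\gamma_i^\star)N_i(\frac{w_i}{c_i}-p)$ up to additive terms.

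When $\gamma_i\ge\gamma_i^\star$ this is nonpositive, since $w_i\le pc_i$. When $\gamma_i<\gamma_i^\star$ it is at most $pN_i(\gamma_i^\star-\gamma_i)$. Use \cref{lem:smoothness-helper1}, which gives $\gamma_i\ge\hat\gamma_i$ for $i>i'$, to get $\gamma_i^\star-\gamma_i\le|\gamma^\star_i-\hat\gamma_i|$ on those sites, so their contribution is bounded by $3\eta p$.

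For $i\le i'$ the algorithm uses $\gamma_i=\texttt{upper}_i$ (or the clipped pivot value). There, follow \cref{lem:generic-nonexhausted}: if $\texttt{upper}_i=1$ the site is $1$-competitive, and otherwise \cref{lem:helpful-manipulation} together with \cref{lem:tau-lambda-relationship} gives ratio $1+\frac{(1-\tau)^2}{4\tau}=1+\lambda$. Summing yields the claim.

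\textbf{Main obstacle.} The hard step is the index bookkeeping at the pivot $\hat i^\star$, where clipping can leave $\gamma_{\hat i^\star}$ between $\texttt{lower}$ and $\texttt{upper}$ without matching $\hat\zeta$. I would handle it by noting that the clipped value moves \emph{toward} the robust bounds. In each regime, it then either falls in the ``robust'' group (bounded by the $\lambda$ or $1+\lambda$ ratio) or satisfies the monotone comparison with $\hat\gamma_{\hat i^\star}$ required by the error group. This is exactly why the two helper lemmas define $i'$ differently.

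A second delicate point is replacing $D_i$ by $N_i$ inside the transport term when $D_i<b_i$. This costs only $O(pb_i)$, which is absorbed into the additive term.
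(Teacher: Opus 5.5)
Your proposal is correct and follows the paper's proof essentially step for step. It uses the same split on $s^{\mathrm{end}}$ and the same $i'$-partition from \cref{lem:smoothness-helper1} and \cref{lem:smoothness-helper2}. On the robust sites it applies the sitewise bound of \cref{lem:generic-nonexhausted} (with $\tau$) or item 2 of \cref{lem:structural} (with $\lambda$), and on the remaining sites it compares against $\hat\gamma_i$ and invokes \cref{lem:smoothness-helper3}.

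Your explicit case split at $\hat i^\star$ is actually needed in the exhausted regime. Since the clipping forces $\gamma_{\hat i^\star}\ge\texttt{lower}_{\hat i^\star}$, the $i'$ of \cref{lem:smoothness-helper2} as written never excludes the pivot, and its claim $\gamma_{\hat i^\star}\le\hat\zeta$ fails when $\hat\zeta<\texttt{lower}_{\hat i^\star}$. Your fix is the right one: move the pivot to the $\lambda$-group exactly in that case, where $\gamma_{\hat i^\star}=\texttt{lower}_{\hat i^\star}$.
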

\begin{proof}
We consider the cases of $s^{\mathrm{end}} > 0$ and $s^{\mathrm{end}} = 0$ separately.
Let $\hat{L}_i^\star = \min\{\hat{D}_i, \hat{s} - \sum_{j=1}^{i-1} \hat{D}_j\}$ be the optimal allocation under predictions $\hat{s}$ and $\{ \hat{D}_i \}_{i \in [n]}$, and $\hat{\gamma}_i = \hat{L}_i^\star / \hat{D}_i \in [0,1]$ be the corresponding ratio.
That is, for $i \in [n]$, we have
\[
\hat{\gamma}_i
= \begin{cases}
1 & \text{if $i < \hat{i}^\star$}\\
\hat{\zeta} & \text{if $i = \hat{i}^\star$}\\
0 & \text{if $i > \hat{i}^\star$}
\end{cases}
\]

Suppose $s^{\mathrm{end}} > 0$.
Define
\[
i' = \begin{cases}
n & \text{if $\sum_{i=1}^n \hat{D}_i < \hat{s}$}\\
\hat{i}^\star - 1 & \text{if $\gamma_{\hat{i}^\star} < \min\{1, \frac{\tau p c_{\hat{i}^\star}}{w_{\hat{i}^\star}} \}$}\\
\hat{i}^\star & \text{otherwise}
\end{cases}
\]
\begin{enumerate}
    \item For $i \in \{1, \ldots, i'\}$, we have $\gamma_i = \min\{1, \frac{\tau p c_i}{w_i}\} < \frac{\tau p c_i}{w_i}$ by construction.
So, for $i \in \{1, \ldots, i'\}$,
\begin{align*}
\cost_i(\bar{\ell})
&\leq \left( 1 + \frac{(1-\tau)^2}{4\tau} \right) \cdot \cost_i(\OPT) + \sum_{i \in [n]} 3p (b_i + c_i) \tag{By site-wise proof in \cref{lem:generic-nonexhausted}}\\
&= (1 + \lambda) \cdot \cost_i(\OPT) + \sum_{i \in [n]} 3p (b_i + c_i) \tag{By \cref{lem:tau-lambda-relationship}}
\end{align*}
\item For $i \in \{i' + 1, \ldots, n\}$, we have $\gamma_i \geq \hat{\gamma}_i$ by \cref{lem:smoothness-helper1}.

Meanwhile, by \cref{lem:smoothness-helper1}, we have $\gamma_i \geq \hat{\gamma}_i$ for $i \in \{i' + 1, \ldots, n\}$.
So
\begin{align*}
\cost_i(\bar{\ell})
&= \mathrm{transport}_i(\bar{\ell}) + \mathrm{penalty}_i(\bar{\ell})\\
&\leq \gamma_i D_i \frac{w_i}{c_i} + p(b_i + 2c_i) + (1-\gamma_i) p D_i + p (b_i + c_i) \tag{By \cref{eq:sitewise-transport-when-hub-not-exhausted} and \cref{eq:sitewise-penalty-when-hub-not-exhausted}}\\
&= p D_i - \gamma_i D_i (p - \frac{w_i}{c_i}) + p (2b_i + 3c_i)\\
&\leq p D_i - \hat{\gamma}_i D_i (p - \frac{w_i}{c_i}) + p (2b_i + 3c_i) \tag{Since $\gamma_i \geq \hat{\gamma}_i$ and $w_i \leq p c_i$}\\
& \leq p (N_i +b_i) - \hat{\gamma}_i N_i (p - \frac{w_i}{c_i}) + p (2b_i + 3c_i) \tag{Since $ N_i  + b_i=  D_i $}\\
&=   p N_i - \hat{\gamma}_i N_i (p - \frac{w_i}{c_i}) + p (3b_i + 3c_i)
\end{align*}

Meanwhile, as $\mathrm{Transport}_i(\textsc{OPT})  = \gamma_i^{\text{OPT}} N_i \frac{w_i}{c_i}$ and  $\mathrm{Penalty}_i(\textsc{OPT})  = (1-\gamma_i^{\text{OPT}}) N_i p$
\begin{equation}
    \cost_i(\textsc{OPT}) = \gamma_i^{\text{OPT}} N_i \frac{w_i}{c_i} + (1-\gamma_i^{\text{OPT}}) N_i p = pN_i -  \gamma_i^{\text{OPT}} N_i(p- \frac{w_i}{c_i})
\end{equation}
Subtracting the 2 terms, we have

\begin{align*}
   \cost_i(\bar{\ell}) -   \cost_i(\textsc{OPT}) &\leq (\gamma_i^{\text{OPT}} - \hat{\gamma}_i )N_i (p- \frac{w_i}{c_i})+p(3b_i + 3c_i)\\
   &\leq |\gamma_i^{\text{OPT}}- \hat{\gamma}_i | N_i (p- \frac{w_i}{c_i})+ p (3b_i + 3c_i) \tag{as $N_i (p- \frac{w_i}{c_i}) \geq 0$}\\
   &\leq |\gamma_i^{\text{OPT}}- \hat{\gamma}_i | N_i p  + p (3b_i + 3c_i) \tag{as $\frac{w_i}{c_i} \leq 0$}\\
\end{align*}
\end{enumerate}

In either cases, as $\tau >0$ and $ |\gamma_i^{\text{OPT}}- \hat{\gamma}_i | N_i p$ is non-negative, we obtain,
\[\cost_i(\bar{\ell})  \leq (1+\tau)\cost_i(\textsc{OPT}) + |\gamma_i^{\text{OPT}}- \hat{\gamma}_i | N_i p  + p (3b_i + 3c_i) \]

Summing across all sites, we have

\begin{align*}
    \cost(\bar{\ell}) &=\sum_{ i \in [n]}\cost_i(\bar{\ell}) \\
    &\leq \sum_{ i\in [n]}(1+\tau)\cost_i(\textsc{OPT}) + |\gamma_i^{\text{OPT}}- \hat{\gamma}_i | N_i p  + p (3b_i + 3c_i)\\
    &=  \sum_{ i\in [n]}(1+\tau)\cost_i(\textsc{OPT}) + \sum_{ i\in [n]} |\gamma_i^{\text{OPT}}- \hat{\gamma}_i | N_i p  + \sum_{ i\in [n]} p (3b_i + 3c_i) \\
    & \leq (1+\tau)\cost(\textsc{OPT}) +3\eta p  + \sum_{ i\in [n]} p (3b_i + 3c_i) \tag{by \cref{lem:smoothness-helper3}}
\end{align*}

Suppose $s^{\mathrm{end}} = 0$. Define
\[
i' = \begin{cases}
n & \text{if $\sum_{i=1}^n \hat{D}_i < \hat{s}$}\\
\hat{i}^\star - 1 & \text{if $\gamma_{\hat{i}^\star} < \min\{1, \frac{\lambda p c_{\hat{i}^\star}}{w_{\hat{i}^\star}} \}$}\\
\hat{i}^\star & \text{otherwise}
\end{cases}
\]
\begin{enumerate}
    \item For $i \in \{1,\dots, i'\}$, we have $\gamma_i \leq \hat{\gamma}_i$ by \cref{lem:smoothness-helper2}. By item 1 of Lemma 4, we have

    \begin{align*}
        \mathrm{transport}_i(\bar{\ell}) &\leq \gamma_iD_i\frac{w_i}{c_i}+ p(b_i+2c_i) \tag{item 2 of \cref{lem:structural}}\\
        &\leq \gamma_i (N_i + b_i)\frac{w_i}{c_i}+ p(b_i+2c_i) \tag{as $N_i + b_i = c_i$}\\
        &\leq  \gamma_i N_i\frac{w_i}{c_i}+ p(2b_i+2c_i) \tag{as $\frac{w_i}{c_i}\leq p$}\\
        &\leq  \hat{\gamma}_i N_i\frac{w_i}{c_i}+ p(2b_i+2c_i) \tag{as $\gamma_i\leq \hat{\gamma}_i$}
    \end{align*}
    Meanwhile, $   \mathrm{transport}_i(\bar{\textsc{OPT}}) = \gamma_i^{\textsc{OPT}}  N_i \frac{w_i}{c_i}$.  Thus, 
    \begin{align*}
        \Delta(\mathrm{transport}_i) &=      \mathrm{transport}_i(\bar{\ell})  - \mathrm{transport}_i(\bar{\textsc{OPT}}) \\
        &\leq (\gamma_i^{\textsc{OPT}}   - \hat{\gamma}_i) N_i \frac{w_i}{c_i} + 2p(b_i+c_i) \\
        &\leq  |\gamma_i^{\textsc{OPT}}   - \hat{\gamma}_i| N_i \frac{w_i}{c_i}  + 2p(b_i+c_i) \tag{as $N_i \frac{w_i}{c_i}) \geq 0$}\\
        &\leq  |\gamma_i^{\textsc{OPT}}   - \hat{\gamma}_i| N_i p  + 2p(b_i+c_i) \tag{as $\frac{w_i}{c_i}\leq p$}\\
    \end{align*}

    \item For $i \in \{i'+1,\dots,n\}$, we have $\gamma_i = \min(1,\frac{\lambda pc_i}{w_i}) \leq \frac{\lambda pc_i}{w_i}$. By item 2 of \cref{lem:structural}, we have 
    $\Delta(\mathrm{transport}_i)
    \leq \lambda \cdot \mathrm{penalty}_i(\OPT) + 2p (b_i + c_i)$.
\end{enumerate}

In either case, as $\lambda >0$ and the term $ |\gamma_i^{\textsc{OPT}}   - \hat{\gamma}_i| N_i p $ is non-negative, we have 

\begin{equation}\label{eq:err}
    \Delta(\mathrm{transport}_i)
    \leq \lambda \cdot \mathrm{penalty}_i(\OPT) +|\gamma_i^{\textsc{OPT}}   - \hat{\gamma}_i| N_i p  + 2p (b_i + c_i)
\end{equation}

Hence, 

\begin{align*}
    \cost(\bar{\ell}) &= \cost(\textsc{OPT}) +  \Delta(\mathrm{transport}) +  \Delta(\mathrm{penalty}) \\
    & = \cost(\textsc{OPT}) +  \Delta(\mathrm{transport}) +  \sum_{i \in [n]}p(b_i+c_i) \tag{by item 3 of \cref{lem:structural}}\\
    &=  \cost(\textsc{OPT}) +  \sum_{i \in [n]} \Delta(\mathrm{transport})_i +  \sum_{i \in [n]} p(b_i+c_i) \\
    &\leq   \cost(\textsc{OPT}) +  \sum_{i \in [n]} \lambda \cdot \mathrm{penalty}_i(\OPT) +|\gamma_i^{\textsc{OPT}}   - \hat{\gamma}_i| N_i p  + 2p (b_i + c_i) +  \sum_{i \in [n]} p(b_i+c_i) \tag{by \cref{eq:err}}\\
    &\leq  \cost(\textsc{OPT}) +  \lambda \cdot \mathrm{penalty}(\OPT) +  \sum_{i \in [n]} |\gamma_i^{\textsc{OPT}}   - \hat{\gamma}_i| N_i p   +  \sum_{i \in [n]} 3p(b_i+c_i) \\
    &\leq (1+\lambda)  \cost(\textsc{OPT}) +  \sum_{i \in [n]} |\gamma_i^{\textsc{OPT}}   - \hat{\gamma}_i| N_i p   +  \sum_{i \in [n]} 3p(b_i+c_i) \tag{as $\mathrm{penalty}(\OPT)  \leq \cost(\textsc{OPT})$}\\
    &\leq (1+\lambda)  \cost(\textsc{OPT}) +  3\eta p   +  \sum_{i \in [n]} 3p(b_i+c_i) \tag{by \cref{lem:smoothness-helper3}}
\end{align*}
\end{proof}

\learningaugmented*
\begin{proof}
Combine the conclusions of \cref{lem:learning-augmented-robustness} and \cref{lem:learning-augmented-smoothness}.
\end{proof}

\paretopoint*
\begin{proof}
Let \(\tau=\lambda\eps\). Since \(0<\eps<1\), we have
\(0<\tau<\lambda\le 1/3\). Define
\(R_\lambda=1+\frac{(1-\lambda)^2}{4\lambda}
=\frac{(1+\lambda)^2}{4\lambda}\) and
\(R_\tau=1+\frac{(1-\tau)^2}{4\tau}
=\frac{(1+\tau)^2}{4\tau}\). Since \(\tau<\lambda\le 1/3\), we have
\(R_\tau>R_\lambda\). Let \(\Delta=R_\tau-R_\lambda>0\).

Suppose, for contradiction, that there exists a possibly randomized online
algorithm \(\ALG\) satisfying both guarantees. Let \(C>0\) be large enough to
dominate the additive terms in both guarantees on the instances constructed
below, so the additive term is at most \(C\sum_{i\in[n]}p(b_i+c_i)\).

\textbf{\texttt{OSSA} instance parameters.}
Let \(n=2\), \(b_1=b_2=c_1=c_2=1\), \(w_1=0\), and
\(w_2=\frac{2\tau}{1+\tau}p\). Then
\(\sum_{i=1}^2 p(b_i+c_i)=4p\). Choose \(K\) large enough so that
\[
K>
\frac{C(1+\tau)}{2\tau\Delta}
\left(4+\frac{2(1-\tau)}{\tau}\right).
\]
The predictions are fixed as \(\hat{s}=K\) and
\(\hat{D}_1=\hat{D}_2=K+1\).

As a reminder, superscripts denote time steps, not powers. For instance,
\(d_i^2\) is the demand arriving at site \(i\) at time step \(2\).

\textbf{Accurate-prediction instance.}
First consider the instance where the prediction is perfect. At time step \(1\),
let \(d_1^1=d_2^1=1\), which consumes the initial stock at both sites. For
\(t=2,\ldots,K+1\), let \(d_1^t=0\) and \(d_2^t=1\). For
\(t=K+2,\ldots,2K+1\), let \(d_1^t=1\) and \(d_2^t=0\). Thus \(s=K+2\) and
\(D_1=D_2=K+1\), so \(\eta=0\).

\textbf{Lower bounding \(\ALG\) on the accurate-prediction instance.}
Let \(X\) be the amount of supply sent by \(\ALG\) to site \(2\) during the block
\(t=2,\ldots,K+1\). Since \(\ALG\) may be randomized, \(X\) is a random variable.

On this instance, \(\OPT\) sends all \(K\) remaining units of supply to site
\(1\). Since \(w_1=0\), this has no transport cost, and \(\OPT\) pays penalty
\(Kp\) for the unmet demand at site \(2\). Hence \(\cost(\OPT)=Kp\).

If \(\ALG\) sends \(X\) units to site \(2\), then it saves \(Xp\) penalty at
site \(2\), but loses exactly \(Xp\) at site \(1\), since those units are no
longer available for site \(1\). These penalty terms cancel. The only additional
cost is transport cost \(Xw_2=X\frac{2\tau}{1+\tau}p\). Therefore,
\[
\cost(\ALG)\ge Kp+X\frac{2\tau}{1+\tau}p.
\]
\paragraph{Upper bounding $\E[x]$ by enforcing consisteny}
By the \(\eta=0\) consistency guarantee,
\[
\E[\cost(\ALG)]\le (1+\tau)Kp+4Cp.
\]
Combining the two inequalities gives
\[
Kp+\E[X]\frac{2\tau}{1+\tau}p\le (1+\tau)Kp+4Cp,
\]
and hence
\[
\E[X]\le \frac{1+\tau}{2}K+\frac{2C(1+\tau)}{\tau}.
\]

\textbf{Inaccurate-prediction instance.}
Now consider a second instance with the same prediction \(\hat{s}=K+2\) and
\(\hat{D}_1=\hat{D}_2=K+1\). The prefix is identical: at time step \(1\),
\(d_1^1=d_2^1=1\), and for \(t=2,\ldots,K+1\), \(d_1^t=0\) and \(d_2^t=1\).
However, there is no later demand at site \(1\). Thus the true demands are
\(D_1=1\) and \(D_2=K+1\), while \(s=K+2\), so the prediction is inaccurate.

Since the two instances have the same prediction and the same prefix up to the
end of the site-\(2\) block, \(\ALG\) has the same distribution over its actions
on this prefix. Therefore the same random variable \(X\) describes the amount
sent to site \(2\) during the first block.

\textbf{Lower bounding \(\ALG\) on the inaccurate-prediction instance.}
On this instance, \(\OPT\) sends all \(K\) remaining units of supply to site
\(2\). Hence \(\cost(\OPT)=Kw_2=\frac{2\tau}{1+\tau}Kp\). By contrast,
\(\ALG\) sends only \(X\) units to site \(2\), leaving at least \(K-X\) units of
site-\(2\) demand unmet. Therefore,
\[
\cost(\ALG)\ge (K-X)p+X\frac{2\tau}{1+\tau}p.
\]
Taking expectations and using the bound on \(\E[X]\), we get
\begin{align*}
\E[\cost(\ALG)]
&\ge Kp-\E[X]\left(1-\frac{2\tau}{1+\tau}\right)p \\
&= Kp-\E[X]\frac{1-\tau}{1+\tau}p \\
&\ge Kp-
\left(\frac{1+\tau}{2}K+\frac{2C(1+\tau)}{\tau}\right)
\frac{1-\tau}{1+\tau}p \\
&= \frac{1+\tau}{2}Kp-\frac{2C(1-\tau)}{\tau}p.
\end{align*}

\textbf{Contradicting robustness.}
The robustness guarantee would require
\(\E[\cost(\ALG)]\le R_\lambda\cost(\OPT)+4Cp\). However,
\begin{align*}
\E[\cost(\ALG)]-R_\lambda\cost(\OPT)
&\ge
\left(
\frac{1+\tau}{2}
-
R_\lambda\frac{2\tau}{1+\tau}
\right)Kp
-
\frac{2C(1-\tau)}{\tau}p \\
&=
\frac{2\tau}{1+\tau}(R_\tau-R_\lambda)Kp
-
\frac{2C(1-\tau)}{\tau}p \\
&=
\frac{2\tau\Delta}{1+\tau}Kp
-
\frac{2C(1-\tau)}{\tau}p.
\end{align*}
By the choice of \(K\),
\[
\frac{2\tau\Delta}{1+\tau}K>
4C+\frac{2C(1-\tau)}{\tau}.
\]
Therefore \(\E[\cost(\ALG)]-R_\lambda\cost(\OPT)>4Cp\), equivalently
\(\E[\cost(\ALG)]>R_\lambda\cost(\OPT)+4Cp\), contradicting robustness.

Hence no possibly randomized online algorithm can simultaneously satisfy both
guarantees.
\end{proof}
\section{Experimental details}
\label{sec:appendix-experiments}

Additional experimental results are given in \cref{fig:synthetic-experiments} and \cref{fig:taxi-experiments}.
All experiments are run locally on a 2024 MacBook Pro with M4 chip; no GPU compute is required.
Each set of experiments takes at most an hour to run.
Our source code is available on Github.\footnote{\url{https://github.com/cxjdavin/online-allocation-with-unknown-shared-supply}}

\subsection{Policies compared, evaluation metrics, and qualitative takeaways}

\paragraph{Policies compared.}

\begin{enumerate}
    \item \textbf{\OPT}: The optimal offline algorithm described in \cref{sec:online}. 
    
    \item \textbf{\GPA}: Our $\bar{\gamma}\mathrm{PA}$ algorithm from \cref{alg:proportional-allocation}.
    
    \item \textbf{AlwaysFill}: An aggressive replenishment policy. After each round of demand, this policy always requests enough supply to restore each site's inventory to at least $b_i$, rounded up to $c_i$ to ``extract'' maximum utility from each fixed transportation cost $w_i$ incurred.
    
    \item \textbf{$\rho$-Greedy}: This policy is additionally given the actual $\rho = s / \sum_i D_i$ as input, something that an online algorithm should \emph{not} have.
    Given $\rho$, this policy attempts to maintain cumulative allocation proportional to $\rho$ times the cumulative observed demand. This mimics $R^t$ from \GPA{} without the $r_i^t < b_i$ trigger since it knows $\rho$ upfront.
    
    \item \textbf{$\rho$-CoinFlip}: This policy is additionally given the actual $\rho = s / \sum_i D_i$ as input, something that an online algorithm should \emph{not} have.
    This policy is another variant on how one might try to exploit the knowledge of $\rho$.
    For each site $i \in [n]$, this policy independently resupplies the site up to $b_i$ with probability $\rho$.
    
    \item \textbf{Backlog}: This policy is inspired by backlogging strategies from the \texttt{OWMR} and \texttt{JRP} literature. For each site $i \in [n]$, this policy tracks accumulated unmet demand since the last successful resupply and requests enough supply to restore the inventory to at least $b_i$ once the accumulated unmet demand exceeds $w_i$.
    
    \item \textbf{$\LAGPA(\lambda)$}: We run \cref{alg:proportional-allocation} using thresholds $\bar{\gamma}$ produced by \cref{alg:prediction} when given distrust parameter $\lambda \in (0, 1/3]$ and the predictions $\hat{s}$ and $\{ \hat{D}_i \}_{i \in [n]}$. For each $\lambda$ value tested, we provide the policy either with perfect advice ($\eta = 0$) or bad advice ($\eta = 10s$). The error in the advice is generated by adding scaled random noise such that that $\eta \approx 10s$.
\end{enumerate}

\paragraph{Evaluation metrics.}
For each set of experiments, we record the cost incurred (\cref{eq:main-objective}) as well as compare the ratios between $\cost(\ALG) / \cost(\OPT)$ for each policy.
For visual clarity, we separate the ratio comparisons into two sets of plots.
The first set compares \GPA{} with other baselines, while the second set compares \GPA{} with $\LAGPA(\lambda)$.

\paragraph{Qualitative takeaways.}

Our experimental results are given in \cref{fig:synthetic-experiments} and \cref{fig:taxi-experiments}.
On the X-axis, we measure the fraction of demand ($\rho$) that is available as supply.
On the Y-axis, we measure $\cost(\ALG)$ and $\cost(\ALG)/\cost(\OPT)$ as described earlier.
Across all settings, we see that \GPA{} outperforms all other methods, especially when the global supply is scarce: it consistently has a lower $\cost(\ALG)/\cost(\OPT)$ ratio than the others in the middle plot.
Further details on how the \OSSA{} instances are generated are given in \cref{sec:appendix-synthetic} and \cref{sec:appendix-taxi}.

Let us now interpret and discuss the plots in further detail.
On the left plot, we see that $\cost(\OPT)$ decreases at a decreasing rate.
This is because $w_i \leq p c_i$ and \OPT{} tries to satisfy demands at sites with lower $w_i / c_i$ first.
By construction, the AlwaysFill, $\rho$-Greedy, and $\rho$-CoinFlip baselines match $\OPT$ when $\rho = 1$ so their ratio drops to $1$ for large $\rho \geq 1$.
Meanwhile, \GPA{} decreases roughly linearly and then plateaus into a horizontal line before we hit $\rho = 1$, i.e., it does not fully exploit the all the available supply even if it is available.
This is because of the $\gamma_i$ threshold parameter which tries ensure that it does robust rationing in the face of supply uncertainty.
This inflection of $\cost(\GPA)$ also explains the dip and eventual rising of $\cost(\GPA)/\cost(\OPT)$ in the middle plot.
Finally, there are two trends to observe in the rightmost plot.
First, we see that the curves $\LAGPA(\lambda)$ increases as $\lambda$ increases for $\eta = 0$ (bluish curves), and increases as $\lambda$ increases for $\eta = 10s$ (reddish curves).
Second, these curves diverge from the original \GPA{} curves as $\lambda$ transitions from $1/3$ towards $0$.
These trends empirically validate \cref{thm:learning-augmented}.

\begin{figure}[htb]
    \centering
    \includegraphics[width=\linewidth]{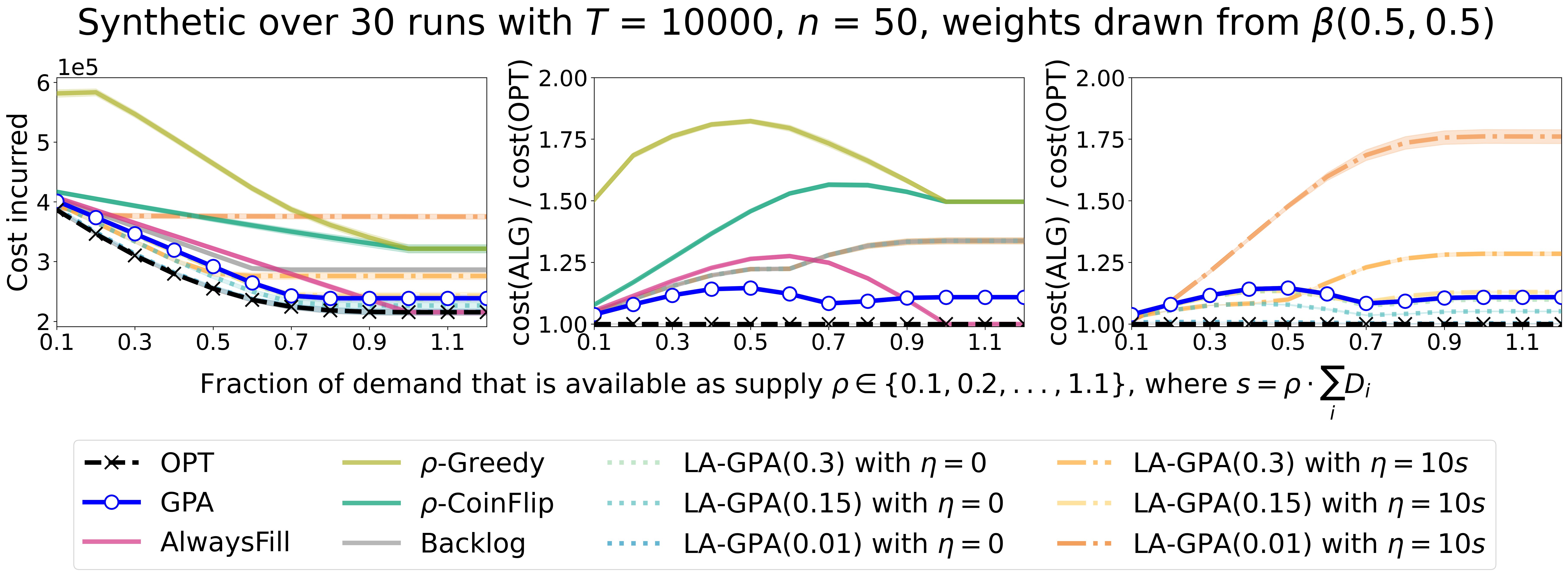}
    \includegraphics[width=\linewidth]{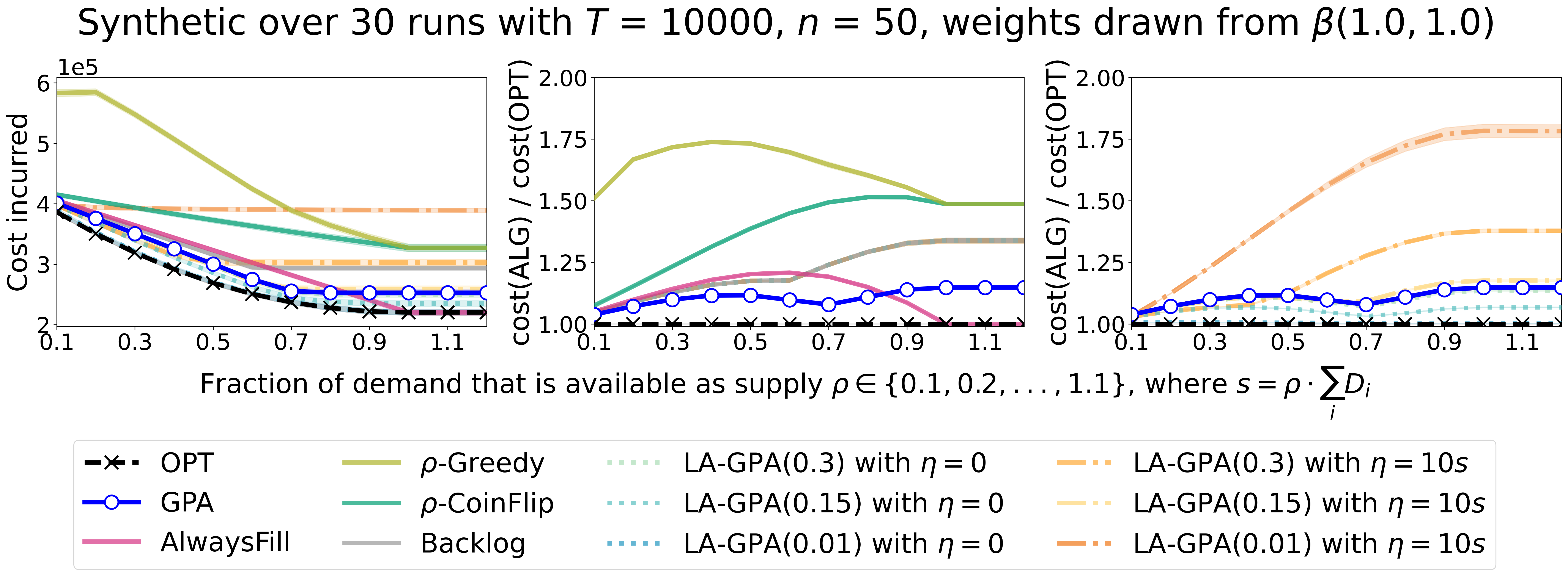}
    \includegraphics[width=\linewidth]{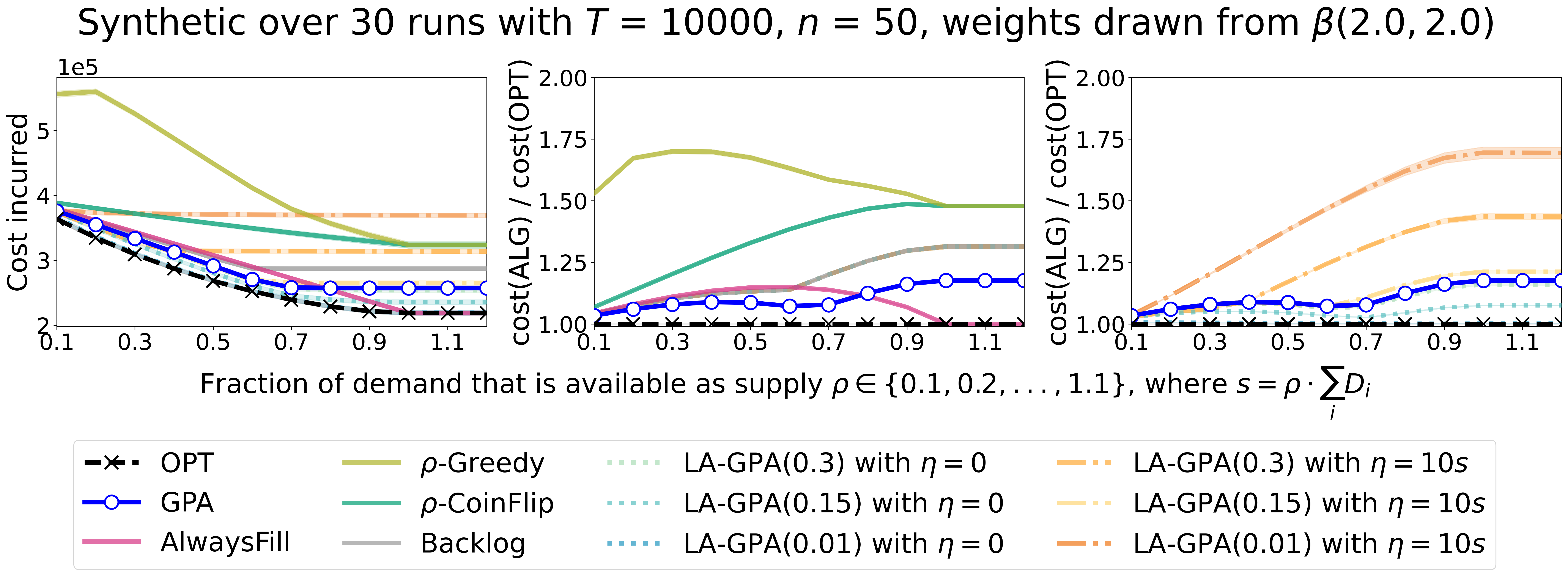}
    \caption{Synthetic experiments over $30$ runs on an \OSSA{} instance with $n = 50$ sites and $T = 10,000$ time steps. To simulate different distributions of weights, each set of experiment uses weights drawn independently from $\beta(x,x)$ distribution for different $x \in \{0.5, 1.0, 2.0\}$ parameters. See \cref{sec:appendix-synthetic} for further details.}
    \label{fig:synthetic-experiments}
\end{figure}

\begin{figure}[htb]
    \centering
    \includegraphics[width=\linewidth]{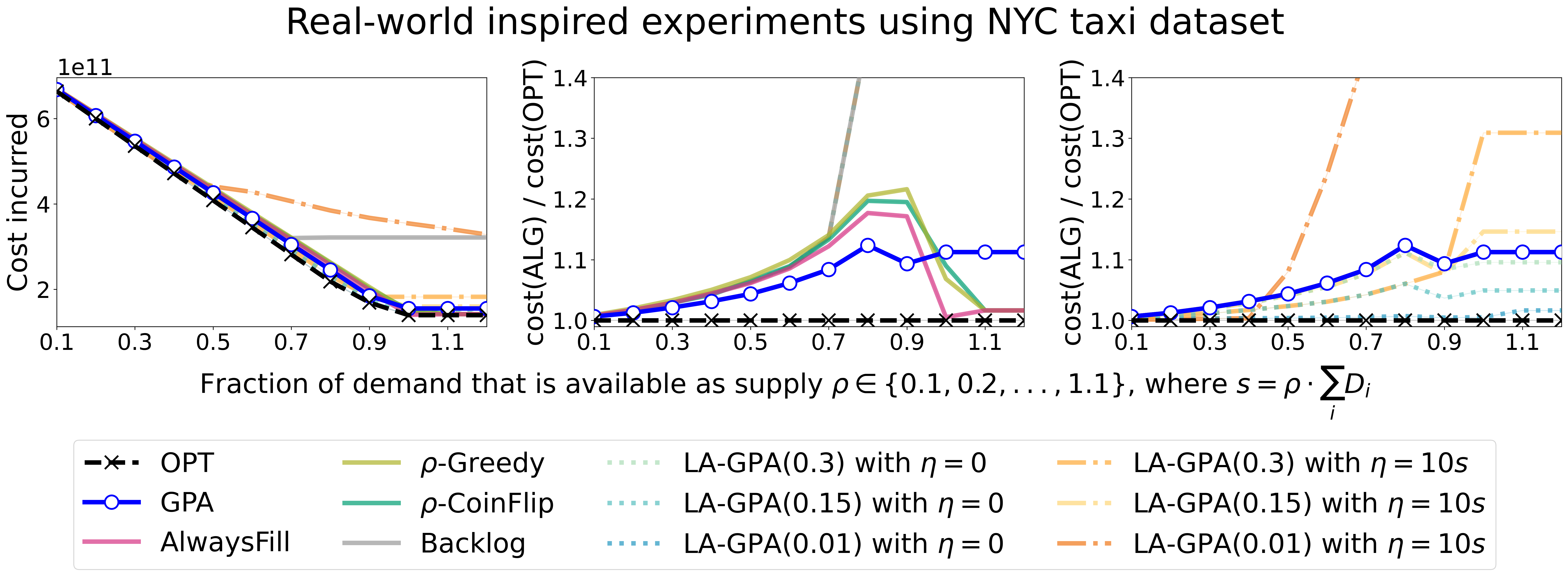}
    \caption{Real-world inspired experiment by repurposing the NYC taxi dataset See \cref{sec:appendix-taxi} for further details.}
    \label{fig:taxi-experiments}
\end{figure}

\subsection{Further details for synthetic experiments}
\label{sec:appendix-synthetic}

We first evaluate the policies on synthetic \OSSA{} instances. Each instance consists of $n=50$ sites over a horizon of $T=10000$ time periods. For each site $i$, we sample the transportation weight $w_i \sim \mathrm{Beta}(\alpha,\beta)$ and set the shipment capacity $c_i=10$. We sample the one-step demand bound $b_i \sim \mathrm{Poisson}(10)$, and generate demand by
\[
d_i^t=\min\{b_i,X_i^t\}, \qquad X_i^t\sim \mathrm{Poisson}(b_i).
\]
Thus, the demand process is random and site-dependent while satisfying the model requirement $d_i^t\le b_i$.

We set
\[
p=10^{-6}+\max_i \frac{w_i}{c_i},
\]
which ensures $w_i\le pc_i$ for all sites. We vary the supply level over $\rho\in\{0.1,0.2,\ldots,1.2\}$ by setting
\[
s=\left\lfloor \rho\left(\sum_iD_i-\sum_i b_i\right)\right\rfloor,
\]
where $D_i=\sum_{t=1}^T d_i^t$ is the total demand at site $i$. We repeat each setting over $30$ independent runs and test three transportation-weight distributions, $(\alpha,\beta)\in\{(0.5,0.5),(1,1),(2,2)\}$. 

\subsection{Further details for real-world inspired experiments}
\label{sec:appendix-taxi}

For the real-world dataset, we use New York City yellow taxi trip records and taxi-zone geographic data from the New York City Taxi and Limousine Commission (TLC) to simulate spatially distributed demand over time and construct site-dependent transportation weights $w_i$ for each site $i$ \cite{nyc_tlc_trip_data}. Specifically, we combine yellow taxi trip records from January to March 2026 and aggregate pickups into daily demand over $T=90$ time periods. After filtering to this period, the dataset contains $11{,}077{,}196$ pickup records across $262$ observed pickup taxi zones.

We exploit the geographic information in the NYC yellow taxi dataset, including the TLC taxi-zone shapefile and trip records. Each trip record reports a pickup taxi-zone identifier rather than an exact pickup coordinate, so we treat the TLC taxi zone as the finest observed geographic unit for each pickup. Each taxi zone is represented by the centroid of its polygon in a projected NYC coordinate system. We then group taxi zones into local sites using the borough and TLC service-zone labels. In the raw TLC metadata, borough labels include Manhattan, Queens, Brooklyn, Bronx, Staten Island, and EWR, while service-zone labels include Yellow Zone, Boro Zone, Airports, and EWR. Combining borough and service-zone information gives a small number of geographically meaningful local regions, which serve as our local \OSSA{} sites, such as \texttt{Queens---Airports}.

For each local site, we compute a demand-weighted centroid of its constituent taxi-zone centroids. Specifically, suppose site $i$ contains taxi zones $j \in S_i$, where $S_i$ is the set of taxi zones assigned to site $i$. Let $q_j$ be the total number of pickups in taxi zone $j$ over the three-month period, and let $(x_j,y_j)$ be the centroid of taxi zone $j$. Then the site centroid is
\[
\bar{x}_i = \frac{\sum_{j \in S_i} q_j x_j}{\sum_{j \in S_i} q_j},
\qquad
\bar{y}_i = \frac{\sum_{j \in S_i} q_j y_j}{\sum_{j \in S_i} q_j}.
\]
Since Manhattan Yellow Zone accounts for a large fraction of total pickups, we further split it into northern and southern subregions using the demand-weighted median of taxi-zone centroid $y$-coordinates, producing \texttt{Manhattan---Yellow Zone South} and \texttt{Manhattan---Yellow Zone North}. This produces $n=9$ local sites with centroid coordinates, which we use as proxies for local clinics in the \OSSA{} instance. We then place the central warehouse at the demand-weighted centroid of all local sites and define $w_i$ as the projected Euclidean distance from the warehouse centroid to the centroid of site $i$. The number of daily pickups assigned to site $i$ is used as the demand $d_i^t$ at each time period $t \in [T]$.

This dataset construction is meaningful because it produces a large-scale, geographically structured demand sequence with transportation weights $w_i$ derived from real NYC taxi-zone geometry. In this real-world instance, \GPA{} achieves strong performance across different supply levels $\rho$. In our implementation, the total supply is set as
$s = \left\lfloor \rho\left(\sum_i D_i - \sum_i b_i\right)\right\rfloor,$
where $D_i$ is the total demand at site $i$ and $b_i$ is the maximum daily demand at site $i$. The results show that \GPA{} outperforms strong supply-aware baselines across most supply levels, demonstrating the effectiveness of our algorithm on spatially imbalanced real-world demand. Furthermore, we evaluate the learning-augmented algorithm across different distrust hyperparameters $\lambda \in (0,1/3]$ and advice qualities $\eta \ge 0$.





\end{document}